\theoremstyle{plain}
\newtheorem{theorem}{Theorem}
\newtheorem{lemma}{Lemma}
\theoremstyle{definition}
\newtheorem{definition}{Definition}
\newtheorem{assumption}{Assumption}
\theoremstyle{remark}
\newtheorem{remark}{Remark}
\title{Learning General Parameterized Policies for Infinite Horizon Average Reward Constrained MDPs via Primal-Dual Policy Gradient Algorithm}
\author{
  Qinbo Bai\\
  Purdue University\\
  West Lafayette, IN 47906 \\
  \texttt{bai113@purdue.edu} \\
  \And
  Washim Uddin Mondal \\
  Indian Institute of Technology Kanpur\\
  Kanpur,  UP, India 208016\\
  \texttt{wmondal@iitk.ac.in} \\
  \And
  Vaneet Aggarwal \\
  Purdue University\\
  West Lafayette, IN 47906 \\
  \texttt{vaneet@purdue.edu} \\
}
\begin{document}
\maketitle

\begin{abstract}
  This paper explores the realm of infinite horizon average reward Constrained Markov Decision Processes (CMDPs). To the best of our knowledge, this work is the first to delve into the regret and constraint violation analysis of average reward CMDPs with a general policy parametrization. To address this challenge, we propose a primal dual-based policy gradient algorithm that adeptly manages the constraints while ensuring a low regret guarantee toward achieving a global optimal policy. In particular, our proposed algorithm achieves $\tilde{\mathcal{O}}({T}^{4/5})$ objective regret and $\tilde{\mathcal{O}}({T}^{4/5})$ constraint violation bounds.
\end{abstract}

\section{Introduction}

The framework of Reinforcement Learning (RL) is concerned with a class of problems where an agent learns to yield the maximum cumulative reward in an unknown environment via repeated interaction. RL finds applications in diverse areas, such as wireless communication, transportation, and epidemic control \cite{liu2021cmix,al2019deeppool, ling2023cooperating}. RL problems are mainly categorized into three setups: episodic, infinite horizon discounted reward, and infinite horizon average reward. Among them, the infinite horizon average reward setup is particularly significant for real-world applications. It aligns with most of the practical scenarios and captures their long-term goals. Some applications in real life require the learning procedure to respect the boundaries of certain constraints. In an epidemic control setup, for example, vaccination policies must take the supply shortage (budget constraint) into account. Such restrictive decision-making routines are described by constrained Markov Decision Processes (CMDP) \cite{bai2023achieving, agarwal2022concave,chen2022learning}. Existing papers on CMDPs utilize either a tabular or a linear MDP structure. This work provides the first algorithm for an infinite horizon average reward CMDP with general parametrization and proves its sub-linear regret and constraint violation bounds. 

There are two primary ways to solve a CMDP problem in the infinite horizon average reward setting. The first one, known as the model-based approach, involves constructing estimates of the transition probabilities of the underlying CMDP, which are subsequently utilized to derive policies \cite{chen2022learning, agarwal2022regret,agarwal2022concave}. The caveat of this approach is the large memory requirement to store the estimated parameters, which effectively curtails its applicability to CMDPs with large state spaces. The alternative strategy, known as the model-free approach, either directly estimates the policy function or maintains an estimate of the $Q$ function, which is subsequently used for policy generation \cite{wei2022provably}. Model-free algorithms typically demand lower memory and computational resources than their model-based counterparts. Although the CMDP has been solved in a model-free manner in the tabular \cite{wei2022provably} and linear \cite{ghosh2022achieving} setups, its exploration with the general parameterization is still open and is the goal of this paper. 
 
\begin{table*}[t]
    \centering
    \resizebox{\textwidth}{!}
    {
    \begin{tabular}{|c|c|c|c|c|}
        \hline
        Algorithm & Regret & Violation & Model-free & Setting\\
	   \hline
         & & & &\\[-0.32cm]
        Algorithm 1 in \cite{chen2022learning} & $\tilde{\mathcal{O}}(\sqrt{T})$  & $\tilde{\mathcal{O}}(\sqrt{T})$  & No & Tabular\\
        \hline
        & & & &\\[-0.32cm]
        Algorithm 2 in \cite{chen2022learning} & $\tilde{\mathcal{O}}(T^{2/3})$ & $\tilde{\mathcal{O}}(T^{2/3})$ & No & Tabular\\
        \hline
         & & & &\\[-0.32cm]
        UC-CURL and PS-CURL \cite{agarwal2022concave} & $\tilde{\mathcal{O}}(\sqrt{T})$ & $0$ & No & Tabular\\
	\hline
         & & & &\\[-0.32cm]
        Algorithm 2 in \cite{ghosh2022achieving} & $\tilde{\mathcal{O}}((dT)^{3/4})$ & $\tilde{\mathcal{O}}((dT)^{3/4})$ & No & Linear MDP\\
	\hline
         & & & &\\[-0.32cm]
        Algorithm 3 in \cite{ghosh2022achieving} & $\tilde{\mathcal{O}}(\sqrt{T})$ & $\tilde{\mathcal{O}}(\sqrt{T})$ & No & Linear MDP\\
	\hline
         & & & &\\[-0.32cm]
        Triple-QA \cite{wei2022provably} & $\tilde{\mathcal{O}}(T^{5/6})$ & $0$ & Yes & Tabular\\
	\hline
         & & & &\\[-0.32cm]
        This paper & $\tilde{\mathcal{O}}(T^{\frac{4}{5}})$ & $\tilde{\mathcal{O}}(T^{\frac{4}{5}})$ & Yes & General Parameterization\\
	\hline
	\end{tabular}}
  \vspace{-.1in}
        \caption{\small This table summarizes the different model-based and mode-free state-of-the-art algorithms available in the literature for average reward CMDPs. We note that our proposed algorithm is the first to analyze the regret and constraint violation for average reward CMDP with general parametrization. Here, the parameter $d$ refers to the dimension of the feature map for linear MDPs.}
	\label{table2}
 \vspace{-.3in}
\end{table*}

General parameterization indexes the policies by finite-dimensional parameters (e.g., weights of neural networks) to accommodate large state spaces. The learning is manifested by updating these parameters using policy gradient (PG)-type algorithms. Note that PG algorithms are primarily studied in discounted reward setups. For example, \cite{agarwal2021theory} characterizes the sample complexities of the PG and the Natural PG (NPG) algorithms with softmax and direct parameterization. Similar results for general parameterization are obtained by \cite{ mondal2023improved, mondal2024sample}. 
The regret analysis of a PG algorithm with the general parameterization has been recently performed for an infinite horizon average reward MDP without constraints  \cite{bai2023regret}. Similar regret and constraint violation analysis for the average reward CMDP is still missing in the literature. In this paper, we bridge this gap.

{\bf Challenges and Contribution:} We propose a PG-based algorithm with general parameterized policies for the average reward CMDP and establish its sublinear regret and constraint violation bounds. In particular, assuming the underlying CMDP to be ergodic, we demonstrate that our PG algorithm achieves an average optimality rate of $\tilde{\mathcal{O}}(T^{-\frac{1}{5}})$ and average constraint violation rate of $\tilde{\mathcal{O}}(T^{-\frac{1}{5}})$. Invoking this convergence result, we establish that our algorithm achieves regret and constraint violation bounds of $\tilde{\mathcal{O}}(T^{\frac{4}{5}})$. Apart from providing the first sublinear regret guarantee for the average reward CMDP with general parameterization, our work also improves the state-of-the-art regret guarantee, $\tilde{\mathcal{O}}(T^{5/6})$ in the model-free tabular setup \cite{wei2022provably}.

Despite the availability of sample complexity analysis of PG algorithms with constraints in the discounted reward setup \cite{ding2020natural,bai2023achieving} and PG algorithms without constraint in average reward setup \cite{bai2023regret}, obtaining sublinear regret and constraint violation bounds for their average reward counterpart is challenging. 
\begin{itemize}[left=0pt, labelwidth=0pt, labelsep=0pt, itemindent=2\parindent, itemsep=0pt, parsep=0pt, topsep=0pt, partopsep=0pt]
    \item \cite{ding2020natural,bai2023achieving} solely needs an estimate of the value function $V$ while we additionally need the estimate of the gain function, $J$. 
    \item  \cite{ding2020natural,bai2023achieving} assume access to a simulator to generate unbiased value estimates. In contrast, our algorithm uses a sample trajectory of length $H$ to estimate the values and gains and does not assume the availability of a simulator.
    \item  The first-order convergence analysis (Lemma 6) differs from that in \cite{bai2023regret}. Note that both of these papers use an ascent-like inequality. In \cite{bai2023regret}, this bounds the term $J(\theta_{k+1})-J(\theta_k)$. The final result is obtained by calculating a sum over $k$ which cancels the intermediate terms and leaves us with $J(\theta_K)-J(\theta_1)$. We would like to emphasize that the cancellation of the intermediate terms is crucial to establishing the result. However, a similar effort in our case only leads to a bound of $J_{\mathrm{L}}(\theta_{k+1}, \lambda_k)-J_{\mathrm{L}}(\theta_k, \lambda_k)$. Note that directly performing a sum over this difference does not lead to the cancellation of intermediate terms. We had to take a different route and apply the bounds of the Lagrange multipliers and the estimate of the constraint function to achieve that goal.
    \item After solving the problems mentioned above, we prove $\tilde{\mathcal{O}}(T^{-\frac{1}{5}})$ convergence rate of the Lagrange function. Unfortunately, the strong duality property, which is central to proving convergence results of CMDPs for tabular and softmax policies, does not hold under the general parameterization. As a result, the convergence result for the dual problem does not automatically translate to that for the primal problem, which is a main difference from \cite{bai2023regret}. We overcome this barrier by introducing a novel constraint violation analysis and a series of intermediate results (Lemma \ref{lem.boundness}-\ref{lem.constraint}) that help disentangle the regret and constraint violation rates from the Lagrange convergence. It is important to mention that although the techniques applied are inspired by the \cite{ding2020natural}, those techniques cannot be directly adopted for average reward MDPs. This is primarily because the estimate $\hat{J}_c(\theta_k)$ is biased in the average case. To the best of our knowledge, constraint violation analysis with a biased estimate of the cost value is not available in the literature and is performed for the first time in our paper.
    \item Due to the presence of the Lagrange multiplier, the convergence analysis of a CMDP is much more convoluted than its unconstrained counterpart. The learning rate of the Lagrange update, $\beta$, turns out to be pivotal in determining the growth rate of regret and constraint violation. Low values of $\beta$ push the regret down while simultaneously increasing the constraint violation. Finding the optimal value of $\beta$ that judiciously balances these two competing goals is one of the cornerstones of our analysis. 
\end{itemize}

{\bf Related work for unconstrained average reward RL: } In the absence of constraints, both model-based and model-free tabular setups have been widely studied for infinite horizon average reward MDPs. For example, the model-based algorithms proposed by \cite{agrawal2017optimistic, auer2008near} achieve the optimal regret bound of $\tilde{\mathcal{O}}(\sqrt{T})$. Similarly, the model-free algorithm proposed by \cite{wei2020model} for tabular MDP results in $\tilde{\mathcal{O}}(\sqrt{T})$ regret. Regret analysis for average reward MDP with general parametrization has been recently studied in \cite{bai2023regret}, where a regret bound of $\tilde{\mathcal{O}}({T}^{3/4})$ is derived. 

{\bf Related work for constrained RL:} The constrained reinforcement learning problem has been extensively studied both for infinite horizon discounted reward and episodic MDPs. For example, discounted reward CMDPs have been recently studied in the tabular setup \cite{bai2022achieving}, with both softmax \cite{ding2020natural,xu2021crpo}, and general policy parameterization \cite{ding2020natural,xu2021crpo,bai2023achieving, mondal2024sample}. Moreover, \cite{efroni2020exploration,qiu2020upper,germano2023best} investigated episodic CMDPs in the tabular setting. 

Recently, the infinite horizon average reward CMDPs have been investigated in model-based setups \cite{agarwal2022concave, chen2022learning, agarwal2022regret}, tabular model-free setting \cite{wei2022provably} and linear CMDP setting \cite{ghosh2022achieving}. For model-based CMDP setup, \cite{chen2022learning} proposed a model-based online mirror descent algorithm in the ergodic setting which achieves $\mathcal{\tilde{O}}(\sqrt{T})$ for regret and violation at the same time. \cite{agarwal2022regret} proposed algorithms based on the posterior sampling and the optimism principle that achieve $\mathcal{\tilde{O}}(\sqrt{T})$ regret with zero constraint violations in the ergodic setting. However, the above model-based algorithms cannot be extended to large state space. In the tabular model-free setup, the algorithm proposed by \cite{wei2022provably} achieves a regret of $\tilde{\mathcal{O}}({T}^{5/6})$ with zero constraint violations. Finally, in the linear CMDP setting, \cite{ghosh2022achieving} achieves $\tilde{\mathcal{O}}(\sqrt{T})$ regret bound with zero constraint violation. Note that the linear CMDP setting assumes that the transition probability has a certain linear structure with a known feature map which is not realistic. Table \ref{table2} summarizes all relevant works. Unfortunately, none of these papers study the infinite horizon average reward CMDPs with general parametrization which is the main focus of our article. 

Additionally, for the weakly communicating setting, \cite{chen2022learning} proposed a model-based algorithm achieving $\mathcal{\tilde{O}}(T^{2/3})$ for both regret and violation in tabular case. \cite{ghosh2022achieving} further extends such result to linear MDP setting with $\mathcal{\tilde{O}}(T^{3/4})$ regret and violation. In general, it is difficult to propose a model-free algorithm with provable guarantees for Constrained MDPs (CMDPs) without considering the ergodic model. \cite{chen2022learning} pointed out several extra challenges in Weakly communicating MDP compared to the ergodic case. For example, there is no uniform bound for the span of the value function for all stationary policies. It is also unclear how to estimate a policy’s bias function accurately without the estimated model, which is an important step for estimating the policy gradient.

\vspace{-.1in}
\section{Formulation}\label{sec:formulation}
\vspace{-.1in}

This paper analyzes an infinite-horizon average reward constrained Markov Decision Process (CMDP) denoted as $\mathcal{M}=(\mathcal{S},\mathcal{A}, r, c, P,\rho)$ where  $\mathcal{S}$ denotes the state space, $\mathcal{A}$ is the action space of size $A$, $r:\mathcal{S}\times\mathcal{A}\rightarrow [0,1]$ is the reward function, $c:\mathcal{S}\times\mathcal{A}\rightarrow [-1,1]$ is the constraint cost function, 
$P:\mathcal{S}\times\mathcal{A}\rightarrow \Delta^{|\mathcal{S}|}$ is the state transition function where $\Delta^{|\mathcal{S}|}$ denotes a probability simplex with dimension $|\mathcal{S}|$, and $\rho\in\Delta^{|\mathcal{S}|}$ is the initial distribution of states. A policy $\pi\in\Pi:\mathcal{S}\rightarrow \Delta^{A}$ maps the current state to an action distribution. The average reward and cost of a policy, $\pi$, is,
\begin{equation}
    \label{Eq_reward}
    J_{g,\rho}^{\pi}\triangleq \lim\limits_{T\rightarrow \infty}\frac{1}{T}\mathbf{E}\bigg[\sum_{t=0}^{T-1}g(s_t,a_t)\bigg|s_0\sim \rho, \pi\bigg]
\end{equation}
where $g=r, c$ for average reward and cost respectively. The expectation is calculated over the distribution of all sampled trajectories $\{(s_t, a_t)\}_{t=0}^{\infty}$ where $a_t\sim \pi(s_t)$, $s_{t+1}\sim P(\cdot|s_t, a_t)$, $\forall t\in\{0, 1, \cdots\}$. For notational convenience, we shall drop the dependence on $\rho$ whenever there is no confusion. Our goal is to maximize the average reward function while ensuring that the average cost is above a given threshold. Without loss of generality, we can mathematically write this problem as, 
\begin{align} 
    \label{eq:def_unparametrized_optimization}
    \begin{split}
        \max_{\pi\in\Pi} ~& J_r^{\pi} ~~
        \text{s.t.} ~ J_c^{\pi}\geq 0
    \end{split}
\end{align}
However, the above problem becomes difficult to handle when the underlying state space, $\mathcal{S}$ is large. Therefore, we consider a class of parametrized  policies, $\{\pi_\theta|\theta\in \Theta\}$ whose elements are indexed by a $\mathrm{d}$-dimensional parameter, $\theta\in\mathbb{R}^{\mathrm{d}}$ where $\mathrm{d}\ll|\mathcal{S}||\mathcal{A}|$. Thus, the original problem in Eq \eqref{eq:def_unparametrized_optimization} can be reformulated as the following parameterized problem.
\begin{align} 
    \label{eq:def_constrained_optimization}
    \begin{split}
        \max_{\theta\in\Theta} ~& J_r^{\pi_{\theta}} 
        ~~\text{s.t.} ~J_c^{\pi_{\theta}}\geq 0
    \end{split}
\end{align}
We denote $J_g^{\pi_\theta} = J_g(\theta)$, $g\in\{r, c\}$ for notational convenience. Let, $P^{\pi_{\theta}}:\mathcal{S}\rightarrow \Delta^{|\mathcal{S}|}$ be a transition function induced by $\pi_\theta$ and defined as, $P^{\pi_{\theta}}(s, s') = \sum_{a\in\mathcal{A}}P(s'|s,a)\pi_{\theta}(a|s)$, $\forall s, s'$. If $\mathcal{M}$ is such that for every policy $\pi$, the function, $P^{\pi}$ is irreducible and aperiodic, then $\mathcal{M}$ is called ergodic.
\begin{assumption}
    \label{ass_1}
    The CMDP $\mathcal{M}$ is ergodic.
\end{assumption}
Ergodicity is a common assumption in the literature \cite{pesquerel2022imed, gong2020duality}. If $\mathcal{M}$ is ergodic, then $\forall \theta$, there exists a unique stationary distribution, $d^{\pi_{\theta}}\in \Delta^{|\mathcal{S}|}$ given as follows.
\begin{align}
    d^{\pi_{\theta}}(s) = \lim_{T\rightarrow \infty}\dfrac{1}{T}\sum_{t=0}^{T-1} \mathrm{Pr}(s_t=s|s_0\sim \rho, \pi_{\theta})
\end{align}
Ergodicity implies that $d^{\pi_{\theta}}$ is independent of the initial distribution, $\rho$, and obeys $P^{\pi_{\theta}}d^{\pi_{\theta}}=d^{\pi_{\theta}}$. Hence, the average reward and cost functions can be expressed as,
\begin{align}
    \label{eq_r_pi_theta}
    J_g(\theta) = \mathbf{E}_{s\sim d^{\pi_{\theta}}, a\sim \pi_{\theta}(s)}[g(s, a)] = (d^{\pi_{\theta}})^T g^{\pi_{\theta}}
\end{align}
where $g^{\pi_{\theta}}(s) \triangleq \sum_{a\in\mathcal{A}}g(s, a)\pi_{\theta}(a|s), ~g\in\{r, c\}$. Note that the functions $J_g(\theta)$, $g\in\{r, c\}$ are also independent of the initial distribution, $\rho$. Furthermore, $\forall \theta$, there exist a function $Q_g^{\pi_{\theta}}: \mathcal{S}\times \mathcal{A}\rightarrow \mathbb{R}$ such that the following Bellman equation is satisfied $\forall (s, a)\in\mathcal{S}\times\mathcal{A}$.
\begin{equation}
    \label{eq_bellman}
    Q_g^{\pi_{\theta}}(s,a)=g(s,a)-J_g(\theta)+\mathbf{E}_{s'\sim P(\cdot|s, a)}\left[V_g^{\pi_{\theta}}(s')\right]
\end{equation}
where $g\in\{r, c\}$ and  $V_g^{\pi_{\theta}}:\mathcal{S}\rightarrow \mathbb{R}$ is given as
$    V_g^{\pi_{\theta}}(s) = \sum_{a\in\mathcal{A}}\pi_{\theta} (a|s)Q_g^{\pi_{\theta}}(s, a), ~\forall s\in\mathcal{S}$. 
Note that if $Q_g^{\pi_{\theta}}$ satisfies $(\ref{eq_bellman})$, then it is also satisfied by $Q_g^{\pi_{\theta}}+c$ for any arbitrary, $c$. To uniquely define the value functions, we assume that $\sum_{s\in\mathcal{S}}d^{\pi_{\theta}}(s)V_g^{\pi_{\theta}}(s)=0$. In this case, $V_g^{\pi_{\theta}}(s)$ is given by,
\begin{equation}
    \label{def_v_pi_theta_s}
    \begin{aligned}
        V_g^{\pi_{\theta}}(s) &= \sum_{t=0}^{\infty} \sum_{s'\in\mathcal{S}}\left[(P^{\pi_{\theta}})^t(s, s') - d^{\pi_{\theta}}(s')\right]g^{\pi_{\theta}}(s')= \sum_{t=0}^{\infty}\mathbf{E}\left[\left\{g(s_t,a_t)-J_g(\theta)\right\}\big\vert s_0=s\right]
    \end{aligned}
\end{equation}
where the expectation is computed over all $\pi_{\theta}$-induced trajectories. In a similar way, $\forall (s, a)$, one can uniquely define $Q_g^{\pi_{\theta}}(s, a)$, $g\in\{r, c\}$ as follows.
\begin{equation}
    \label{def_q_pi_theta_s}
    Q_g^{\pi_{\theta}}(s,a)=\sum_{t=0}^{\infty}\mathbf{E}\left[\big\{g(s_t,a_t)-J_g(\theta)\big\}\big\vert s_0=s,a_0=a\right]
\end{equation}
Moreover, the advantage function $A_g^{\pi_{\theta}}:\mathcal{S}\times \mathcal{A}\rightarrow \mathbb{R}$ is defined such that $A_g^{\pi_{\theta}}(s, a) \triangleq Q_g^{\pi_{\theta}}(s, a) - V_g^{\pi_{\theta}}(s)$, $\forall (s, a)$, $\forall g\in\{r, c\}$.
Assumption \ref{ass_1} also implies the existence of a finite mixing time. Specifically, for an ergodic MDP, $\mathcal{M}$, the mixing time is defined as follows.
\begin{definition}
    The mixing time, $t_{\mathrm{mix}}^\theta$, of the CMDP $\mathcal{M}$ for a parameterized policy, $\pi_\theta$, is defined as, $t_{\mathrm{mix}}^{\theta}\triangleq \min\left\lbrace t\geq 1\big| \norm{(P^{\pi_{\theta}})^t(s, \cdot) - d^{\pi_\theta}}\leq \frac{1}{4}, \forall s\right\rbrace $.
The overall mixing time is $t_{\mathrm{mix}}\triangleq \sup_{\theta\in\Theta} t^{\theta}_{\mathrm{mix}}$. In this paper, $t_{\mathrm{mix}}$ is finite due to ergodicity.
\end{definition}

Mixing time characterizes how fast a CMDP converges to its stationary state distribution, $d^{\pi_\theta}$, under a given policy, $\pi_\theta$. We also define the hitting time as follows.
\begin{definition}
    The hitting time of an ergodic CMDP $\mathcal{M}$ with respect to a policy, $\pi_\theta$, is defined as $t_{\mathrm{hit}}^{\theta}\triangleq  \max_{s\in\mathcal{S}} [d^{\pi_{\theta}}(s)]^{-1}$. 
    The overall hitting time is defined as $t_{\mathrm{hit}}\triangleq \sup_{\theta\in\Theta} t^{\theta}_{\mathrm{hit}} $. In this paper, $t_{\mathrm{hit}}$ is finite due to ergodicity as well.
\end{definition}

Define $\pi^*$ as the optimal solution to the unparameterized problem $\eqref{eq:def_unparametrized_optimization}$. For a given CMDP $\mathcal{M}$, and a time horizon $T$, the regret and constraint violation of any algorithm $\mathbb{A}$ is defined as follows.
\begin{align}
    \mathrm{Reg}_T(\mathbb{A}, \mathcal{M}) \triangleq \sum_{t=0}^{T-1} \left(J_r^{\pi^*}-r(s_t, a_t)\right), \ 
    \mathrm{Vio}_T(\mathbb{A}, \mathcal{M}) \triangleq -\sum_{t=0}^{T-1} c(s_t, a_t)
\end{align}
where the algorithm, $\mathbb{A}$, executes the actions, $\{a_t\}$, $t\in\{0, 1, \cdots \}$ based on the trajectory observed up to time, $t$, and the state, $s_{t+1}$ is decided according to the state transition function, $P$. For simplicity, we shall denote the regret and constraint violation as $\mathrm{Reg}_T$ and $\mathrm{Vio}_T$ respectively. Our goal is to design an algorithm $\mathbb{A}$ that achieves low regret and constraint violation bounds. 

\section{Proposed Algorithm}\label{sec_method}
We solve $\eqref{eq:def_constrained_optimization}$ via a primal-dual algorithm based on the following problem.
\begin{align}
\label{eq:def_saddle_point_opt}
    \max_{\theta\in\Theta}\min_{\lambda\geq 0}~ J_{\mathrm{L}}(\theta, \lambda), 
\end{align}
where $J_{\mathrm{L}}(\theta, \lambda) \triangleq J_r(\theta) +\lambda J_c(\theta)$. The function, $J_{\mathrm{L}}(\cdot, \cdot)$, is called the Lagrange function and $\lambda$ the Lagrange multiplier. Our algorithm updates the pair $(\theta, \lambda)$ following the policy gradient iteration as shown below $\forall k\in\{1,  \cdots, K\}$ with an initial point $(\theta_1, \lambda_1)$, $\lambda_1=0$. 
\begin{align}\label{eq:update}
    \begin{split}
        \theta_{k+1} &= \theta_k+\alpha\nabla_\theta J_{\mathrm{L}}(\theta_k, \lambda_k),~
        \lambda_{k+1} = \mathcal{P}_{[0,\frac{2}{\delta}]}[\lambda_k -\beta J_c(\theta_k)]
	\end{split}
\end{align}
where $\alpha$ and $\beta$ are learning parameters and $\delta$ is the Slater parameter introduced in the following assumption. Finally, for any set, $\Lambda$, $\mathcal{P}_{\Lambda}[\cdot]$ denotes projection onto $\Lambda$. The assumption stated below ensures that we have at least one feasible interior point solution to \eqref{eq:def_unparametrized_optimization}.
\begin{assumption}[Slater condition]
    \label{ass_slater}
    There exists a $\delta\in (0, 1)$ and $\bar{\theta} \in \Theta$ such that $J_{c}(\bar{\theta}) \geq \delta$.
\end{assumption}
Note that in \eqref{eq:update}, the dual update is projected onto the set $[0,\frac{2}{\delta}]$ because the optimal dual variable for the parameterized problem is bounded in Lemma \ref{lem.boundness}. The gradient of $J_{\mathrm{L}}(\cdot, \lambda)$ can be computed by invoking a variant of the well-known policy gradient theorem \cite{sutton1999policy}.

\begin{lemma}
    \label{lemma_grad_compute}
    The gradient of  $J_{\mathrm{L}}(\cdot, \lambda)$ is computed as,
    \begin{align*}
        \nabla_{\theta} J_{\mathrm{L}}(\theta, \lambda)
        =\mathbf{E}_{s\sim d^{\pi_{\theta}},a\sim\pi_{\theta}(s)}\big[A_{\mathrm{L},\lambda}^{\pi_{\theta}}(s,a)\nabla_{\theta}\log\pi_{\theta}(a|s)\big]
     \end{align*}
\end{lemma}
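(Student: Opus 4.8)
The plan is to reduce the constrained gradient to the sum of two unconstrained (single-objective) average-reward policy gradients and then apply the standard average-reward policy gradient theorem to each. Since $J_{\mathrm{L}}(\theta,\lambda) = J_r(\theta) + \lambda J_c(\theta)$ and $\lambda$ is treated as a constant when differentiating with respect to $\theta$, linearity of the gradient gives $\nabla_\theta J_{\mathrm{L}}(\theta,\lambda) = \nabla_\theta J_r(\theta) + \lambda \nabla_\theta J_c(\theta)$. It therefore suffices to show, for each $g \in \{r,c\}$, that $\nabla_\theta J_g(\theta) = \mathbf{E}_{s\sim d^{\pi_\theta}, a\sim\pi_\theta(s)}[A_g^{\pi_\theta}(s,a)\nabla_\theta\log\pi_\theta(a|s)]$; summing the $r$ term and $\lambda$ times the $c$ term and using $A_{\mathrm{L},\lambda}^{\pi_\theta} = A_r^{\pi_\theta} + \lambda A_c^{\pi_\theta}$ (which follows from the definitions \eqref{def_A_pi_theta_s}, \eqref{eq_V_Q}, and the Bellman equation \eqref{eq_bellman} being linear in $g$) yields the claim.

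For the single-objective identity, I would start from the characterization $J_g(\theta) = (d^{\pi_\theta})^T g^{\pi_\theta}$ in \eqref{eq_r_pi_theta}, or equivalently work directly from the Bellman equation \eqref{eq_bellman}. The cleaner route is the latter: fix $g$, write $V_g^{\pi_\theta}(s) = \sum_a \pi_\theta(a|s) Q_g^{\pi_\theta}(s,a)$, differentiate both sides, and expand $\nabla_\theta Q_g^{\pi_\theta}(s,a)$ using \eqref{eq_bellman}, namely $\nabla_\theta Q_g^{\pi_\theta}(s,a) = -\nabla_\theta J_g(\theta) + \sum_{s'} P(s'|s,a)\nabla_\theta V_g^{\pi_\theta}(s')$. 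This produces the recursion
\begin{align*}
    \nabla_\theta V_g^{\pi_\theta}(s) = \sum_a \big[\nabla_\theta\pi_\theta(a|s)\big] Q_g^{\pi_\theta}(s,a) - \nabla_\theta J_g(\theta) + \sum_a \pi_\theta(a|s)\sum_{s'} P(s'|s,a)\nabla_\theta V_g^{\pi_\theta}(s').
\end{align*}
Taking expectation of both sides over $s \sim d^{\pi_\theta}$ and using the stationarity identity $\sum_s d^{\pi_\theta}(s)\sum_a \pi_\theta(a|s)\sum_{s'} P(s'|s,a) f(s') = \sum_{s'} d^{\pi_\theta}(s') f(s')$ (which holds because $P^{\pi_\theta} d^{\pi_\theta} = d^{\pi_\theta}$), the $\nabla_\theta V_g^{\pi_\theta}$ terms cancel, leaving $0 = \sum_s d^{\pi_\theta}(s)\sum_a [\nabla_\theta\pi_\theta(a|s)] Q_g^{\pi_\theta}(s,a) - \nabla_\theta J_g(\theta)$. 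Rearranging and using $\nabla_\theta\pi_\theta(a|s) = \pi_\theta(a|s)\nabla_\theta\log\pi_\theta(a|s)$ gives $\nabla_\theta J_g(\theta) = \mathbf{E}_{s\sim d^{\pi_\theta},a\sim\pi_\theta(s)}[Q_g^{\pi_\theta}(s,a)\nabla_\theta\log\pi_\theta(a|s)]$. Finally, replacing $Q_g^{\pi_\theta}$ by $A_g^{\pi_\theta} = Q_g^{\pi_\theta} - V_g^{\pi_\theta}$ is free because $\sum_a \nabla_\theta\pi_\theta(a|s) = \nabla_\theta \sum_a \pi_\theta(a|s) = \nabla_\theta 1 = 0$, so the baseline $V_g^{\pi_\theta}(s)$ contributes nothing.

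The main obstacle — really the only delicate point — is justifying the interchange of differentiation with the infinite sums/limits implicit in $d^{\pi_\theta}$ and $V_g^{\pi_\theta}$ (equivalently, that $\nabla_\theta V_g^{\pi_\theta}$ is well-defined and the recursion above can be iterated and the remainder term vanishes). Under Assumption \ref{ass_1} (ergodicity), the mixing time $t_{\mathrm{mix}}$ is finite and $(P^{\pi_\theta})^t(s,\cdot) \to d^{\pi_\theta}$ geometrically, so the series defining $V_g^{\pi_\theta}$ in \eqref{def_v_pi_theta_s} converges absolutely and uniformly on a neighborhood of $\theta$, and termwise differentiation is valid; alternatively one avoids the issue entirely by working with the finite-$T$ truncations, differentiating, and passing to the limit using uniform convergence. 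I would state this regularity as a standard consequence of ergodicity (plus smoothness of $\theta\mapsto\pi_\theta$, which is implicit throughout the paper) and not belabor it, citing the analogous derivation in \citep{sutton1999policy} for the average-reward case.
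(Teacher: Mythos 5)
Your proposal is correct and follows essentially the same route as the paper: differentiate $V_g^{\pi_\theta}=\sum_a\pi_\theta(a|s)Q_g^{\pi_\theta}(s,a)$, expand $\nabla_\theta Q_g^{\pi_\theta}$ via the Bellman equation, average against $d^{\pi_\theta}$ so stationarity cancels the $\nabla_\theta V_g^{\pi_\theta}$ terms, and then swap $Q_g^{\pi_\theta}$ for $A_g^{\pi_\theta}$ using $\sum_a\nabla_\theta\pi_\theta(a|s)=0$, finally combining the $r$ and $c$ pieces by linearity. Your added remark on justifying the interchange of differentiation with the infinite sums is a reasonable extra point of rigor that the paper leaves implicit.
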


\begin{algorithm}[h]
    \caption{Primal-Dual Parameterized Policy Gradient}
    \label{alg:PG_MAG}
    \begin{algorithmic}[1]
        \STATE \textbf{Input:} Episode length $H$, learning rates $\alpha,\beta$, initial parameters $\theta_1,\lambda_1$, initial state $s_0 \sim \rho(\cdot)$, 
        \vspace{0.1cm}
        \STATE $K=T/H$
	\FOR{$k\in\{1, \cdots, K\}$}
            \STATE $\mathcal{T}_k\gets \phi$
            \FOR{$t\in\{(k-1)H, \cdots, kH-1\}$}
                \STATE Execute $a_t\sim \pi_{\theta_k}(\cdot|s_t)$
                \STATE Observe $r(s_t,a_t)$, $c(s_t,a_t)$ and  $s_{t+1}$
                \STATE $\mathcal{T}_k\gets \mathcal{T}_k\cup \{(s_t, a_t)\}$
            \ENDFOR	
            \FOR{$t\in\{(k-1)H, \cdots, kH-1\}$}
                \STATE Obtain $\hat{A}_{\mathrm{L},\lambda_k}^{\pi_{\theta_k}}(s_t, a_t)$ via Algorithm \ref{alg:estQ} and $\mathcal{T}_k$ 
            \ENDFOR
            \vspace{0.1cm}
            \STATE Compute $\omega_k$ using \eqref{eq_grad_estimate} 
		  \STATE Update the parameters:
		  \begin{align}
                \label{udpates_algorotihm}
                &\theta_{k+1}=\theta_k+\alpha\omega_k, \\
                &\lambda_{k+1}= \mathcal{P}_{[0,\frac{2}{\delta}]}\big[\lambda_k -\beta \hat{J}_c(\theta_k)\big] \nonumber\\
                & \text{where } \hat{J}_c(\theta_k)=\frac{1}{H-N}\sum_{t=(k-1)H+N}^{kH-1}c(s_t, a_t) \nonumber
		  \end{align}
            \vspace{-0.2cm}
        \ENDFOR
    \end{algorithmic}
\end{algorithm}

where $\forall (s, a)$, $A_{\mathrm{L},\lambda}^{\pi_\theta}(s,a)\triangleq A_r^{\pi_\theta}(s, a)+\lambda A_c^{\pi_\theta}(s, a)$,  and $\{A_g^{\pi_\theta}\}_{g\in\{r, c\}}$ are the advantage functions corresponding to reward and cost. In typical RL scenarios, learners do not have access to the state transition function, $P$, and thereby to the functions $d^{\pi_\theta}$ and $A^{\pi_\theta}_{\mathrm{L}, \lambda}$. This makes computing the exact gradient a difficult task. In Algorithm \ref{alg:PG_MAG}, we demonstrate how one can still obtain good estimates of the gradient using sampled trajectories.

Algorithm \ref{alg:PG_MAG} runs $K$ epochs, each of duration $H=16t_{\mathrm{hit}}t_{\mathrm{mix}}T^{\xi}(\log T)^2$ where $\xi\in (0, 1)$ defines a constant whose value is specified later. Clearly, $K=T/H$. Note that the learner is assumed to know the horizon length, $T$. This can be relaxed utilizing the well-known doubling trick \cite{lattimore2020bandit}. Additionally, it is assumed that the algorithm is aware of the mixing time and the hitting time. This assumption is common in the literature \cite{bai2023regret, wei2020model}. The first step in obtaining a gradient estimate is estimating the advantage value for a given pair $(s, a)$. This can be accomplished via Algorithm \ref{alg:estQ}. At the $k$th epoch, a $\pi_{\theta_k}$-induced trajectory, $\mathcal{T}_k=\{(s_t, a_t)\}_{t=(k-1)H}^{kH-1}$ is obtained and subsequently passed to Algorithm \ref{alg:estQ} that searches for subtrajectories within it that start with a given state $s$, are of length $N=4t_{\mathrm{mix}}(\log T)$, and are at least $N$ distance apart from each other. Assume that there are $M$ such subtrajectories. Let the total reward and cost of the $i$th subtrajectory be $\{r_i, c_i\}$ respectively and $\tau_i$ be its starting time. The value function estimates for the $k$th epoch are 
\begin{align}
\label{eq:eq_13_new_rev}
        \hat{Q}_g^{\pi_{\theta_k}}(s, a) = \dfrac{1}{\pi_{\theta_k}(a|s)} \left[\dfrac{1}{M}\sum_{i=1}^M g_i \mathrm{1}(a_{\tau_i}=a)\right],
    ~\hat{V}_g^{\pi_{\theta_k}}(s) =  \dfrac{1}{M}\sum_{i=1}^M g_i , ~~\forall g\in\{r, c\}
\end{align}

This leads to the following advantage estimator.
\begin{align}
    \hat{A}^{\pi_{\theta_k}}_{\mathrm{L}, \lambda_k} (s, a) = \hat{A}^{\pi_{\theta_k}}_{r} (s, a) + \lambda_k \hat{A}^{\pi_{\theta_k}}_{c} (s, a), 
\end{align}

where $\hat{A}^{\pi_{\theta_k}}_{g}(s, a) = \hat{Q}^{\pi_{\theta_k}}_{g}(s, a) - \hat{V}^{\pi_{\theta_k}}_{g}(s)$,  $g\in\{r,c\}$.  Finally, the gradient estimator is,
\begin{align}\label{eq_grad_estimate}
   & \omega_k\triangleq\hat{\nabla}_{\theta} J_{\mathrm{L}}(\theta_k,\lambda_k)
    =\dfrac{1}{H}\sum_{t=t_k}^{t_{k+1}-1}\hat{A}_{\mathrm{L},\lambda_k}^{\pi_{\theta_k}}(s_{t}, a_{t})\nabla_{\theta}\log \pi_{\theta_k}(a_{t}|s_{t})
\end{align}
where $t_k=(k-1)H$ is the starting time of the $k$th epoch. The parameters are updated following \eqref{udpates_algorotihm}. To update the Lagrange multiplier, we need an estimation of $J_c(\theta_k)$, which is obtained as the average cost of the $k$th epoch. It should be noted that we remove the first $N$ samples from the $k$th epoch because we require the state distribution emanating from the remaining samples to be close enough to the stationary distribution $d^{\pi_{\theta_k}}$, which is the key to make $\hat{J}_c(\theta_k)$ close to $J_c(\theta_k)$. The following lemma demonstrates that $\hat{A}_{\mathrm{L},\lambda_k}^{\pi_{\theta_k}}(s, a)$ is a good estimator of $A_{\mathrm{L},\lambda_k}^{\pi_{\theta_k}}(s, a)$.

\begin{algorithm}[ht]
    \caption{Advantage Estimation}
    \label{alg:estQ}
    \begin{algorithmic}[1]
        \STATE \textbf{Input:} Trajectory $(s_{t_1}, a_{t_1},\ldots, s_{t_2}, a_{t_2})$, state $s$, action $a$, Lagrange multiplier $\lambda$, and parameter $\theta$
        \STATE \textbf{Initialize:} $M \leftarrow 0$, $\tau\leftarrow t_1$
	\STATE \textbf{Define:} $N=4t_{\mathrm{mix}}\log_2T$.
	\vspace{0.1cm}
 
	\WHILE{$\tau\leq t_2-N$}
		\IF{$s_{\tau}=s$}
			\STATE $M\leftarrow M+1$, ~$\tau_M\gets \tau$ 
                \STATE $g_M\gets \sum_{t=\tau}^{\tau+N-1}g(s_t, a_t)$, ~$\forall g\in\{r, c\}$
			\STATE $\tau\leftarrow\tau+2N$.
		\ELSE
                \STATE {$\tau\leftarrow\tau+1$.}
            \ENDIF
	\ENDWHILE
 
        \vspace{0.1cm}
        \IF{$M>0$}
            \STATE Compute $\hat{Q}_g(s, a)$, $\hat{V}_g(s)$ via \eqref{eq:eq_13_new_rev}, $\forall g\in\{r, c\}$
        \ELSE
            \STATE $\hat{V}_g(s)=0$, $\hat{Q}_g(s, a)=0$, ~~$\forall g\in\{r, c\}$
        \ENDIF
	\STATE \textbf{return}  $(\hat{Q}_r(s,a)-\hat{V}_r(s))+\lambda(\hat{Q}_c(s,a)-\hat{V}_c(s))$ 
    \end{algorithmic}
\end{algorithm}

\begin{lemma}
    \label{lemma_good_estimator}
    The following inequality holds $\forall k$, $\forall (s, a)$ and sufficiently large $T$.
    \begin{equation}\label{eq_good_estimator}
        \mathbf{E}\bigg[\bigg(\hat{A}_{\mathrm{L}, \lambda_k}^{\pi_{\theta_k}}(s, a)-A^{\pi_{\theta_k}}_{\mathrm{L}, \lambda_k}(s, a)\bigg)^2\bigg] \leq \mathcal{O}\left(\dfrac{ t_{\mathrm{hit}}N^3\log T}{\delta^2 H\pi_{\theta_k}(a|s)}\right) 
        =\mathcal{O}\left(\dfrac{t_{\mathrm{mix}}^2(\log T)^2}{\delta^2T^{\xi}\pi_{\theta_k}(a|s)}\right) 
    \end{equation}
\end{lemma}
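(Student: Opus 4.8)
The plan is to split the Lagrangian advantage error into its reward and cost parts, reduce each to the mean‑squared errors of $\hat Q_g$ and $\hat V_g$, and bound those by a (negligible) truncation bias plus a variance term that is controlled through the number $M$ of usable sub‑trajectories produced by Algorithm \ref{alg:estQ}.

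Since the dual iterate obeys $\lambda_k\in[0,2/\delta]$ by the projection in \eqref{udpates_algorotihm}, writing $\hat A_{\mathrm{L},\lambda_k}-A_{\mathrm{L},\lambda_k}=(\hat A_r-A_r)+\lambda_k(\hat A_c-A_c)$ and applying $(x+y)^2\le 2x^2+2y^2$ yields
\begin{align*}
\mathbf{E}\Big[\big(\hat A_{\mathrm{L},\lambda_k}^{\pi_{\theta_k}}(s,a)-A_{\mathrm{L},\lambda_k}^{\pi_{\theta_k}}(s,a)\big)^2\Big]\le 2\,\mathbf{E}\big[(\hat A_r-A_r)^2\big]+\frac{8}{\delta^2}\,\mathbf{E}\big[(\hat A_c-A_c)^2\big],
\end{align*}
so it suffices to show $\mathbf{E}[(\hat A_g^{\pi_{\theta_k}}(s,a)-A_g^{\pi_{\theta_k}}(s,a))^2]=\mathcal{O}(t_{\mathrm{hit}}N^3\log T/(H\pi_{\theta_k}(a|s)))$ for each $g\in\{r,c\}$. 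Because the constant $NJ_g(\theta_k)$ cancels in $\hat A_g=\hat Q_g-\hat V_g$, the identity
\begin{align*}
\hat A_g(s,a)-A_g(s,a)=\big(\hat Q_g(s,a)-Q_g(s,a)-NJ_g(\theta_k)\big)-\big(\hat V_g(s)-V_g(s)-NJ_g(\theta_k)\big)
\end{align*}
reduces the task to bounding $\mathbf{E}[(\hat Q_g(s,a)-Q_g(s,a)-NJ_g(\theta_k))^2]$ and $\mathbf{E}[(\hat V_g(s)-V_g(s)-NJ_g(\theta_k))^2]$; I treat $\hat V_g$, the $\hat Q_g$ argument being identical up to the extra indicator $\mathrm{1}(a_{\tau_i}=a)$.

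I then dispose of the randomness of $M$. Since Algorithm \ref{alg:estQ} advances by $2N$ after every match, every visit of the trajectory to $s$ within the epoch lies in one of the $M$ windows $[\tau_i,\tau_i+2N)$, so the number $n_s$ of such visits satisfies $n_s\le 2NM$, i.e.\ $M\ge n_s/(2N)$. By ergodicity $\mathbf{E}[n_s]=\Omega(Hd^{\pi_{\theta_k}}(s))=\Omega(H/t_{\mathrm{hit}})$, and a Markov‑chain concentration bound gives $\Pr(n_s<H/(4t_{\mathrm{hit}}))\le \exp(-\Omega(H/(t_{\mathrm{hit}}t_{\mathrm{mix}})))$, which is super‑polynomially small for $H=16t_{\mathrm{hit}}t_{\mathrm{mix}}T^{\xi}(\log T)^2$. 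Let $\mathcal{F}=\{M\ge M_{\min}\}$ with $M_{\min}=H/(8t_{\mathrm{hit}}N)=\Theta(T^{\xi}\log T)$; then $\Pr(\mathcal{F}^c)$ is negligible. On $\mathcal{F}^c$, the crude bounds $|g_i|\le N$ together with $\|V_g\|_\infty,\|Q_g\|_\infty,|J_g|=\mathcal{O}(t_{\mathrm{mix}})$ (geometric mixing) give $(\hat V_g-V_g-NJ_g)^2=\mathcal{O}(N^2)$ and the $\hat Q_g$ analogue $\mathcal{O}(N^2/\pi_{\theta_k}(a|s)^2)$, so $\mathbf{E}[(\cdot)^2\mathrm{1}(\mathcal{F}^c)]$, carrying only a super‑polynomially decaying factor $\Pr(\mathcal{F}^c)$, sits far below the target bound.

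It remains to control the error on $\mathcal{F}$ via a squared‑bias plus variance split. For the bias, the strong Markov property at the stopping time $\tau_i$ gives $\mathbf{E}[g_i\mid s_{\tau_i}=s]=\mathbf{E}[\sum_{t=0}^{N-1}g(s_t,a_t)\mid s_0=s]=V_g(s)+NJ_g(\theta_k)-e_N(s)$ with $e_N(s)=\mathbf{E}[\sum_{t\ge N}(g(s_t,a_t)-J_g(\theta_k))\mid s_0=s]$; geometric mixing, $\|(P^{\pi_{\theta_k}})^t(s,\cdot)-d^{\pi_{\theta_k}}\|\le 2^{-\lfloor t/t_{\mathrm{mix}}\rfloor}$, yields $|e_N(s)|=\mathcal{O}(t_{\mathrm{mix}}2^{-N/t_{\mathrm{mix}}})=\mathcal{O}(t_{\mathrm{mix}}T^{-4})$ (and the analogous $e_N(s,a)$ for $\hat Q_g$), and the $NJ_g(\theta_k)$ terms cancel in $\hat A_g$, so the bias contribution is negligible. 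For the variance, the $N$‑step buffer Algorithm \ref{alg:estQ} leaves between the end of one sub‑trajectory and the start of the search for the next lets the chain mix, so a standard coupling shows $\{g_i\}_i$ differ from i.i.d.\ copies of the length‑$N$ return from $s$ by $\mathcal{O}(MT^{-4})$ in total variation; treating them as i.i.d., each has second moment $\le N^2$, so $\mathrm{Var}(\hat V_g(s))\lesssim N^2/M\le N^2/M_{\min}$, while for $\hat Q_g$ the extra factor $\mathbf{E}[\mathrm{1}(a_{\tau_i}=a)\mid s_{\tau_i}=s]=\pi_{\theta_k}(a|s)$ makes each summand have second moment $\le N^2\pi_{\theta_k}(a|s)$, giving $\mathrm{Var}(\hat Q_g(s,a))\lesssim N^2/(M_{\min}\pi_{\theta_k}(a|s))$. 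Collecting terms,
\begin{align*}
\mathbf{E}\big[(\hat A_g-A_g)^2\big]=\mathcal{O}\!\left(\frac{N^2}{M_{\min}\pi_{\theta_k}(a|s)}\right)=\mathcal{O}\!\left(\frac{t_{\mathrm{hit}}N^3}{H\pi_{\theta_k}(a|s)}\right),
\end{align*}
with an extra $\log T$ factor absorbed from the high‑probability control of $M$; substituting $H=16t_{\mathrm{hit}}t_{\mathrm{mix}}T^{\xi}(\log T)^2$ and $N=4t_{\mathrm{mix}}\log_2 T$ converts the first form into the second and proves the lemma. The main obstacle is precisely that $M$, the matching times $\{\tau_i\}$, and the block sums $\{g_i\}$ are coupled, data‑dependent quantities — a random number of correlated terms read off at stopping times — so the bias and variance computations cannot be performed by naive i.i.d.\ concentration and instead rest on the strong Markov property together with the $2N$‑spacing hard‑wired into Algorithm \ref{alg:estQ}.
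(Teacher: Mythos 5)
Your overall strategy is the same as the paper's: split via $\lambda_k\le 2/\delta$, show each block sum $g_i$ is nearly unbiased for $Q_g+NJ_g$ (resp.\ $V_g+NJ_g$) up to an $\mathcal{O}(T^{-3})$ mixing error that cancels in $\hat Q_g-\hat V_g$, lower-bound the number of blocks $M$ with high probability, and then argue the blocks behave almost independently because of the $2N$ spacing. However, the one step you dispatch with ``a standard coupling shows $\{g_i\}_i$ differ from i.i.d.\ copies\dots by $\mathcal{O}(MT^{-4})$ in total variation; treating them as i.i.d.\dots'' is precisely the crux of the lemma, and as stated it is a gap rather than a proof. The difficulty is not merely that the $g_i$ are weakly dependent on each other; it is that $M$ and the matching times $\{\tau_i\}$ are functions of the same trajectory as the $g_i$, so conditioning on your good event $\{M\ge M_{\min}\}$ (or on $M$ itself) tilts the law of the block contents, and a TV bound on $(g_1,\dots,g_M)$ for fixed $M$ does not control $\mathbf{E}\bigl[(\frac{1}{M}\sum_i g_i - \cdot)^2\bigr]$ with random, trajectory-dependent $M$. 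The paper resolves this by constructing an explicit auxiliary (``imaginary'') chain that resets to $d^{\pi}$ after each block, in which $M$ is genuinely independent of $\{g_i\}$ given the waiting times, proving the bound there, and then transferring it back via a likelihood-ratio bound $\max_X \Pr(X)/\Pr'(X)\le(1+T^{-2})^M\le \mathcal{O}(1+T^{-1})$ on the joint law $X=(M,\tau_1,\mathcal{T}_1,\dots)$. You have correctly identified the obstacle and the mechanism (mixing over the $2N$ buffer), but you have not supplied the construction that makes ``treat them as i.i.d.'' legitimate; this needs to be filled in, e.g.\ exactly as in the paper or via a carefully stated block-coupling of the \emph{joint} law including $M$.

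A second, more minor issue: on $\mathcal{F}^c$ you bound the $\hat Q_g$ error crudely by $\mathcal{O}(N^2/\pi_{\theta_k}(a|s)^2)$ and multiply by the super-polynomially small $\Pr(\mathcal{F}^c)$. Since no lower bound on $\pi_{\theta_k}(a|s)$ in terms of $T$ is assumed, $\Pr(\mathcal{F}^c)\cdot N^2/\pi^2$ need not be dominated by the target $\mathcal{O}\bigl(t_{\mathrm{hit}}N^3\log T/(H\pi)\bigr)$, which carries only one power of $1/\pi$. The fix is to take the expectation over the action indicators before bounding: each summand $g_i\mathrm{1}(a_{\tau_i}=a)/\pi(a|s)$ has conditional second moment $\mathcal{O}(N^2/\pi(a|s))$, so even on the bad event (with $M\ge 1$) the conditional mean-squared error is $\mathcal{O}(N^2/\pi(a|s))$, not $\mathcal{O}(N^2/\pi(a|s)^2)$; this is how the paper keeps a single factor of $1/\pi(a|s)$ throughout. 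Your alternative route to lower-bounding $M$ via the visit count $n_s\le 2NM$ and concentration of $n_s$ is fine and interchangeable with the paper's waiting-time argument.
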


Lemma \ref{lemma_good_estimator} shows that the $L_2$ error of our proposed advantage estimator can be bounded above as $\Tilde{\mathcal{O}}(T^{-\xi})$. We later utilize the above result to prove the goodness of the gradient estimator. It is to be clarified that our Algorithm \ref{alg:estQ} is inspired by Algorithm 2 of \cite{wei2020model}. However, while the authors of \cite{wei2020model} choose $H=\tilde{\mathcal{O}}(1)$, we adapt $H=\tilde{\mathcal{O}}(T^{\xi})$. This subtle change is important in proving a sublinear regret for general parametrization.
 
\vspace{-.1in}
\section{Global Convergence Analysis}
\label{sec_convergence}
\vspace{-.1in}

This section first shows that the sequence $\{\theta_k, \lambda_k\}_{k=1}^K$ produced by Algorithm \ref{alg:PG_MAG} is such that their associated Lagrange sequence $\{J_{\mathrm{L}}(\theta_k, \lambda_k)\}_{k=1}^{\infty}$ converges globally. By expanding the Lagrange function, we then exhibit convergence of each of its components $\{J_g(\theta_k, \lambda_k)\}_{k=1}^K$, $g\in\{r,c\}$. This is later used for regret and constraint violation analysis. Before delving into the details, we would like to state a few necessary assumptions.

\begin{assumption}
    \label{ass_score}
    The score function (stated below) is $G$-Lipschitz and $B$-smooth. Specifically, $\forall \theta, \theta_1,\theta_2 \in\Theta$, and $\forall (s,a)$, the following inequalities hold.
    \begin{align*}
            \Vert \nabla_\theta\log\pi_\theta(a\vert s)\Vert\leq G,\ 
            \Vert \nabla_\theta\log\pi_{\theta_1}(a\vert s)-\nabla_\theta\log\pi_{\theta_2}(a\vert s)\Vert\leq B\Vert \theta_1-\theta_2\Vert\quad
    \end{align*}
\end{assumption}

\begin{remark}
    The Lipschitz and smoothness properties of the score function are commonly assumed for policy gradient analyses \cite{Alekh2020, Mengdi2021, liu2020improved}. These assumptions hold for simple parameterization classes such as Gaussian policies.  
\end{remark}

Note that by combining Assumption \ref{ass_score} with Lemma \ref{lemma_good_estimator} and using the gradient estimator as given in $(\ref{eq_grad_estimate})$, one can deduce the following result.

\begin{lemma}
    \label{lemma_grad_est_bias}
    The following inequality holds $\forall k$ provided that assumptions \ref{ass_1} and \ref{ass_score} are true.
    \begin{align}
        \mathbf{E}\left[\norm{\omega_k-\nabla_{\theta}J_{\mathrm{L}}(\theta_k,\lambda_k)}^2\right]\leq \tilde{\mathcal{O}}\left(\delta^{-2}AG^2t_{\mathrm{mix}}^2T^{-\xi}\right)
    \end{align}
\end{lemma}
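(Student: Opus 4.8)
The plan is to decompose the mean-squared error into an advantage-estimation component and a trajectory-averaging component. Introduce the auxiliary vector
\[
\bar\omega_k \triangleq \frac{1}{H}\sum_{t=t_k}^{t_{k+1}-1} A_{\mathrm{L},\lambda_k}^{\pi_{\theta_k}}(s_t,a_t)\,\nabla_\theta\log\pi_{\theta_k}(a_t|s_t),
\]
which is built from the \emph{true} Lagrangian advantage rather than its estimate. By $\|x-y\|^2\le 2\|x-z\|^2+2\|z-y\|^2$ we get
\[
\mathbf{E}\big[\|\omega_k-\nabla_\theta J_{\mathrm{L}}(\theta_k,\lambda_k)\|^2\big]\le 2\,\mathbf{E}\big[\|\omega_k-\bar\omega_k\|^2\big]+2\,\mathbf{E}\big[\|\bar\omega_k-\nabla_\theta J_{\mathrm{L}}(\theta_k,\lambda_k)\|^2\big],
\]
and I would bound the two terms in turn.

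For the first term, Jensen's inequality for $\|\cdot\|^2$ together with the $G$-Lipschitz score bound of Assumption \ref{ass_score} yields
\[
\mathbf{E}\big[\|\omega_k-\bar\omega_k\|^2\big]\le \frac{G^2}{H}\sum_{t=t_k}^{t_{k+1}-1}\mathbf{E}\Big[\big(\hat A_{\mathrm{L},\lambda_k}^{\pi_{\theta_k}}(s_t,a_t)-A_{\mathrm{L},\lambda_k}^{\pi_{\theta_k}}(s_t,a_t)\big)^2\Big].
\]
Each summand is controlled by Lemma \ref{lemma_good_estimator}, whose bound carries a factor $1/\pi_{\theta_k}(a_t|s_t)$; averaging over $a_t\sim\pi_{\theta_k}(\cdot|s_t)$ turns this into a factor $A$, since $\sum_a\pi_{\theta_k}(a|s)\,\pi_{\theta_k}(a|s)^{-1}=A$, and we obtain $\tilde{\mathcal{O}}\big(AG^2 t_{\mathrm{mix}}^2/(\delta^2 T^\xi)\big)$. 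The subtle point is that for each $t$ the pair $(s_t,a_t)$ lies on the same trajectory $\mathcal{T}_k$ that Algorithm \ref{alg:estQ} uses to form $\hat A_{\mathrm{L},\lambda_k}^{\pi_{\theta_k}}$, so the pointwise-in-$(s,a)$ statement of Lemma \ref{lemma_good_estimator} has to be invoked after conditioning on $s_t$ and exploiting that its bound depends on the argument only through $1/\pi_{\theta_k}(a|s)$.

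For the second term, set $f(s,a)\triangleq A_{\mathrm{L},\lambda_k}^{\pi_{\theta_k}}(s,a)\,\nabla_\theta\log\pi_{\theta_k}(a|s)$, so that $\nabla_\theta J_{\mathrm{L}}(\theta_k,\lambda_k)=\mathbf{E}_{s\sim d^{\pi_{\theta_k}},a\sim\pi_{\theta_k}(s)}[f(s,a)]$ by Lemma \ref{lemma_grad_compute}; thus $\bar\omega_k-\nabla_\theta J_{\mathrm{L}}(\theta_k,\lambda_k)$ is the deviation of an $H$-step trajectory average of $f$ from its stationary expectation. The standard span bound $\|V_g^{\pi_\theta}\|_\infty=\mathcal{O}(t_{\mathrm{mix}})$ for ergodic MDPs, the Bellman equation $\eqref{eq_bellman}$, and the projection range $\lambda_k\le 2/\delta$ give $\|f\|_\infty=\mathcal{O}(G t_{\mathrm{mix}}/\delta)$. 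I would then bound the average by slicing the epoch into blocks of length $\Theta(t_{\mathrm{mix}}\log T)$, absorbing into logarithmic factors the geometrically small bias coming both from the non-stationary state at the start of the epoch and from the residual dependence between blocks, and bounding the resulting near-independent block sum; this produces a mean-squared error of order $\tilde{\mathcal{O}}\big((t_{\mathrm{mix}}/H)\|f\|_\infty^2\big)=\tilde{\mathcal{O}}\big(G^2 t_{\mathrm{mix}}^3/(\delta^2 H)\big)$, which after substituting $H=16 t_{\mathrm{hit}}t_{\mathrm{mix}}T^\xi(\log T)^2$ and using $t_{\mathrm{hit}}\ge 1$ becomes $\tilde{\mathcal{O}}\big(G^2 t_{\mathrm{mix}}^2/(\delta^2 T^\xi)\big)$ and is dominated by the first term. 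Summing the two contributions gives the stated bound.

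I expect the main obstacle to be the second term: establishing $\tilde{\mathcal{O}}(t_{\mathrm{mix}}/H)$ concentration of the trajectory average for a correlated Markov chain that is not stationary at the start of the epoch, and carrying out the block decomposition while faithfully tracking the $t_{\mathrm{mix}}$, $\delta$, and $G$ dependence. A secondary, bookkeeping-level difficulty is handling the correlation between the advantage estimate and its evaluation point in the first term and making the passage from $1/\pi_{\theta_k}(a|s)$ to the factor $A$ fully rigorous.
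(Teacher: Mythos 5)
Your decomposition through $\bar\omega_k$, the Jensen/$G$-Lipschitz treatment of the first term with the $1/\pi_{\theta_k}(a|s)$ factor converted to $A$ by averaging over actions, and the $\|f\|_\infty=\mathcal{O}(Gt_{\mathrm{mix}}/\delta)$ bound for the second term all match the paper's proof exactly. The only difference is that where you propose to establish the $\tilde{\mathcal{O}}(t_{\mathrm{mix}}\|f\|_\infty^2/H)$ concentration of the trajectory average by an explicit block decomposition, the paper simply invokes a cited Markovian-sampling concentration result (Lemma \ref{lemma_aux_7}, from \citep{dorfman2022adapting}) whose proof is essentially the blocking argument you outline, so the two routes coincide in substance.
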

Lemma \ref{lemma_grad_est_bias} claims that the gradient estimation error can be bounded as $\tilde{\mathcal{O}}(T^{-\xi})$. We will use this result later to prove the global convergence of our algorithm.

\begin{assumption}
    \label{ass_transfer_error}
    Let the transferred compatible function approximation error be defined as follows.
    \begin{align}
        \label{eq:transfer_error}
	\begin{split}
            L_{d^{\pi^*},\pi^*}(\omega^*_{\theta, \lambda},\theta, \lambda) = \mathbf{E}_{s\sim d^{\pi^*}}\mathbf{E}_{a\sim\pi^*(s)}\bigg[\bigg(\nabla_\theta\log\pi_{\theta}(a\vert s)\cdot
			\omega^*_{\theta, \lambda}-A_{\mathrm{L}, \lambda}^{\pi_\theta}(s,a)\bigg)^2\bigg]
	\end{split}
    \end{align}
    where $\pi^*$ is the optimal solution of unparameterized problem in $\eqref{eq:def_unparametrized_optimization}$ and
    \begin{align}
        \label{eq:NPG_direction}
	\begin{split}
            \omega^*_{\theta, \lambda}=\arg \min_{\omega\in\mathbb{R}^{\mathrm{d}}}~\mathbf{E}_{s\sim d^{\pi_{\theta}}}\mathbf{E}_{a\sim\pi_{\theta}( s)}\bigg[\bigg(\nabla_\theta\log\pi_{\theta}(a\vert s)\cdot\omega-A_{\mathrm{L}, \lambda}^{\pi_{\theta}}(s,a)\bigg)^2\bigg]
	\end{split}
    \end{align}
    We assume that $L_{d^{\pi^*},\pi^*}(\omega^*_{\theta, \lambda},\theta, \lambda)\leq \epsilon_{\mathrm{bias}}$, $\lambda\in [0, \frac{2}{\delta}]$ and $\theta\in\Theta$ where $\epsilon_{\mathrm{bias}}$ is a positive constant.
\end{assumption}

\begin{remark}
    The transferred compatible function approximation error quantifies the expressivity of the parameterized policy class. We can show that $\epsilon_{\mathrm{bias}}=0$ for softmax parameterization \cite{agarwal2021theory} and linear MDPs \cite{Chi2019}. If the policy class is restricted, i.e., it does not contain all stochastic policies, $\epsilon_{\mathrm{bias}}$ turns out to be strictly positive. However, if the policy class is parameterized by a rich neural network, then $\epsilon_{\mathrm{bias}}$ can be assumed to be negligibly small \cite{Lingxiao2019}. Such assumptions are common \cite{liu2020improved,agarwal2021theory}. 
\end{remark}

\begin{remark}
    Note that $\omega^*_{\theta, \lambda}$ defined in \eqref{eq:NPG_direction} can be written as,
    \begin{align*}
        \omega^*_{\theta, \lambda} = F_{\rho}(\theta)^{\dagger} \mathbf{E}_{s\sim d_\rho^{\pi_{\theta}}}\mathbf{E}_{a\sim\pi_{\theta}( s)}\left[\nabla_{\theta}\log\pi_{\theta}(a|s)A_{\mathrm{L},\lambda}^{\pi_{\theta}}(s, a)\right]
    \end{align*}
    where $\dagger$ is the Moore-Penrose pseudoinverse and $F_{\rho}(\theta)$ is the Fisher information matrix  defined as,
    \begin{align*}
        F_{\rho}(\theta) = \mathbf{E}_{s\sim d_\rho^{\pi_{\theta}}}\mathbf{E}_{a\sim\pi_{\theta}(\cdot\vert s)}[\nabla_{\theta}\log\pi_{\theta}(a|s)(\nabla_{\theta}\log\pi_{\theta}(a|s))^T]
    \end{align*}
\end{remark}

\begin{assumption}
    \label{ass_4}
    There exists a constant $\mu_F>0$ such that $F_{\rho}(\theta)-\mu_F I_{\mathrm{d}}$ is positive semidefinite where $I_{\mathrm{d}}$ is an identity matrix of dimension, $\mathrm{d}$.
\end{assumption}

Assumption \ref{ass_4} is also called Fisher-non-degenerate policy assumption and is quite common in the literature \cite{liu2020improved, yuan2022general, fatkhullin2023stochastic} in the policy gradient analysis. \cite{liu2020improved}[Assumption 2.1] provided a detailed discussion on the requirement of policy class to satisfy the assumption \ref{ass_4}. Moreover, \cite{mondal2023mean} describes a class of policies that obeys assumptions $\ref{ass_score}-\ref{ass_4}$ simultaneously. The Lagrange difference lemma stated below is important in establishing global convergence.

\begin{lemma}
    \label{lem_performance_diff}
    With a slight abuse of notation, let $J_{\mathrm{L}}(\pi,\lambda)=J_r^{\pi}+\lambda J_c^{\pi}$. For any two policies $\pi$, $\pi'$, the following result holds $\forall \lambda>0$.
    \begin{equation*}
        J_{\mathrm{L}}(\pi,\lambda)-J_{\mathrm{L}}(\pi',\lambda)=\mathbf{E}_{s\sim d^{\pi}}\mathbf{E}_{a\sim\pi( s)}\big[A_{\mathrm{L},\lambda}^{\pi'}(s,a)\big]
    \end{equation*}
\end{lemma}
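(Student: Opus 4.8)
The plan is to reduce the claim to the single-objective performance difference identity for each $g\in\{r,c\}$ and then combine linearly. Concretely, I would first establish that for any fixed $g$ and any two policies $\pi,\pi'$,
\begin{equation*}
J_g^{\pi}-J_g^{\pi'}=\mathbf{E}_{s\sim d^{\pi}}\mathbf{E}_{a\sim\pi(s)}\big[A_g^{\pi'}(s,a)\big].
\end{equation*}
Granting this, the lemma follows immediately: writing $A_{\mathrm{L},\lambda}^{\pi'}=A_r^{\pi'}+\lambda A_c^{\pi'}$ and $J_{\mathrm{L}}(\pi,\lambda)-J_{\mathrm{L}}(\pi',\lambda)=(J_r^{\pi}-J_r^{\pi'})+\lambda(J_c^{\pi}-J_c^{\pi'})$, and applying the single-objective identity to $g=r$ and (scaled by $\lambda$) to $g=c$, gives the stated equality for every $\lambda>0$.

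To prove the single-objective identity, I would start from the Bellman equation \eqref{eq_bellman} for the policy $\pi'$, which yields, for all $(s,a)$,
\begin{equation*}
A_g^{\pi'}(s,a)=Q_g^{\pi'}(s,a)-V_g^{\pi'}(s)=g(s,a)-J_g(\pi')+\mathbf{E}_{s'\sim P(\cdot|s,a)}\big[V_g^{\pi'}(s')\big]-V_g^{\pi'}(s).
\end{equation*}
Taking the expectation over $s\sim d^{\pi}$ and $a\sim\pi(s)$, the term $\mathbf{E}_{s\sim d^{\pi},a\sim\pi(s)}[g(s,a)]$ equals $J_g(\pi)=J_g^{\pi}$ by \eqref{eq_r_pi_theta}, while $-J_g(\pi')$ is an additive constant.

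The one substantive point is the cancellation of the two remaining terms. Since $\mathcal{M}$ is ergodic (Assumption \ref{ass_1}), $d^{\pi}$ is the unique stationary distribution of the chain induced by $\pi$ and satisfies $P^{\pi}d^{\pi}=d^{\pi}$; equivalently, if $s\sim d^{\pi}$, $a\sim\pi(s)$ and $s'\sim P(\cdot|s,a)$, then the marginal law of $s'$ is again $d^{\pi}$. Hence
\begin{equation*}
\mathbf{E}_{s\sim d^{\pi}}\mathbf{E}_{a\sim\pi(s)}\mathbf{E}_{s'\sim P(\cdot|s,a)}\big[V_g^{\pi'}(s')\big]=\mathbf{E}_{s'\sim d^{\pi}}\big[V_g^{\pi'}(s')\big]=\mathbf{E}_{s\sim d^{\pi}}\big[V_g^{\pi'}(s)\big],
\end{equation*}
so these two contributions cancel, leaving $\mathbf{E}_{s\sim d^{\pi},a\sim\pi(s)}[A_g^{\pi'}(s,a)]=J_g^{\pi}-J_g^{\pi'}$.

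I do not anticipate a genuine obstacle; the only care needed is to invoke ergodicity to justify the existence of $d^{\pi}$ and the stationarity relation $P^{\pi}d^{\pi}=d^{\pi}$ that drives the cancellation, and to note that $\lambda$ enters purely as a nonnegative scalar weight so that linearity of expectation closes the argument.
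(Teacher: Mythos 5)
Your proposal is correct. The one point that needed care --- the cancellation of $\mathbf{E}_{s\sim d^{\pi},a\sim\pi(s)}\mathbf{E}_{s'\sim P(\cdot|s,a)}[V_g^{\pi'}(s')]$ against $\mathbf{E}_{s\sim d^{\pi}}[V_g^{\pi'}(s)]$ via stationarity of $d^{\pi}$ --- is handled properly, and the reduction of the Lagrangian statement to the single-objective identity by linearity is exactly right.

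Your route differs from the paper's in where the work is placed. The paper does not re-derive the performance-difference identity from the Bellman equation; it invokes Lemma~\ref{lemma_aux_5} (Lemma~15 of \citet{wei2020model}), which gives
\begin{equation*}
J_g^{\pi}-J_g^{\pi'}=\sum_{s}\sum_{a}d^{\pi}(s)\big(\pi(a|s)-\pi'(a|s)\big)Q_g^{\pi'}(s,a),
\end{equation*}
and then converts $Q_g^{\pi'}$ into $A_g^{\pi'}$ by using $\sum_a\pi'(a|s)Q_g^{\pi'}(s,a)=V_g^{\pi'}(s)$ and the fact that $V_g^{\pi'}(s)$ does not depend on $a$, before combining $g=r$ and $g=c$ linearly just as you do. Your argument is self-contained: it needs only the Bellman equation \eqref{eq_bellman}, the expression \eqref{eq_r_pi_theta} for $J_g$, and the stationarity relation $P^{\pi}d^{\pi}=d^{\pi}$, all of which are available in Section~\ref{sec:formulation} under Assumption~\ref{ass_1}. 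That buys independence from the external reference at the cost of a few extra lines; the paper's version is shorter but leans on the cited lemma for the heavy lifting. One cosmetic remark: the Bellman equation and \eqref{eq_r_pi_theta} are stated in the paper for parameterized policies $\pi_\theta$, so you should note explicitly (as the lemma's ``abuse of notation'' already hints) that they hold verbatim for arbitrary policies $\pi'$ and $\pi$ under ergodicity.
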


We now present a general framework for the convergence analysis of Algorithm \ref{alg:PG_MAG}. 

\begin{lemma}
    \label{lem_framework} 
    If the policy parameters, $\{\theta_k, \lambda_k\}_{k=1}^K$ are updated via $\eqref{udpates_algorotihm}$ and assumptions \ref{ass_score}, \ref{ass_transfer_error},and \ref{ass_4} hold, then we have the following inequality for any $K$,
    \begin{equation}\label{eq:general_bound}
        \begin{aligned}
            \frac{1}{K}\sum_{k=1}^{K}\mathbf{E}\bigg(J_{\mathrm{L}}(\pi^*, \lambda_k)-J_{\mathrm{L}}(\theta_k,\lambda_k)\bigg)&\leq \sqrt{\epsilon_{\mathrm{bias}}}+\frac{G}{K}\sum_{k}^{K}\mathbf{E}\Vert(\omega_k-\omega^*_k)\Vert+\frac{B\alpha}{2K}\sum_{k=1}^{K}\mathbf{E}\Vert\omega_k\Vert^2\nonumber
            \\
            &+\frac{1}{\alpha K}\mathbf{E}_{s\sim d^{\pi^*}}[KL(\pi^*(\cdot\vert s)\Vert\pi_{\theta_1}(\cdot\vert s))]		
        \end{aligned}            
    \end{equation}
    where $\omega^*_k:=\omega^*_{\theta_k, \lambda_k}$, $\omega^*_{\theta_k,\lambda_k}$ is defined in \eqref{eq:NPG_direction}, and $\pi^*$ is the optimal solution to the  problem $\eqref{eq:def_unparametrized_optimization}$.
\end{lemma}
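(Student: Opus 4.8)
The plan is to adapt the now-standard NPG-style analysis (as in \citep{agarwal2021theory,liu2020improved,bai2023regret}) to the Lagrangian objective $J_{\mathrm{L}}(\cdot,\lambda_k)$, treating $\lambda_k$ as frozen within each epoch. The starting point is the Lagrange difference lemma (Lemma \ref{lem_performance_diff}) applied with $\pi=\pi^*$ and $\pi'=\pi_{\theta_k}$, which gives
$J_{\mathrm{L}}(\pi^*,\lambda_k)-J_{\mathrm{L}}(\theta_k,\lambda_k)=\mathbf{E}_{s\sim d^{\pi^*}}\mathbf{E}_{a\sim\pi^*(s)}[A_{\mathrm{L},\lambda_k}^{\pi_{\theta_k}}(s,a)]$.
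I would then insert and subtract the compatible linear approximation $\nabla_\theta\log\pi_{\theta_k}(a|s)\cdot\omega_k^*$ inside the expectation: one piece is controlled by $\sqrt{\epsilon_{\mathrm{bias}}}$ via Assumption \ref{ass_transfer_error} and Jensen's inequality (the expectation of the approximation gap is at most the square root of its mean square, which is $L_{d^{\pi^*},\pi^*}(\omega_k^*,\theta_k,\lambda_k)\le\epsilon_{\mathrm{bias}}$), and the remaining piece is $\mathbf{E}_{s\sim d^{\pi^*}}\mathbf{E}_{a\sim\pi^*(s)}[\nabla_\theta\log\pi_{\theta_k}(a|s)\cdot\omega_k^*]$.

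Next I would further replace $\omega_k^*$ by the actual update direction $\omega_k$, picking up the term $G\,\mathbf{E}\|\omega_k-\omega_k^*\|$ through the $G$-Lipschitz bound on the score function (Assumption \ref{ass_score}) and Cauchy–Schwarz. What remains is $\mathbf{E}_{s\sim d^{\pi^*}}\mathbf{E}_{a\sim\pi^*(s)}[\nabla_\theta\log\pi_{\theta_k}(a|s)\cdot\omega_k]$, and this is exactly where the mirror-descent / KL-drift argument enters: since $\theta_{k+1}=\theta_k+\alpha\omega_k$, I would lower-bound $\log\pi_{\theta_{k+1}}(a|s)-\log\pi_{\theta_k}(a|s)$ by $\alpha\nabla_\theta\log\pi_{\theta_k}(a|s)\cdot\omega_k-\frac{B\alpha^2}{2}\|\omega_k\|^2$ using the $B$-smoothness of $\log\pi_\theta$. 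Taking $\mathbf{E}_{s\sim d^{\pi^*}}\mathbf{E}_{a\sim\pi^*(s)}[\cdot]$ of this inequality and recognizing that $\mathbf{E}_{a\sim\pi^*(s)}[\log\pi_{\theta_{k+1}}(a|s)-\log\pi_{\theta_k}(a|s)]=KL(\pi^*(\cdot|s)\|\pi_{\theta_k}(\cdot|s))-KL(\pi^*(\cdot|s)\|\pi_{\theta_{k+1}}(\cdot|s))$ turns the per-epoch gradient term into a telescoping difference of KL divergences plus the smoothness penalty $\frac{B\alpha}{2}\|\omega_k\|^2$.

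Finally I would sum over $k=1,\dots,K$, divide by $K$, and telescope the KL terms: the sum collapses to $\frac{1}{\alpha K}\mathbf{E}_{s\sim d^{\pi^*}}[KL(\pi^*(\cdot|s)\|\pi_{\theta_1}(\cdot|s))-KL(\pi^*(\cdot|s)\|\pi_{\theta_{K+1}}(\cdot|s))]$, and dropping the nonnegative final KL term leaves the stated $\frac{1}{\alpha K}\mathbf{E}_{s\sim d^{\pi^*}}[KL(\pi^*(\cdot|s)\|\pi_{\theta_1}(\cdot|s))]$. Collecting the four contributions yields \eqref{eq:general_bound}. The main subtlety — not a deep obstacle but the step that needs care — is keeping the frozen-$\lambda_k$ bookkeeping consistent: the compatible-approximation error, the direction $\omega_k^*$, and the advantage $A_{\mathrm{L},\lambda_k}^{\pi_{\theta_k}}$ all carry the epoch-specific multiplier $\lambda_k\in[0,\tfrac{2}{\delta}]$, and one must verify that Assumption \ref{ass_transfer_error} is invoked at the correct $(\theta_k,\lambda_k)$ and that no cross terms between the $\theta$-update and the $\lambda$-update leak into this particular inequality (they do not, since $\lambda_k$ is held fixed while bounding $J_{\mathrm{L}}(\pi^*,\lambda_k)-J_{\mathrm{L}}(\theta_k,\lambda_k)$; the $\lambda$-dynamics are handled separately in the subsequent lemmas). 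Note also that the expectation $\mathbf{E}$ on the left absorbs the randomness in $\{\theta_k,\lambda_k\}$, so the use of Lemma \ref{lem_performance_diff} and Assumption \ref{ass_transfer_error} should be understood conditionally and then averaged, which is routine.
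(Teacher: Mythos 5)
Your proposal is correct and follows essentially the same route as the paper's proof: the performance-difference identity of Lemma \ref{lem_performance_diff}, the insertion of the compatible approximation $\nabla_\theta\log\pi_{\theta_k}\cdot\omega_k^*$ controlled by $\sqrt{\epsilon_{\mathrm{bias}}}$ via Jensen, the $G$-Lipschitz bound for the $\|\omega_k-\omega_k^*\|$ term, the $B$-smoothness lower bound on $\log\pi_{\theta_{k+1}}-\log\pi_{\theta_k}$, and the telescoping of the KL divergences with the final nonnegative term dropped. The only difference is presentational (you work forward from the performance difference toward the KL drift, whereas the paper starts from the KL difference and works backward), and your remark about the frozen-$\lambda_k$ bookkeeping is consistent with how the paper invokes Assumption \ref{ass_transfer_error} at each $(\theta_k,\lambda_k)$.
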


Lemma \ref{lem_framework} proves that the optimality error of the Lagrange sequence can be bounded by the average first-order and second-order norms of the intermediate gradients. Note the presence of $\epsilon_{\mathrm{bias}}$ in the result. If the policy class is severely restricted, the optimality bound loses its importance. Consider the expectation of the second term in \eqref{eq:general_bound}. Note that,
\begin{equation*}\label{eq_second_term_bound}
    \begin{aligned}
    &\bigg(\frac{1}{K}\sum_{k=1}^{K}\mathbf{E}\Vert\omega_k-\omega^*_k\Vert\bigg)^2\leq \frac{1}{K}\sum_{k=1}^{K}\mathbf{E}\bigg[\Vert\omega_k-\omega^*_k\Vert^2\bigg]\nonumber=\frac{1}{K}\sum_{k=1}^{K}\mathbf{E}\bigg[\Vert\omega_k-F_\rho(\theta_k)^\dagger\nabla_\theta J_{\mathrm{L}}(\theta_k,\lambda_k)\Vert^2\bigg]\\
    &\leq \frac{2}{K}\sum_{k=1}^{K}\Bigg\lbrace \mathbf{E}\bigg[\Vert\omega_k-\nabla_\theta J_{\mathrm{L}}(\theta_k,\lambda_k)\Vert^2\bigg]\nonumber+ \mathbf{E}\bigg[\Vert \nabla_\theta J_{\mathrm{L}}(\theta_k,\lambda_k)- F_\rho(\theta_k)^\dagger\nabla_\theta J_{\mathrm{L}}(\theta_k,\lambda_k)\Vert^2\bigg]\Bigg\rbrace \\
    &\overset{(a)}{\leq} \frac{2}{K}\sum_{k=1}^{K}\mathbf{E}\bigg[\Vert \omega_k-\nabla_\theta J_{\mathrm{L}}(\theta_k,\lambda_k)\Vert^2\bigg]\nonumber+\frac{2}{K}\sum_{k=1}^{K}\left(1+\dfrac{1}{\mu_F^2}\right)\mathbf{E}\bigg[\Vert \nabla_\theta J_{\mathrm{L}}(\theta_k,\lambda_k)\Vert^2\bigg] 
    \end{aligned}
\end{equation*}
    
where $(a)$ follows from Assumption \ref{ass_4}. The expectation of the third term in \eqref{eq:general_bound} can be bounded as
\begin{equation}\label{eq_third_term_bound}
    \dfrac{1}{K}\sum_{k=1}^{K}\mathbf{E}\bigg[\Vert\omega_k\Vert^2\bigg]\leq \dfrac{1}{K}\sum_{k=1}^{K}\mathbf{E}\left[\norm{\nabla_\theta J_{\mathrm{L}}(\theta_k,\lambda_k)}^2\right] 
        \nonumber+ \dfrac{1}{K}\sum_{k=1}^{K}\mathbf{E}\bigg[\Vert \omega_k-\nabla_\theta J_{\mathrm{L}}(\theta_k,\lambda_k)\Vert^2\bigg] 
\end{equation}
In both $\eqref{eq_second_term_bound}$ and $\eqref{eq_third_term_bound}$,  $\mathbf{E}\norm{\omega_k-\nabla_{\theta}J_{\mathrm{L}}(\theta_k,\lambda_k)}^2$ is bounded above by Lemma \ref{lemma_grad_est_bias}. To bound the term, $\mathbf{E}\norm{\nabla_\theta J_{\mathrm{L}}(\theta_k,\lambda_k)}^2$, the following lemma is applied.
 
\begin{lemma} 
    \label{lemma:41ss}
    Let $J_{g}(\cdot)$ be $L$-smooth, $\forall g\in\{r, c\}$ and $\alpha = \frac{1}{4L(1+\frac{2}{\delta})}$. Then the following holds.
    \begin{equation}
        \label{eq_24_new}
        \dfrac{1}{K}\sum_{k=1}^{K} \norm{\nabla_\theta J_{\mathrm{L}}(\theta_k,\lambda_k)}^2 \leq \frac{288L}{\delta^2 K}
            +\dfrac{3}{K}\sum_{k=1}^{K}\Vert\nabla_\theta J_{\mathrm{L}}(\theta_k,\lambda_k) - \omega_k\Vert^2+\beta 
    \end{equation}
\end{lemma}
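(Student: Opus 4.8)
The plan is to run the standard ascent/descent argument for a smooth function along the parameter update $\theta_{k+1} = \theta_k + \alpha\omega_k$, but carefully tracking the effect of the changing Lagrange multiplier $\lambda_k$. First I would record that since $J_r$ and $J_c$ are both $L$-smooth and $\lambda_k \in [0, 2/\delta]$, the function $\theta \mapsto J_{\mathrm{L}}(\theta, \lambda_k) = J_r(\theta) + \lambda_k J_c(\theta)$ is $L(1 + 2/\delta)$-smooth. Applying the descent lemma (smoothness inequality) to this function with the step $\theta_{k+1} = \theta_k + \alpha\omega_k$ gives
\begin{align*}
    J_{\mathrm{L}}(\theta_{k+1}, \lambda_k) \geq J_{\mathrm{L}}(\theta_k, \lambda_k) + \alpha\langle \nabla_\theta J_{\mathrm{L}}(\theta_k, \lambda_k), \omega_k\rangle - \frac{L(1 + 2/\delta)\alpha^2}{2}\norm{\omega_k}^2.
\end{align*}
Then I would write $\omega_k = \nabla_\theta J_{\mathrm{L}}(\theta_k, \lambda_k) + (\omega_k - \nabla_\theta J_{\mathrm{L}}(\theta_k, \lambda_k))$ inside the inner product, use $\langle a, b\rangle \geq -\frac12\norm{a}^2 - \frac12\norm{b}^2$ type bounds (or Young's inequality with a tuned constant) together with $\norm{\omega_k}^2 \leq 2\norm{\nabla_\theta J_{\mathrm{L}}(\theta_k, \lambda_k)}^2 + 2\norm{\omega_k - \nabla_\theta J_{\mathrm{L}}(\theta_k, \lambda_k)}^2$, and plug in the prescribed value $\alpha = \frac{1}{4L(1 + 2/\delta)}$ so that the $\norm{\nabla_\theta J_{\mathrm{L}}}^2$ terms combine with a strictly positive coefficient. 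This should isolate $\alpha c_0\norm{\nabla_\theta J_{\mathrm{L}}(\theta_k, \lambda_k)}^2$ on one side (for an absolute constant $c_0$) against $J_{\mathrm{L}}(\theta_{k+1}, \lambda_k) - J_{\mathrm{L}}(\theta_k, \lambda_k)$ plus a multiple of the gradient-estimation-error term $\norm{\omega_k - \nabla_\theta J_{\mathrm{L}}(\theta_k, \lambda_k)}^2$.

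Next I would sum over $k = 1, \dots, K$ and divide by $K$. The key subtlety is that the telescoping is not clean: the left side produces $\sum_k [J_{\mathrm{L}}(\theta_{k+1}, \lambda_k) - J_{\mathrm{L}}(\theta_k, \lambda_k)]$, whereas a genuine telescope would need $J_{\mathrm{L}}(\theta_{k+1}, \lambda_{k+1}) - J_{\mathrm{L}}(\theta_k, \lambda_k)$. I would bridge this gap by adding and subtracting $J_{\mathrm{L}}(\theta_{k+1}, \lambda_{k+1})$ and bounding the discrepancy
\begin{align*}
    J_{\mathrm{L}}(\theta_{k+1}, \lambda_{k+1}) - J_{\mathrm{L}}(\theta_{k+1}, \lambda_k) = (\lambda_{k+1} - \lambda_k) J_c(\theta_{k+1})
\end{align*}
using $|J_c(\theta_{k+1})| \leq 1$ and the dual update $|\lambda_{k+1} - \lambda_k| \leq \beta|\hat{J}_c(\theta_k)| \leq \beta$ (the projection is nonexpansive and $\hat{J}_c$ is an average of values in $[-1,1]$). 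This produces the additive $\beta$ term appearing in the statement. After this correction the sum telescopes to $J_{\mathrm{L}}(\theta_{K+1}, \lambda_{K+1}) - J_{\mathrm{L}}(\theta_1, \lambda_1)$, which is bounded in absolute value by $\mathcal{O}(1/\delta)$ since $J_r, J_c \in [-1, 1]$ (actually $J_r \in [0,1]$) and $\lambda_k \leq 2/\delta$; chasing the constants through $\alpha = \frac{1}{4L(1+2/\delta)}$ yields the stated $\frac{288 L}{\delta^2 K}$ coefficient.

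The main obstacle I anticipate is precisely this mismatch between $\lambda_k$ and $\lambda_{k+1}$ in the telescoping step — one must ensure the per-step correction is genuinely $\mathcal{O}(\beta)$ and not something that degrades the bound, which relies on the boundedness of the cost function and the nonexpansiveness of the projection $\mathcal{P}_{[0, 2/\delta]}$. A secondary bookkeeping challenge is bounding $J_{\mathrm{L}}(\theta_{K+1}, \lambda_{K+1}) - J_{\mathrm{L}}(\theta_1, \lambda_1)$ uniformly: since $\lambda$ ranges over $[0, 2/\delta]$ and $J_c$ can be as large as $1$ in magnitude, this difference is $\mathcal{O}(1/\delta)$, and one must verify that dividing by $\alpha K = \Theta(\delta/(LK))$ indeed gives the $\mathcal{O}(L/(\delta^2 K))$ rate with the precise constant $288$. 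The constants $3$ on the error term and $288$ on the first term are artifacts of the specific Young's-inequality split and the choice of $\alpha$, so I would keep the split loose enough to absorb them rather than optimizing.
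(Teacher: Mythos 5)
Your proposal is correct and follows essentially the same route as the paper's proof: the $L(1+2/\delta)$-smoothness of the Lagrangian, the descent lemma along $\theta_{k+1}=\theta_k+\alpha\omega_k$ with a Young-type split of $\omega_k$ around $\nabla_\theta J_{\mathrm{L}}(\theta_k,\lambda_k)$, the $(\lambda_{k+1}-\lambda_k)J_c(\theta_{k+1})$ correction bounded by $\beta$ via nonexpansiveness of the projection, and the telescoped difference bounded by $\mathcal{O}(1/\delta)$ before dividing by $\alpha K$. You correctly identified the dual-variable mismatch in the telescope as the one nontrivial step, which is exactly where the additive $\beta$ in the statement comes from.
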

Note the presence of $\beta$ in \eqref{eq_24_new}. To ensure convergence, $\beta$ must be a function of $T$. 
Invoking Lemma \ref{lemma_grad_est_bias}, we get the following relation under the same set of assumptions and the choice of parameters as in Lemma \ref{lemma:41ss}. 
\begin{align}
    \label{eq_33}
    \begin{split}
        \dfrac{1}{K}\sum_{k=1}^{K} \mathbf{E}\norm{\nabla_\theta J_{\mathrm{L}}(\theta_k,\lambda_k)}^2\leq \tilde{\mathcal{O}}\left(\dfrac{AG^2t_{\mathrm{mix}}^2}{\delta^2 T^{\xi}}\right)+\tilde{\mathcal{O}}\left(\dfrac{Lt_{\mathrm{mix}}t_{\mathrm{hit}}}{\delta^2 T^{1-\xi}}\right)+\beta
    \end{split}
\end{align}
Applying Lemma \ref{lemma_grad_est_bias} and \eqref{eq_33} in \eqref{eq_third_term_bound}, we arrive at,
\begin{align}
    \label{eq_34}
    \dfrac{1}{K}\sum_{k=1}^{K}\mathbf{E}\bigg[\Vert\omega_k\Vert^2\bigg]&\leq \tilde{\mathcal{O}}\left(\dfrac{AG^2t_{\mathrm{mix}}^2}{\delta^2 T^{\xi}}+\dfrac{Lt_{\mathrm{mix}}t_{\mathrm{hit}}}{\delta^2 T^{1-\xi}}\right)+\beta
\end{align}
Similarly, using $(\ref{eq_second_term_bound})$, we deduce the following.
\begin{align}
    \label{eq_35}
    \begin{split}
        \frac{1}{K}\sum_{k=1}^{K}\mathbf{E}\Vert\omega_k-\omega^*_k\Vert \leq \left(1+\dfrac{1}{\mu_F}\right)\sqrt{\beta}+ \left(1+\dfrac{1}{\mu_F}\right)\tilde{\mathcal{O}}\left(\dfrac{\sqrt{A}Gt_{\mathrm{mix}}}{\delta T^{\xi/2}}+\dfrac{\sqrt{Lt_{\mathrm{mix}}t_{\mathrm{hit}}}}{\delta T^{(1-\xi)/2}}\right)
    \end{split}
\end{align}
Inequalities \eqref{eq_34} and \eqref{eq_35} lead to the following global convergence of the Lagrange function.

\begin{lemma}
    \label{Lemma_global_Lagrange}
    Let $\{\theta_k\}_{k=1}^{K}$ be as described in Lemma \ref{lem_framework}. If assumptions \ref{ass_1}$-$\ref{ass_4} hold,  $\{J_{g}(\cdot)\}_{g\in\{r,c\}}$ are $L$-smooth functions, $\alpha=\frac{1}{4L(1+\frac{2}{\delta})}$,  $K=\frac{T}{H}$, and $H=16t_{\mathrm{mix}}t_{\mathrm{hit}}T^{\xi}(\log_2 T)^2$, then 
    \begin{equation*}
        \begin{split}
            &\frac{1}{K}\sum_{k=1}^{K} \mathbf{E}\bigg(J_{\mathrm{L}}(\pi^*,\lambda_k)-J_{\mathrm{L}}(\theta_k,\lambda_k)\bigg)\leq  G\left(1+\dfrac{1}{\mu_F}\right)\tilde{\mathcal{O}}\left(\sqrt{\beta}+\dfrac{\sqrt{A}Gt_{\mathrm{mix}}}{\delta T^{\xi/2}}+\dfrac{\sqrt{Lt_{\mathrm{mix}}t_{\mathrm{hit}}}}{\delta T^{(1-\xi)/2}}\right)\\
            &+\dfrac{B}{L}\tilde{\mathcal{O}}\left(\dfrac{AG^2t_{\mathrm{mix}}^2}{\delta^2 T^{\xi}}+\dfrac{Lt_{\mathrm{mix}}t_{\mathrm{hit}}}{\delta^2 T^{1-\xi}}+\beta\right)+\mathcal{\tilde{O}}\bigg(\frac{Lt_{\mathrm{mix}}t_{\mathrm{hit}}\mathbf{E}_{s\sim d^{\pi^*}}[KL(\pi^*(\cdot\vert s)\Vert\pi_{\theta_1}(\cdot\vert s))]}{T^{1-\xi}\delta }\bigg) + \sqrt{\epsilon_{\mathrm{bias}}}
        \end{split}
    \end{equation*}
\end{lemma}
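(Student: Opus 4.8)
The plan is to assemble the statement directly by substituting the intermediate bounds established just before the lemma into the general framework of Lemma~\ref{lem_framework}. Recall that Lemma~\ref{lem_framework} bounds $\frac{1}{K}\sum_k \mathbf{E}(J_{\mathrm{L}}(\pi^*,\lambda_k)-J_{\mathrm{L}}(\theta_k,\lambda_k))$ by four terms: $\sqrt{\epsilon_{\mathrm{bias}}}$, the first-order gradient error term $\frac{G}{K}\sum_k \mathbf{E}\Vert\omega_k-\omega_k^*\Vert$, the second-order term $\frac{B\alpha}{2K}\sum_k\mathbf{E}\Vert\omega_k\Vert^2$, and the KL initialization term $\frac{1}{\alpha K}\mathbf{E}_{s\sim d^{\pi^*}}[KL(\pi^*\Vert\pi_{\theta_1})]$. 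The first term is left as is. For the second term I would plug in \eqref{eq_35}, which already gives $\frac{1}{K}\sum_k \mathbf{E}\Vert\omega_k-\omega_k^*\Vert \leq (1+\frac{1}{\mu_F})\tilde{\mathcal{O}}(\sqrt{\beta}+\frac{\sqrt{A}Gt_{\mathrm{mix}}}{\delta T^{\xi/2}}+\frac{\sqrt{Lt_{\mathrm{mix}}t_{\mathrm{hit}}}}{\delta T^{(1-\xi)/2}})$; multiplying by $G$ reproduces the second group of terms in the statement. For the third term I would substitute \eqref{eq_34}, namely $\frac{1}{K}\sum_k\mathbf{E}\Vert\omega_k\Vert^2 \leq \tilde{\mathcal{O}}(\frac{AG^2t_{\mathrm{mix}}^2}{\delta^2 T^\xi}+\frac{Lt_{\mathrm{mix}}t_{\mathrm{hit}}}{\delta^2 T^{1-\xi}})+\beta$; with $\alpha = \frac{1}{4L(1+2/\delta)}$ the prefactor $\frac{B\alpha}{2}$ is $\Theta(B/L)$ up to $\delta$-factors absorbed into $\tilde{\mathcal{O}}$, yielding the $\frac{B}{L}\tilde{\mathcal{O}}(\cdots+\beta)$ group.

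For the fourth term, note that $\frac{1}{\alpha K} = \frac{4L(1+2/\delta)}{K}$, and since $K = T/H$ with $H = 16 t_{\mathrm{mix}}t_{\mathrm{hit}}T^\xi(\log_2 T)^2$, we get $\frac{1}{\alpha K} = \tilde{\mathcal{O}}(L t_{\mathrm{mix}}t_{\mathrm{hit}}/(T^{1-\xi}\delta))$ after absorbing logs and the $(1+2/\delta)$ factor. Multiplying by $\mathbf{E}_{s\sim d^{\pi^*}}[KL(\pi^*(\cdot\vert s)\Vert\pi_{\theta_1}(\cdot\vert s))]$ gives exactly the last term of the displayed bound. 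Adding the four contributions term by term produces the claimed inequality. I would also need to verify that the hypotheses line up: Lemma~\ref{lem_framework} requires Assumptions~\ref{ass_score}, \ref{ass_transfer_error}, \ref{ass_4}; Lemma~\ref{lemma:41ss} (invoked through \eqref{eq_33}--\eqref{eq_35}) requires $L$-smoothness of $J_g$ and the stated choice of $\alpha$; Lemma~\ref{lemma_grad_est_bias} requires Assumptions~\ref{ass_1} and \ref{ass_score}. Assumption~\ref{ass_1} (ergodicity) underpins the finiteness of $t_{\mathrm{mix}}, t_{\mathrm{hit}}$ used in fixing $H$. All of these are in the hypothesis list ``Assumptions \ref{ass_1}--\ref{ass_4}, $L$-smoothness, $\alpha$ as stated, $K=T/H$, $H$ as stated,'' so the chain is consistent.

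The only genuinely non-mechanical checks are bookkeeping ones: first, confirming that the $L$-smoothness constant $L$ assumed here is the same one appearing in $\alpha$ and in $H$ (it is, by the way $H$ and $\alpha$ are both written in terms of $L$, $t_{\mathrm{mix}}$, $t_{\mathrm{hit}}$), and second, tracking that every $\tilde{\mathcal{O}}(\cdot)$ is allowed to swallow polylogarithmic factors in $T$ (coming from $(\log_2 T)^2$ in $H$ and $\log T$ in $N$) as well as absolute constants, so that e.g. $\frac{B\alpha}{2}\cdot\beta = \Theta(B\beta/(L(1+2/\delta)))$ can be written $\frac{B}{L}\tilde{\mathcal{O}}(\beta)$. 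I expect the main ``obstacle'' — really the only place to be careful — is this aggregation of constants and the verification that the $\delta$-dependencies are placed consistently (some are inside $\tilde{\mathcal{O}}$, some are explicit as in $\frac{B}{L}$ or the $(1+\frac{1}{\mu_F})$ factor pulled out front); but since all earlier lemmas already expose these dependencies in the right form, the proof is a direct substitution with no new analytic content.
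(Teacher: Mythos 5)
Your proposal is correct and is essentially identical to the paper's own derivation: the paper gives no separate appendix proof for this lemma but obtains it exactly as you do, by substituting \eqref{eq_35} and \eqref{eq_34} into the second and third terms of Lemma \ref{lem_framework} and evaluating $\frac{1}{\alpha K}=\frac{4L(1+2/\delta)H}{T}=\tilde{\mathcal{O}}\left(\frac{Lt_{\mathrm{mix}}t_{\mathrm{hit}}}{\delta T^{1-\xi}}\right)$ for the KL term. Your bookkeeping of the $\delta$, $\mu_F$, and logarithmic factors also matches the paper's placement of constants.
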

Lemma \ref{Lemma_global_Lagrange} establishes that the average difference between $J_{\mathrm{L}}(\pi^*, \lambda_k)$ and $J_{\mathrm{L}}(\theta_k, \lambda_k)$ is $\tilde{\mathcal{O}}(\sqrt{\beta}+T^{-\xi/2}+T^{-(1-\xi)/2})$. Expanding the function, $J_{\mathrm{L}}$, and utilizing the update rule of the Lagrange multiplier, we achieve the global convergence for the objective and the constraint in Theorem \ref{thm_global_convergence} (stated below). In its proof, Lemma \ref{lem.constraint} (stated in the appendix) serves as an important tool in disentangling the convergence rates of regret and constraint violation. Interestingly, Lemma 18 is built upon the strong duality property of the unparameterized optimization \eqref{eq:def_unparametrized_optimization} and has no apparent direct connection with the parameterized setup.

\begin{theorem}
    \label{thm_global_convergence}
    Consider the same parameters as in Lemma \ref{Lemma_global_Lagrange} and set $\beta=T^{-2/5}$, $\xi = 2/5$. We have,
    \begin{align*}
        \begin{split}
            \frac{1}{K}&\sum_{k=1}^{K}\mathbf{E}\bigg(J_r^{\pi^*}-J_r(\theta_k)\bigg)\leq \sqrt{\epsilon_{\mathrm{bias}}}+\dfrac{\sqrt{A}G^2t_{\mathrm{mix}}}{\delta}\left(1+\dfrac{1}{\mu_F}\right)\tilde{\mathcal{O}}\left(T^{-1/5}\right)\\
            \frac{1}{K}&\sum_{k=1}^{K}\mathbf{E}\bigg(-J_c(\theta_k)\bigg)\leq \delta\sqrt{\epsilon_{\mathrm{bias}}} + \tilde{\mathcal{O}}\left(\dfrac{t_{\mathrm{mix}}t_{\mathrm{hit}}}{\delta T^{1/5}}\right) + \sqrt{A}G^2t_{\mathrm{mix}}\left(1+\dfrac{1}{\mu_F}\right)\tilde{\mathcal{O}}\left( T^{-1/5}\right)
        \end{split}
    \end{align*}
  where $\pi^*$ is a solution to \eqref{eq:def_unparametrized_optimization}. In the above bounds, we write only the dominating terms of $T$.
\end{theorem}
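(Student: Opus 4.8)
The plan is to start from the global Lagrange convergence of Lemma~\ref{Lemma_global_Lagrange} and peel off the objective and the constraint separately. Write $\Delta$ for the right-hand side of Lemma~\ref{Lemma_global_Lagrange}; with the choices $\beta=T^{-2/5}$, $\xi=2/5$, $K=T/H$, $H=16t_{\mathrm{mix}}t_{\mathrm{hit}}T^{2/5}(\log_2 T)^2$, every term in it of the form $T^{-(1-\xi)/2},T^{-\xi},T^{-(1-\xi)},\beta$ is dominated by $\sqrt{\beta}=T^{-\xi/2}=T^{-1/5}$, so $\Delta=\sqrt{\epsilon_{\mathrm{bias}}}+\tfrac{\sqrt{A}G^2 t_{\mathrm{mix}}}{\delta}\big(1+\tfrac1{\mu_F}\big)\tilde{\mathcal O}(T^{-1/5})$, keeping only the leading $T$-dependence. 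Two tools do the peeling: a telescoping drift bound on the dual recursion in~\eqref{udpates_algorotihm}, and strong duality of the \emph{unparameterized} problem~\eqref{eq:def_unparametrized_optimization} (Lemma~\ref{lem.constraint}). Throughout we replace $\hat J_c(\theta_k)$ by $J_c(\theta_k)$ at a negligible cost: since $\theta_k,\lambda_k$ are fixed before epoch $k$ and its first $N=4t_{\mathrm{mix}}\log_2 T$ samples are dropped, $\mathbf{E}[\hat J_c(\theta_k)\mid\mathcal F_k]$ differs from $J_c(\theta_k)$ by a geometrically small mixing error, so each such swap costs only $\tilde{\mathcal O}(\delta^{-1}T^{-1})$.

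\emph{Objective regret.} Feasibility of $\pi^*$ in~\eqref{eq:def_unparametrized_optimization} gives $J_c^{\pi^*}\ge0$, hence $J_{\mathrm L}(\pi^*,\lambda_k)=J_r^{\pi^*}+\lambda_k J_c^{\pi^*}\ge J_r^{\pi^*}$ for $\lambda_k\ge0$; plugging this into Lemma~\ref{Lemma_global_Lagrange} and expanding $J_{\mathrm L}(\theta_k,\lambda_k)=J_r(\theta_k)+\lambda_k J_c(\theta_k)$ yields
\[
\frac1K\sum_{k=1}^K\mathbf{E}\big[J_r^{\pi^*}-J_r(\theta_k)\big]\le \Delta+\frac1K\sum_{k=1}^K\mathbf{E}\big[\lambda_k J_c(\theta_k)\big].
\]
Squaring the dual update, using non-expansiveness of $\mathcal P_{[0,2/\delta]}$ and $|\hat J_c|\le1$, gives $\lambda_{k+1}^2\le\lambda_k^2-2\beta\lambda_k\hat J_c(\theta_k)+\beta^2$; telescoping and passing from $\hat J_c$ to $J_c$ then gives $\tfrac1K\sum_k\mathbf{E}[\lambda_k J_c(\theta_k)]\le\tfrac{\lambda_1^2}{2\beta K}+\tfrac\beta2+\tilde{\mathcal O}(\delta^{-1}T^{-1})$. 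With the stated parameters $\tfrac{\lambda_1^2}{2\beta K}=\tilde{\mathcal O}(t_{\mathrm{mix}}t_{\mathrm{hit}}\delta^{-2}T^{-1/5})$ and $\beta/2=\tilde{\mathcal O}(T^{-2/5})$, so the right-hand side collapses to $\sqrt{\epsilon_{\mathrm{bias}}}+\tfrac{\sqrt{A}G^2 t_{\mathrm{mix}}}{\delta}(1+\tfrac1{\mu_F})\tilde{\mathcal O}(T^{-1/5})$, which is the first bound.

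\emph{Constraint violation.} Now $\lambda_k J_c^{\pi^*}$ cannot be dropped; instead we run the drift argument against a comparator at the boundary of the projection interval. For any $\lambda\in[0,2/\delta]$, non-expansiveness gives $(\lambda_{k+1}-\lambda)^2\le(\lambda_k-\lambda)^2-2\beta\hat J_c(\theta_k)(\lambda_k-\lambda)+\beta^2$; telescoping and converting $\hat J_c\to J_c$,
\[
-\lambda\,\tfrac1K\!\sum_{k}\mathbf{E}[J_c(\theta_k)]\le\tfrac{(\lambda_1-\lambda)^2}{2\beta K}+\tfrac\beta2-\tfrac1K\!\sum_{k}\mathbf{E}[\lambda_k J_c(\theta_k)]+\tilde{\mathcal O}(\delta^{-1}T^{-1}).
\]
From Lemma~\ref{Lemma_global_Lagrange} and $J_c^{\pi^*}\ge0$ we get $-\tfrac1K\sum_k\mathbf{E}[\lambda_k J_c(\theta_k)]\le\Delta-\big(J_r^{\pi^*}-\tfrac1K\sum_k\mathbf{E}[J_r(\theta_k)]\big)$, and Lemma~\ref{lem.constraint} supplies $\tfrac1K\sum_k\mathbf{E}[J_r(\theta_k)]+\lambda^*\tfrac1K\sum_k\mathbf{E}[J_c(\theta_k)]\le J_r^{\pi^*}$ — this is where strong duality of~\eqref{eq:def_unparametrized_optimization} enters, through the fact that the averaged occupancy measure $\tfrac1K\sum_k d^{\pi_{\theta_k}}\!\otimes\pi_{\theta_k}$ is again a valid occupancy measure and $J_r,J_c$ are linear in it, so the corresponding policy cannot exceed the Lagrangian value $\max_\pi J_{\mathrm L}(\pi,\lambda^*)=J_r^{\pi^*}$. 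Combining the three displays,
\[
(\lambda^*-\lambda)\,\tfrac1K\sum_{k}\mathbf{E}[J_c(\theta_k)]\le\tfrac{(\lambda_1-\lambda)^2}{2\beta K}+\tfrac\beta2+\Delta+\tilde{\mathcal O}(\delta^{-1}T^{-1}).
\]
Choose $\lambda=2/\delta$ and use $\lambda^*\le1/\delta$ (Lemma~\ref{lem.boundness}, which also justifies the projection radius), so $2/\delta-\lambda^*\ge1/\delta$: when the violation $-\tfrac1K\sum_k\mathbf{E}[J_c(\theta_k)]$ is positive, the left side is at least $\tfrac1\delta$ times it, whence $-\tfrac1K\sum_k\mathbf{E}[J_c(\theta_k)]\le\delta\big(\tfrac{(\lambda_1-2/\delta)^2}{2\beta K}+\tfrac\beta2+\Delta+\tilde{\mathcal O}(\delta^{-1}T^{-1})\big)$. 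With the stated parameters $\delta\Delta=\delta\sqrt{\epsilon_{\mathrm{bias}}}+\sqrt{A}G^2 t_{\mathrm{mix}}(1+\tfrac1{\mu_F})\tilde{\mathcal O}(T^{-1/5})$ and $\delta\tfrac{(\lambda_1-2/\delta)^2}{2\beta K}=\tilde{\mathcal O}(t_{\mathrm{mix}}t_{\mathrm{hit}}\delta^{-1}T^{-1/5})$, giving the second bound.

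\emph{Main obstacle.} The delicate direction is the constraint violation, precisely because strong duality fails for the \emph{parameterized} saddle point~\eqref{eq:def_saddle_point_opt}, so $J_c(\theta_k)$ cannot be lower-bounded directly from the Lagrange convergence. The resolution has two ingredients whose $\delta$-bookkeeping must be done with care: (a) taking the dual comparator $\lambda=2/\delta$ at the boundary of $[0,2/\delta]$ so that $\lambda^*-\lambda$ stays $\le-1/\delta$ — this is why the projection radius is $2/\delta$ and why Lemma~\ref{lem.boundness} is needed — and (b) routing the possibly-negative excess objective $\tfrac1K\sum_k J_r(\theta_k)-J_r^{\pi^*}$ through the \emph{unparameterized} strong duality of Lemma~\ref{lem.constraint} rather than through the parameterized problem. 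Tracking the $\delta$ factors so that the violation bound ends up with $\delta\sqrt{\epsilon_{\mathrm{bias}}}$ (not $\sqrt{\epsilon_{\mathrm{bias}}}$) and $\delta^{-1}$ in the leading term is the subtle part; the remaining pieces — Lemma~\ref{Lemma_global_Lagrange}, the dual telescoping, and the mixing estimate for $\hat J_c$ — are routine.
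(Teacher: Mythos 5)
Your proposal is correct and follows essentially the same route as the paper's proof: the same expansion of Lemma \ref{Lemma_global_Lagrange} with feasibility $J_c^{\pi^*}\ge 0$, the same squared dual-update telescoping (with comparator $0$ for the objective and the boundary comparator $\lambda=2/\delta$ for the violation), and the same appeal to unparameterized strong duality through the mixture policy $\bar\pi$, occupancy-measure linearity, and Lemma \ref{lem.boundness}/\ref{lem.constraint} to disentangle the violation from the possibly negative objective gap. The only differences are cosmetic: you inline the key inequality $J_{\mathrm{L}}(\bar\pi,\lambda^*)\le J_r^{\pi^*}$ instead of invoking Lemma \ref{lem.constraint} as a black box, and you carry a general $\lambda_1$ (hence an extra $\lambda_1^2/(2\beta K)=\tilde{\mathcal{O}}(T^{-1/5})$ term in the objective bound) where the paper implicitly takes $\lambda_1=0$; neither affects the stated rates.
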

Theorem \ref{thm_global_convergence} establishes $\tilde{\mathcal{O}}(T^{-1/5})$ convergence rates for both the objective and the constraint violation.

 \vspace{-.1in}
\section{Regret and Violation Analysis}
\label{sec_regret}
 \vspace{-.1in}
In this section, we use the convergence result of the previous section to bound the expected regret and constraint violation of Algorithm \ref{alg:PG_MAG}. Note that the regret and constraint violation decompose as,
\begin{equation}
    \begin{split}
        &\mathrm{Reg}_T = \sum_{t=0}^{T-1} \left(J_r^{\pi^*} - r(s_t, a_t)\right)=H\sum_{k=1}^{K}\left(J_r^{\pi^*}-J({\theta_k})\right)+\sum_{k=1}^{K}\sum_{t\in\mathcal{I}_k} \left(J(\theta_k)-r(s_t, a_t)\right)\\
        &\mathrm{Vio}_T =\sum_{t=0}^{T-1} \left(-c(s_t, a_t)\right)=H\sum_{k=1}^{K}\left(-J_c({\theta_k})\right)+\sum_{k=1}^{K}\sum_{t\in\mathcal{I}_k} \left(J_c(\theta_k)-c(s_t, a_t)\right)\nonumber
    \end{split}
\end{equation}
    
where $\mathcal{I}_k\triangleq \{(k-1)H, \cdots, kH-1\}$. Observe that the expectation of the first terms in regret and violation can be bounded by Theorem \ref{thm_global_convergence}. The expectation of the second term in regret and violation can be expanded as follows,
\begin{equation}\label{eq_38}
    \begin{aligned}
        &\mathbf{E}\left[\sum_{k=1}^{K}\sum_{t\in\mathcal{I}_k} \left(J_g(\theta_k)-g(s_t, a_t)\right)\right]\overset{(a)}{=}\mathbf{E}\left[\sum_{k=1}^{K}\sum_{t\in\mathcal{I}_k} \mathbf{E}_{s'\sim P(\cdot|s_t, a_t)}[V_g^{\pi_{\theta_k}}(s')]-Q_g^{\pi_{\theta_k}}(s_t, a_t)\right]\\
        &\overset{(b)}{=}\mathbf{E}\left[\sum_{k=}^{K}\sum_{t\in\mathcal{I}_k} V_g^{\pi_{\theta_k}}(s_{t+1})-V_g^{\pi_{\theta_k}}(s_t)\right]=\mathbf{E}\left[\sum_{k=1}^{K} V_g^{\pi_{\theta_k}}(s_{kH})-V_g^{\pi_{\theta_k}}(s_{(k-1)H})\right]\\
        &=\mathbf{E}\left[\sum_{k=1}^{K-1} V_g^{\pi_{\theta_{k+1}}}(s_{kH})-V_g^{\pi_{\theta_k}}(s_{kH})\right]+\mathbf{E}\left[V_g^{\pi_{\theta_K}}(s_{T})-V_g^{\pi_{\theta_0}}(s_{0})\right]
    \end{aligned}
\end{equation}

where $g\in\{r, c\}$. Equality $(a)$ uses the Bellman equation and $(b)$ follows from the definition of $Q_g$. The first term in the last line of Eq. \eqref{eq_38} can be upper bounded by Lemma \ref{lemma_last} (stated below). On the other hand, the second term can be upper bounded as $\mathcal{O}(t_{\mathrm{mix}})$ using Lemma \ref{lemma_aux_2}.

\begin{lemma}
    \label{lemma_last}
    If assumptions \ref{ass_1} and \ref{ass_score} hold, then for $K=\frac{T}{H}$ where $H=16t_{\mathrm{mix}}t_{\mathrm{hit}}T^{\frac{2}{5}}(\log_2 T)^2$, the following inequalities hold $\forall k$, $\forall (s, a)$ and sufficiently large $T$: 
    \begin{align*}
        &(a)\,|\pi_{\theta_{k+1}}(a|s)-\pi_{\theta_{k}}(a|s)|\leq G\norm{\theta_{k+1}-\theta_k}\\
        &(b)\, \sum_{k=1}^{K}\mathbf{E}|J_g(\theta_{k+1})-J_g(\theta_k)|
        \leq\mathcal{\tilde{O}}\left(\dfrac{\alpha AG}{\delta t_{\mathrm{hit}}}\left[\left(\sqrt{A}G t_{\mathrm{mix}}+\delta\right)T^{\frac{2}{5}}+\sqrt{Lt_{\mathrm{mix}}t_{\mathrm{hit}}}T^{\frac{3}{10}}\right]\right)\\
        &(c)\, \sum_{k=1}^K\mathbf{E}|V_g^{\pi_{\theta_{k+1}}}(s_k) - V_g^{\pi_{\theta_{k}}}(s_k)| 
        \leq  \mathcal{\tilde{O}}\left(\dfrac{\alpha AG t_{\mathrm{mix}}}{\delta t_{\mathrm{hit}}}\left[\left(\sqrt{A}G t_{\mathrm{mix}}+\delta\right)T^{\frac{2}{5}}+\sqrt{Lt_{\mathrm{mix}}t_{\mathrm{hit}}}T^{\frac{3}{10}}\right]\right)
    \end{align*}
    where $g\in\{r, c\}$, and $\{s_k\}_{k=1}^K$ is an arbitrary sequence of states.
\end{lemma}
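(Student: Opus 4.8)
The plan is to prove the three parts sequentially, since each builds on the previous one, and then to bound the key quantity $\sum_k \mathbf{E}\|\theta_{k+1}-\theta_k\|$ using the update rule and the second-moment bound on $\omega_k$ from \eqref{eq_34}. For part $(a)$, I would simply apply the mean value theorem: by Assumption \ref{ass_score}, $\log\pi_\theta(a|s)$ has gradient bounded by $G$, so $\pi_\theta(a|s)=\exp(\log\pi_\theta(a|s))$ is Lipschitz along the segment joining $\theta_k$ and $\theta_{k+1}$. Actually, the cleanest route is to note $|\pi_{\theta_{k+1}}(a|s)-\pi_{\theta_k}(a|s)| \le \sup_{\theta}\|\nabla_\theta \pi_\theta(a|s)\|\,\|\theta_{k+1}-\theta_k\| = \sup_\theta \pi_\theta(a|s)\|\nabla_\theta\log\pi_\theta(a|s)\|\,\|\theta_{k+1}-\theta_k\| \le G\|\theta_{k+1}-\theta_k\|$, using $\pi_\theta(a|s)\le 1$.

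For part $(b)$, I would first establish a one-step Lipschitz bound $|J_g(\theta_{k+1})-J_g(\theta_k)| \le C\,\|\theta_{k+1}-\theta_k\|$ for an appropriate constant $C$ involving $A$, $G$, and a hitting-time or mixing-time factor. This follows by writing $J_g(\theta)=(d^{\pi_\theta})^T g^{\pi_\theta}$ and bounding the derivative: $g^{\pi_\theta}$ changes at rate $O(AG)$ (sum over $A$ actions, each with score bounded by $G$ and $|g|\le 1$), and $d^{\pi_\theta}$ changes at a rate controlled by the mixing/hitting time via standard perturbation bounds for stationary distributions of ergodic chains (this is a known sensitivity lemma, of the type used in \citep{bai2023regret}). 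Then $\sum_{k=1}^K \mathbf{E}|J_g(\theta_{k+1})-J_g(\theta_k)| \le C \sum_{k=1}^K \mathbf{E}\|\theta_{k+1}-\theta_k\| = C\alpha \sum_{k=1}^K \mathbf{E}\|\omega_k\|$. By Jensen/Cauchy–Schwarz, $\sum_{k=1}^K \mathbf{E}\|\omega_k\| \le K\sqrt{\frac{1}{K}\sum_{k=1}^K\mathbf{E}\|\omega_k\|^2}$, and \eqref{eq_34} gives $\frac{1}{K}\sum_k\mathbf{E}\|\omega_k\|^2 \le \tilde{\mathcal{O}}(\frac{AG^2t_{\mathrm{mix}}^2}{\delta^2 T^\xi}+\frac{Lt_{\mathrm{mix}}t_{\mathrm{hit}}}{\delta^2 T^{1-\xi}}) + \beta$. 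With $\xi=2/5$ and $\beta=T^{-2/5}$, the square root is $\tilde{\mathcal{O}}(\frac{\sqrt{A}Gt_{\mathrm{mix}}+\delta}{\delta}T^{-1/5} + \frac{\sqrt{Lt_{\mathrm{mix}}t_{\mathrm{hit}}}}{\delta}T^{-3/10})$, so $K$ times this is, after substituting $K=T/H$ and $H=16t_{\mathrm{mix}}t_{\mathrm{hit}}T^{2/5}(\log_2 T)^2$, of the form $\frac{1}{\delta t_{\mathrm{mix}}t_{\mathrm{hit}}}f(T)$ up to constants; carrying the constant $C=\tilde O(AG\cdot(\text{mixing factor}))$ and $\alpha=\frac{1}{4L(1+2/\delta)}$ through should yield exactly the claimed $\frac{\alpha AG}{\delta t_{\mathrm{hit}}}f(T)$ with $f(T)=\tilde{\mathcal{O}}([\sqrt{A}Gt_{\mathrm{mix}}+\delta]T^{2/5}+\sqrt{Lt_{\mathrm{mix}}t_{\mathrm{hit}}}T^{3/10})$.

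For part $(c)$, I would use the analogous sensitivity bound for value functions: $|V_g^{\pi_{\theta_{k+1}}}(s)-V_g^{\pi_{\theta_k}}(s)|$ is controlled by $\|\theta_{k+1}-\theta_k\|$ times a factor that picks up an extra $t_{\mathrm{mix}}$ compared to part $(b)$, because $V_g$ is an infinite horizon sum of centered rewards whose effective length is $O(t_{\mathrm{mix}})$ (cf. \eqref{def_v_pi_theta_s}). Concretely, writing $V_g^{\pi_\theta}(s)=\sum_{t\ge 0}\sum_{s'}[(P^{\pi_\theta})^t(s,s')-d^{\pi_\theta}(s')]g^{\pi_\theta}(s')$ and differentiating termwise, both the $(P^{\pi_\theta})^t$ factors and $g^{\pi_\theta}$ contribute, and the geometric mixing bound $\|(P^{\pi_\theta})^t(s,\cdot)-d^{\pi_\theta}\|\le 2^{-\lfloor t/t_{\mathrm{mix}}\rfloor}$ makes the sum converge with an $O(t_{\mathrm{mix}})$ prefactor; the action-sum contributes the $AG$ as before. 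This gives the extra $t_{\mathrm{mix}}$ in $(c)$ relative to $(b)$, and summing over $k$ with the same $\omega_k$-moment bound yields $t_{\mathrm{mix}}\frac{\alpha AG}{\delta t_{\mathrm{hit}}}f(T)$.

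The main obstacle I anticipate is establishing the two perturbation/sensitivity lemmas — the Lipschitz continuity of $\theta\mapsto J_g(\theta)$ and of $\theta\mapsto V_g^{\pi_\theta}(s)$ with the correct mixing-time and hitting-time dependence — and, more delicately, tracking all the constants ($A$, $G$, $t_{\mathrm{mix}}$, $t_{\mathrm{hit}}$, $\delta$, $L$, $\alpha$) precisely enough that the final bounds collapse to the clean forms stated, with the hitting time appearing in the denominator exactly once. The sensitivity of stationary distributions of ergodic Markov chains is standard but the bookkeeping of how $t_{\mathrm{hit}}$ enters (via $\|d^{\pi_\theta}\|_\infty$-type quantities and the normalization $H = 16 t_{\mathrm{mix}}t_{\mathrm{hit}}T^{2/5}(\log_2 T)^2$ in $K=T/H$) is where errors would creep in; everything else is a routine chain of Cauchy–Schwarz, Jensen, and substitution of the parameter choices.
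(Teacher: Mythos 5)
Your overall skeleton is exactly the paper's: reduce each quantity to a multiple of $\sum_{k}\mathbf{E}\Vert\theta_{k+1}-\theta_k\Vert=\alpha\sum_k\mathbf{E}\Vert\omega_k\Vert$, apply Cauchy--Schwarz and Jensen to pass to $\big(\sum_k\mathbf{E}\Vert\omega_k\Vert^2\big)^{1/2}$, and substitute \eqref{eq_34} together with $K=T/H$; part (a) is verbatim the paper's mean-value-theorem argument. The differences are in how you obtain the one-step sensitivity bounds. For (b), the paper does not perturb the stationary distribution at all: it invokes the exact difference identity $J_g^{\pi}-J_g^{\pi'}=\sum_{s,a}d^{\pi}(s)(\pi(a|s)-\pi'(a|s))Q_g^{\pi'}(s,a)$ (Lemma \ref{lemma_aux_5}), bounds $|Q_g^{\pi'}|\leq 6t_{\mathrm{mix}}$ by Lemma \ref{lemma_aux_2}, and uses part (a) on the policy difference; the mixing time enters only through the $Q$-bound and the ``standard perturbation bound for stationary distributions'' you flag as your main obstacle is never needed. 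Your route can be made to work, but the paper's is the cleaner one and is what produces the $AGt_{\mathrm{mix}}$ constant directly.

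For (c) there is a genuine gap in your argument as written. Differentiating the series $\sum_{t\geq 0}[(P^{\pi_\theta})^t(s,\cdot)-d^{\pi_\theta}]^Tg^{\pi_\theta}$ termwise and appealing to the geometric mixing bound does not work: the mixing bound controls the \emph{terms}, not their \emph{derivatives}. The derivative (or the policy-to-policy difference) of the $t$-th term grows like $tAG\Vert\theta_{k+1}-\theta_k\Vert$ by a product-rule/telescoping argument over the $t$ factors of $P^{\pi_\theta}$, and nothing forces this to decay geometrically in $t$, so the infinite sum of derivative bounds does not obviously converge, let alone with an $O(t_{\mathrm{mix}})$ prefactor. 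The paper avoids this by splitting the series at $N=4t_{\mathrm{mix}}\log_2 T$ as in \eqref{eq_49}: the tail beyond $N$ is bounded by $2/T^3$ for \emph{each} of the two policies via Lemma \ref{lemma_aux_3}, and the head is handled with the recursion \eqref{eq_long_recursion}, $\Vert((P^{\pi_{\theta_{k+1}}})^t-(P^{\pi_{\theta_k}})^t)g^{\pi_{\theta_{k+1}}}\Vert_{\infty}\leq tAG\alpha\Vert\omega_k\Vert$, whose sum over $t<N$ contributes a factor $N^2=\tilde{\mathcal{O}}(t_{\mathrm{mix}}^2)$ (not $t_{\mathrm{mix}}$); one of these factors of $t_{\mathrm{mix}}$ is then cancelled by the $t_{\mathrm{mix}}t_{\mathrm{hit}}$ sitting inside $H$ when $K=T/H$ is substituted, which is how the stated bound ends up being only $t_{\mathrm{mix}}$ times the bound in (b). You also need the separate term $N\sum_k\mathbf{E}|J_g(\theta_{k+1})-J_g(\theta_k)|$ coming from the $-NJ_g(\theta)$ piece of the truncated expansion, which is controlled by part (b). So the conclusion you state is correct, but the truncation-plus-tail-bound device is an essential missing ingredient, not mere bookkeeping.
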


Lemma \ref{lemma_last}  states that the obtained policy parameters are such that the average consecutive difference in the sequence $\{J_g(\theta_k)\}_{k=1}^K$, $g\in\{r, c\}$ decreases with time horizon, $T$. We would like to emphasize that Lemma \ref{lemma_last} works for both reward and constraint functions. Hence, we can prove our regret guarantee and constraint violation as shown below.

\begin{theorem}
    \label{thm_regret}
    If assumptions \ref{ass_1}$-$\ref{ass_4} hold, $J_g(\cdot)$'s are $L$-smooth, $\forall g\in\{r, c\}$ and $T$ are sufficiently large, then our proposed Algorithm \ref{alg:PG_MAG} achieves the following expected regret and constraint violation bounds with learning rates $\alpha=\frac{1}{4L(1+\frac{2}{\delta})}$ and $\beta=T^{-2/5}$.
    \begin{align}\label{eq_regret}
        &\mathbf{E}\left[\mathrm{Reg}_{T}\right] \leq T\sqrt{\epsilon_{\mathrm{bias}}} +\tilde{\mathcal{O}}(T^{4/5})+\mathcal{O}(t_{\mathrm{mix}})\\
        &\mathbf{E}\left[\mathrm{Vio}_{T}\right] \leq T\delta\sqrt{\epsilon_{\mathrm{bias}}} +\tilde{\mathcal{O}}(T^{4/5})+\mathcal{O}(t_{\mathrm{mix}})
    \end{align}
\end{theorem}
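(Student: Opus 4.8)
The plan is to combine the two‑term decompositions of $\mathrm{Reg}_T$ and $\mathrm{Vio}_T$ displayed just above the statement with the global convergence rates of Theorem \ref{thm_global_convergence} and the consecutive‑difference estimates of Lemma \ref{lemma_last}. Throughout, let $g\in\{r,c\}$ and read the ``target gain'' as $J_r^{\pi^*}$ when $g=r$ and as $0$ when $g=c$, so that both quantities are handled by a single argument and we only specialize at the very end.

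First I would bound the \emph{epoch-averaged} term. For the regret this is $H\sum_{k=1}^{K}\big(J_r^{\pi^*}-J_r(\theta_k)\big)$; taking expectations and using $HK=T$ together with the first inequality of Theorem \ref{thm_global_convergence} gives $\mathbf{E}\big[H\sum_{k=1}^{K}(J_r^{\pi^*}-J_r(\theta_k))\big]=T\cdot\frac{1}{K}\sum_{k=1}^{K}\mathbf{E}(J_r^{\pi^*}-J_r(\theta_k))\le T\sqrt{\epsilon_{\mathrm{bias}}}+\tilde{\mathcal{O}}(T^{4/5})$, since $T\cdot\tilde{\mathcal{O}}(T^{-1/5})=\tilde{\mathcal{O}}(T^{4/5})$. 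The corresponding violation term $H\sum_{k=1}^{K}(-J_c(\theta_k))$ is bounded the same way by the second inequality of Theorem \ref{thm_global_convergence}, yielding $T\delta\sqrt{\epsilon_{\mathrm{bias}}}+\tilde{\mathcal{O}}(T^{4/5})$. These already contribute the claimed dominant terms.

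Next I would bound the \emph{within-epoch} term $\sum_{k=1}^{K}\sum_{t\in\mathcal{I}_k}\big(J_g(\theta_k)-g(s_t,a_t)\big)$. Applying the Bellman equation and the definition of $Q_g$ exactly as in \eqref{eq_38}, this telescopes inside each epoch and then across epochs down to $\mathbf{E}\big[\sum_{k=1}^{K-1}\big(V_g^{\pi_{\theta_{k+1}}}(s_{kH})-V_g^{\pi_{\theta_k}}(s_{kH})\big)\big]+\mathbf{E}\big[V_g^{\pi_{\theta_K}}(s_T)-V_g^{\pi_{\theta_0}}(s_0)\big]$. The boundary difference is $\mathcal{O}(t_{\mathrm{mix}})$ because under ergodicity $\|V_g^{\pi_\theta}\|_\infty=\mathcal{O}(t_{\mathrm{mix}})$ (Lemma \ref{lemma_aux_2}), and the consecutive-difference sum is $t_{\mathrm{mix}}\frac{\alpha AG}{\delta t_{\mathrm{hit}}}f(T)$ by part (c) of Lemma \ref{lemma_last}, with $f(T)=\tilde{\mathcal{O}}([\sqrt{A}Gt_{\mathrm{mix}}+\delta]T^{2/5}+\sqrt{Lt_{\mathrm{mix}}t_{\mathrm{hit}}}T^{3/10})$. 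Since $\alpha=\tfrac{1}{4L(1+2/\delta)}$ is independent of $T$, this whole contribution is $\tilde{\mathcal{O}}(T^{2/5})$, hence absorbed into $\tilde{\mathcal{O}}(T^{4/5})$. Adding the two terms for $g=r$ and $g=c$ respectively gives the two displayed bounds.

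I expect the main difficulty to lie not in this final assembly --- which is essentially bookkeeping --- but upstream, in Lemma \ref{lemma_last}: controlling the summed policy drift $\sum_k\|\theta_{k+1}-\theta_k\|=\alpha\sum_k\|\omega_k\|$ requires the second-moment bound $\frac1K\sum_k\mathbf{E}\|\omega_k\|^2=\tilde{\mathcal{O}}(\beta+T^{-\xi}+T^{-(1-\xi)})$ from \eqref{eq_34} together with a Cauchy--Schwarz step across the $K$ epochs, and then translating a bound on $\sum_k\|\theta_{k+1}-\theta_k\|$ into bounds on $\sum_k|J_g(\theta_{k+1})-J_g(\theta_k)|$ and $\sum_k|V_g^{\pi_{\theta_{k+1}}}(s_k)-V_g^{\pi_{\theta_k}}(s_k)|$ via Lipschitzness of $\pi_\theta$, $J_g$, and $V_g^{\pi_\theta}$ in $\theta$. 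Two points deserve care in the write-up: (i) the within-epoch term carries \emph{no} $\epsilon_{\mathrm{bias}}$ dependence, so the $T\sqrt{\epsilon_{\mathrm{bias}}}$ (resp.\ $T\delta\sqrt{\epsilon_{\mathrm{bias}}}$) term originates solely from Theorem \ref{thm_global_convergence}; and (ii) the choices $\xi=2/5$ and $\beta=T^{-2/5}$ are exactly those making $T\sqrt{\beta}$ and $T\cdot T^{-\xi/2}$ both equal to $T^{4/5}$ while keeping the remaining pieces lower order, so no sharper rate is attainable by this argument.
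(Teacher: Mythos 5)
Your proposal is correct and follows essentially the same route as the paper: the same two-term decomposition, the epoch-averaged term bounded by multiplying Theorem \ref{thm_global_convergence}'s rates by $T$, and the within-epoch term telescoped via \eqref{eq_38} and controlled by Lemma \ref{lemma_last}(c) together with the $\mathcal{O}(t_{\mathrm{mix}})$ bound on value functions from Lemma \ref{lemma_aux_2}. Your remarks on where the real work lies (Lemma \ref{lemma_last}) and on the optimality of $\xi=\eta=2/5$ also match the paper's treatment.
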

 
The detailed expressions of these bounds are provided in the Appendix. Here, we keep only those terms that emphasize the order of $T$. Note that our result outperforms the state-of-the-art model-free tabular result in average-reward CMDP \cite{wei2022provably}. However, our regret bound is worse than that achievable in average reward unconstrained MDP with general parameterization \cite{bai2023regret}. Interestingly, the gap between the convergence results of constrained and unconstrained setups is a common observation across the literature. For example, in the tabular model-free average reward MDP, the state-of-the-art regret bound for unconstrained setup, $\tilde{\mathcal{O}}(T^{1/2})$ \cite{wei2020model}, is better than that in the constrained setup, $\tilde{\mathcal{O}}(T^{5/6})$ \cite{wei2022provably}.


\section{Conclusions}

This paper establishes the first sublinear regret and constraint violation bounds in the average reward CMDP setup with general parametrization (and do not assume the underlying constrained Markov Decision Process (CMDP) to be tabular or linear). We show that our proposed algorithm achieves  $\tilde{\mathcal{O}}(T^{4/5})$ regret and constraint violation bounds where  $T$ is the time horizon. Note that the state of the art in unconstrained counterpart is $\tilde{\mathcal{O}}({T}^{3/4})$. Closing this gap by designing more efficient algorithms is an open question in the average reward CMDP literature with the general parametrization. Moreover, our current algorithm requires the knowledge of mixing time. Relaxing such assumptions is another important future direction in realistic settings. For further discussions on future work directions, the readers are referred to \cite{monograph}. 

\section{Acknowledgement}

This research was supported in part by the National Science Foundation under grant CCF-2149588 and Cisco, Inc.

\bibliography{ref}

\begin{thebibliography}{10}

\bibitem{liu2021cmix}
Liu, C., N.~Geng, et~al.
\newblock Cmix: Deep multi-agent reinforcement learning with peak and average constraints.
\newblock In \emph{Machine Learning and Knowledge Discovery in Databases. Research Track: European Conference, ECML PKDD 2021, Bilbao, Spain, September 13--17, 2021, Proceedings, Part I 21}, pages 157--173. Springer, 2021.

\bibitem{al2019deeppool}
Al-Abbasi, A.~O., A.~Ghosh, V.~Aggarwal.
\newblock Deeppool: Distributed model-free algorithm for ride-sharing using deep reinforcement learning.
\newblock \emph{IEEE Transactions on Intelligent Transportation Systems}, 20(12):4714--4727, 2019.

\bibitem{ling2023cooperating}
Ling, L., W.~U. Mondal, S.~V. Ukkusuri.
\newblock Cooperating graph neural networks with deep reinforcement learning for vaccine prioritization.
\newblock \emph{arXiv preprint arXiv:2305.05163}, 2023.

\bibitem{bai2023achieving}
Bai, Q., A.~S. Bedi, V.~Aggarwal.
\newblock Achieving zero constraint violation for constrained reinforcement learning via conservative natural policy gradient primal-dual algorithm.
\newblock In \emph{Proceedings of the AAAI Conference on Artificial Intelligence}, pages 6737--6744. 2023.

\bibitem{agarwal2022concave}
Agarwal, M., Q.~Bai, V.~Aggarwal.
\newblock Concave utility reinforcement learning with zero-constraint violations.
\newblock \emph{Transactions on Machine Learning Research}, 2022.

\bibitem{chen2022learning}
Chen, L., R.~Jain, H.~Luo.
\newblock Learning infinite-horizon average-reward markov decision process with constraints.
\newblock In \emph{International Conference on Machine Learning}, pages 3246--3270. PMLR, 2022.

\bibitem{agarwal2022regret}
Agarwal, M., Q.~Bai, V.~Aggarwal.
\newblock Regret guarantees for model-based reinforcement learning with long-term average constraints.
\newblock In \emph{Uncertainty in Artificial Intelligence}, pages 22--31. PMLR, 2022.

\bibitem{wei2022provably}
Wei, H., X.~Liu, L.~Ying.
\newblock A provably-efficient model-free algorithm for infinite-horizon average-reward constrained markov decision processes.
\newblock In \emph{Proceedings of the AAAI Conference on Artificial Intelligence}, pages 3868--3876. 2022.

\bibitem{ghosh2022achieving}
Ghosh, A., X.~Zhou, N.~Shroff.
\newblock Achieving sub-linear regret in infinite horizon average reward constrained mdp with linear function approximation.
\newblock In \emph{The Eleventh International Conference on Learning Representations}. 2023.

\bibitem{agarwal2021theory}
Agarwal, A., S.~M. Kakade, J.~D. Lee, G.~Mahajan.
\newblock On the theory of policy gradient methods: Optimality, approximation, and distribution shift.
\newblock \emph{The Journal of Machine Learning Research}, 22(1):4431--4506, 2021.

\bibitem{mondal2023improved}
Mondal, W.~U., V.~Aggarwal.
\newblock Improved sample complexity analysis of natural policy gradient algorithm with general parameterization for infinite horizon discounted reward markov decision processes.
\newblock In \emph{International Conference on Artificial Intelligence and Statistics (AISTATS)}. 2024.

\bibitem{mondal2024sample}
---.
\newblock Sample-efficient constrained reinforcement learning with general parameterization.
\newblock \emph{arXiv preprint arXiv:2405.10624}, 2024.

\bibitem{bai2023regret}
Bai, Q., W.~U. Mondal, V.~Aggarwal.
\newblock Regret analysis of policy gradient algorithm for infinite horizon average reward markov decision processes.
\newblock In \emph{Proceedings of the AAAI Conference on Artificial Intelligence}. 2024.

\bibitem{ding2020natural}
Ding, D., K.~Zhang, T.~Basar, M.~Jovanovic.
\newblock Natural policy gradient primal-dual method for constrained markov decision processes.
\newblock \emph{Advances in Neural Information Processing Systems}, 33:8378--8390, 2020.

\bibitem{agrawal2017optimistic}
Agrawal, S., R.~Jia.
\newblock Optimistic posterior sampling for reinforcement learning: worst-case regret bounds.
\newblock \emph{Advances in Neural Information Processing Systems}, 30, 2017.

\bibitem{auer2008near}
Auer, P., T.~Jaksch, R.~Ortner.
\newblock Near-optimal regret bounds for reinforcement learning.
\newblock \emph{Advances in neural information processing systems}, 21, 2008.

\bibitem{wei2020model}
Wei, C.-Y., M.~J. Jahromi, H.~Luo, H.~Sharma, R.~Jain.
\newblock Model-free reinforcement learning in infinite-horizon average-reward markov decision processes.
\newblock In \emph{International conference on machine learning}, pages 10170--10180. PMLR, 2020.

\bibitem{bai2022achieving}
Bai, Q., A.~S. Bedi, M.~Agarwal, A.~Koppel, V.~Aggarwal.
\newblock Achieving zero constraint violation for constrained reinforcement learning via primal-dual approach.
\newblock In \emph{Proceedings of the AAAI Conference on Artificial Intelligence}, pages 3682--3689. 2022.

\bibitem{xu2021crpo}
Xu, T., Y.~Liang, G.~Lan.
\newblock Crpo: A new approach for safe reinforcement learning with convergence guarantee.
\newblock In \emph{International Conference on Machine Learning}, pages 11480--11491. PMLR, 2021.

\bibitem{efroni2020exploration}
Efroni, Y., S.~Mannor, M.~Pirotta.
\newblock Exploration-exploitation in constrained mdps.
\newblock \emph{arXiv preprint arXiv:2003.02189}, 2020.

\bibitem{qiu2020upper}
Qiu, S., X.~Wei, Z.~Yang, J.~Ye, Z.~Wang.
\newblock Upper confidence primal-dual reinforcement learning for cmdp with adversarial loss.
\newblock \emph{Advances in Neural Information Processing Systems}, 33:15277--15287, 2020.

\bibitem{germano2023best}
Germano, J., F.~E. Stradi, G.~Genalti, M.~Castiglioni, A.~Marchesi, N.~Gatti.
\newblock A best-of-both-worlds algorithm for constrained mdps with long-term constraints.
\newblock \emph{arXiv preprint arXiv:2304.14326}, 2023.

\bibitem{pesquerel2022imed}
Pesquerel, F., O.-A. Maillard.
\newblock Imed-rl: Regret optimal learning of ergodic markov decision processes.
\newblock In \emph{NeurIPS 2022-Thirty-sixth Conference on Neural Information Processing Systems}. 2022.

\bibitem{gong2020duality}
Gong, H., M.~Wang.
\newblock A duality approach for regret minimization in average-award ergodic markov decision processes.
\newblock In \emph{Learning for Dynamics and Control}, pages 862--883. PMLR, 2020.

\bibitem{sutton1999policy}
Sutton, R.~S., D.~McAllester, S.~Singh, Y.~Mansour.
\newblock Policy gradient methods for reinforcement learning with function approximation.
\newblock \emph{Advances in neural information processing systems}, 12, 1999.

\bibitem{lattimore2020bandit}
Lattimore, T., C.~Szepesv{\'a}ri.
\newblock \emph{Bandit algorithms}.
\newblock Cambridge University Press, 2020.

\bibitem{Alekh2020}
Agarwal, A., S.~M. Kakade, J.~D. Lee, G.~Mahajan.
\newblock Optimality and approximation with policy gradient methods in markov decision processes.
\newblock In \emph{Conference on Learning Theory}, pages 64--66. PMLR, 2020.

\bibitem{Mengdi2021}
Zhang, J., C.~Ni, C.~Szepesvari, M.~Wang.
\newblock On the convergence and sample efficiency of variance-reduced policy gradient method.
\newblock \emph{Advances in Neural Information Processing Systems}, 34:2228--2240, 2021.

\bibitem{liu2020improved}
Liu, Y., K.~Zhang, T.~Basar, W.~Yin.
\newblock An improved analysis of (variance-reduced) policy gradient and natural policy gradient methods.
\newblock \emph{Advances in Neural Information Processing Systems}, 33:7624--7636, 2020.

\bibitem{Chi2019}
Jin, C., Z.~Yang, Z.~Wang, M.~I. Jordan.
\newblock Provably efficient reinforcement learning with linear function approximation.
\newblock In J.~Abernethy, S.~Agarwal, eds., \emph{Proceedings of Thirty Third Conference on Learning Theory}, vol. 125 of \emph{Proceedings of Machine Learning Research}, pages 2137--2143. PMLR, 2020.

\bibitem{Lingxiao2019}
Wang, L., Q.~Cai, Z.~Yang, Z.~Wang.
\newblock Neural policy gradient methods: Global optimality and rates of convergence.
\newblock In \emph{International Conference on Learning Representations}. 2019.

\bibitem{yuan2022general}
Yuan, R., R.~M. Gower, A.~Lazaric.
\newblock A general sample complexity analysis of vanilla policy gradient.
\newblock In \emph{International Conference on Artificial Intelligence and Statistics}, pages 3332--3380. PMLR, 2022.

\bibitem{fatkhullin2023stochastic}
Fatkhullin, I., A.~Barakat, A.~Kireeva, N.~He.
\newblock Stochastic policy gradient methods: Improved sample complexity for fisher-non-degenerate policies.
\newblock In \emph{International Conference on Machine Learning}, pages 9827--9869. PMLR, 2023.

\bibitem{mondal2023mean}
Mondal, W.~U., V.~Aggarwal, S.~V. Ukkusuri.
\newblock Mean-field control based approximation of multi-agent reinforcement learning in presence of a non-decomposable shared global state.
\newblock \emph{Transactions on Machine Learning Research}, 2023.

\bibitem{monograph}
Aggarwal, V., W.~U. Mondal, Q.~Bai.
\newblock Constrained reinforcement learning with average reward objective: Model-based and model-free algorithms.
\newblock \emph{Found. Trends Optim.}, 6(4):193–298, 2024.

\bibitem{dorfman2022adapting}
Dorfman, R., K.~Y. Levy.
\newblock Adapting to mixing time in stochastic optimization with markovian data.
\newblock In \emph{International Conference on Machine Learning}, pages 5429--5446. PMLR, 2022.

\bibitem{ding2023convergence}
Ding, D., K.~Zhang, J.~Duan, T.~Ba{\c{s}}ar, M.~R. Jovanovi{\'c}.
\newblock Convergence and sample complexity of natural policy gradient primal-dual methods for constrained mdps.
\newblock \emph{arXiv preprint arXiv:2206.02346}, 2022.

\bibitem{bai2023provably}
Bai, Q., V.~Aggarwal, A.~Gattami.
\newblock Provably sample-efficient model-free algorithm for mdps with peak constraints.
\newblock \emph{Journal of Machine Learning Research}, 24(60):1--25, 2023.

\end{thebibliography}
\bibliographystyle{neurips}
\newpage
\appendix

\section{Proofs for Lemmas in Section \ref{sec_method}}
\subsection{Proof of Lemma \ref{lemma_grad_compute}}
Since the first step of the proof works in the same way for functions $J_r$ and $J_c$, we use the generic notations $J_g, V_g, Q_g$ where $g=r,c$ and derive the following.
    \begin{equation}
    \label{eq_22}
        \begin{aligned}
        &\nabla_{\theta} V_g^{\pi_{\theta_k}}(s)
        =\nabla_{\theta}\bigg(\sum_{a}\pi_{\theta}(a|s)Q_g^{\pi_{\theta}}(s, a)\bigg)\\
        &=\sum_{a}\bigg(\nabla_{\theta}\pi_{\theta}(a|s)\bigg)Q_g^{\pi_{\theta}}(s, a)+\sum_{a}\pi_\theta(a|s)\nabla_{\theta} Q_g^{\pi_{\theta}}(s, a)\\
        &\overset{(a)}=\sum_{a}\pi_{\theta}(a|s)\bigg(\nabla_{\theta}\log\pi_{\theta}(a|s)\bigg)Q_g^{\pi_{\theta}}(s, a)+\sum_{a}\pi_\theta(a|s)\nabla_{\theta} \bigg(g(s, a)-J_g(\theta)+\sum_{s'}P(s'|s, a)V_g^{\pi_{\theta}}(s')\bigg)\\
        &=\sum_{a}\pi_\theta(a|s)\bigg(\nabla_{\theta}\log\pi_{\theta}(a|s)\bigg)Q_g^{\pi_{\theta}}(s, a)+\sum_{a}\pi_\theta(a|s) \bigg(\sum_{s'}P(s'|s,a)\nabla_{\theta} V_g^{\pi_{\theta}}(s')\bigg) - \nabla_{\theta}J_g(\theta)
        \end{aligned}
    \end{equation}
    where the step (a) is a consequence of $\nabla_{\theta}\log\pi_{\theta}=\frac{\nabla\pi_{\theta}}{\pi_{\theta}}$ and the Bellman equation. Multiplying both sides by $d^{\pi_{\theta}}(s)$, taking a sum over $s\in\mathcal{S}$, and rearranging the terms, we obtain the following.
    \begin{align}
        \begin{split}
        &\nabla_{\theta}J_g(\theta)=\sum_{s}d^{\pi_{\theta}}(s)\nabla_{\theta}J_g(\theta)\\
        &=\sum_{s}d^{\pi_{\theta}}(s)\sum_{a}\pi_\theta(a|s)\bigg(\nabla_{\theta}\log\pi_{\theta}(a|s)\bigg)Q_g^{\pi_{\theta}}(s, a)+\sum_{s}d^{\pi_{\theta}}(s)\sum_{a}\pi_\theta(a|s) \bigg(\sum_{s'}P(s'|s,a)\nabla_{\theta} V_g^{\pi_{\theta}}(s')\bigg)\\
        &- \sum_{s}d^{\pi_{\theta}}(s)\nabla_{\theta}V_g^{\pi_\theta}(s)\\
        &\overset{}{=}\mathbf{E}_{s\sim d^{\pi_\theta}, a\sim \pi_\theta(\cdot|s)}\bigg[Q_g^{\pi_{\theta}}(s, a)\nabla_{\theta}\log\pi_{\theta}(a|s)\bigg]+\sum_{s}d^{\pi_{\theta}}(s) \sum_{s'}P^{\pi_\theta}(s'|s)\nabla_{\theta} V_g^{\pi_{\theta}}(s') - \sum_{s}d^{\pi_{\theta}}(s)\nabla_{\theta}V_g^{\pi_\theta}(s)\\
        &\overset{(a)}{=}\mathbf{E}_{s\sim d^{\pi_\theta}, a\sim \pi_\theta(\cdot|s)}\bigg[Q_g^{\pi_{\theta}}(s, a)\nabla_{\theta}\log\pi_{\theta}(a|s)\bigg]+ \sum_{s'}d^{\pi_{\theta}}(s')\nabla_{\theta} V_g^{\pi_{\theta}}(s') - \sum_{s}d^{\pi_{\theta}}(s)\nabla_{\theta}V_g^{\pi_\theta}(s)\\
        &=\mathbf{E}_{s\sim d^{\pi_\theta}, a\sim \pi_\theta(\cdot|s)}\bigg[Q_g^{\pi_{\theta}}(s, a)\nabla_{\theta}\log\pi_{\theta}(a|s)\bigg]
        \end{split}
    \end{align}
    where $(a)$ uses the fact that $d^{\pi_\theta}$ is a stationary distribution. Note that,
    \begin{equation}
        \begin{aligned}
        &\mathbf{E}_{s\sim d^{\pi_{\theta}},a\sim\pi_{\theta}(\cdot|s)}\bigg[ V_g^{\pi_{\theta}}(s)\nabla\log\pi_{\theta}(a|s)\bigg]\\
        &=\mathbf{E}_{s\sim d^{\pi_{\theta}}}\left[ \sum_{a\in\mathcal{A}}V_g^{\pi_{\theta}}(s)\nabla_{\theta}\pi_{\theta}(a|s)\right]\\
        &=\mathbf{E}_{s\sim d^{\pi_{\theta}}}\bigg[ V_g^{\pi_{\theta}}(s)\nabla_{\theta}\left(\sum_{a\in\mathcal{A}}\pi_{\theta}(a|s)\right)\bigg]
        =\mathbf{E}_{s\sim d^{\pi_{\theta}}}\bigg[ V_g^{\pi_{\theta}}(s)\nabla_{\theta}(1)\bigg]=0
        \end{aligned}
    \end{equation}
    We can, therefore, replace the function $Q_g^{\pi_{\theta}}$ in the policy gradient with the advantage function $A_g^{\pi_{\theta}}(s, a)=Q_g^{\pi_{\theta}}(s, a)-V_g^{\pi_{\theta}}(s)$, $\forall (s, a)\in\mathcal{S}\times \mathcal{A}$. Thus,
    \begin{equation}
        \nabla_{\theta} J_g(\theta)=\mathbf{E}_{s\sim d^{\pi_{\theta}},a\sim\pi_{\theta}(\cdot|s)}\bigg[ A_g^{\pi_{\theta}}(s,a)\nabla_{\theta}\log\pi_{\theta}(a|s)\bigg]
    \end{equation}
 The proof is completed using the definitions of $J_{\mathrm{L},\lambda}$ and $A_{\mathrm{L},\lambda}$.

\subsection{Proof of Lemma \ref{lemma_good_estimator}}

\begin{proof}
    The proof is similar to the proof of \cite[Lemma 6]{wei2020model}. Consider the $k$th epoch and assume that $\pi_{\theta_k}$ is denoted as $\pi$ for notational convenience. Let, $M$ be the number of disjoint sub-trajectories of length $N$ that start with the state $s$ and are at least $N$ distance apart (found by Algorithm \ref{alg:estQ}). Let, $g_{k, i}$ be the sum of rewards or constraint ($g=r, c$ accordingly) observed in the $i$th sub-trajectory and $\tau_i$ denote its starting time. The advantage function estimate is,
    \begin{align}
    \label{def_A_hat_appndx}
        \hat{A}_g^{\pi}(s, a) = \begin{cases}
            \dfrac{1}{\pi(a|s)}\left[\dfrac{1}{M}\sum_{i=1}^M g_{k,i}\mathrm{1}(a_{\tau_i}=a)\right] - \dfrac{1}{M}\sum_{i=1}^M g_{k,i}~&\text{if}~M>0\\
            0~&\text{if}~M=0
        \end{cases}
    \end{align}

    Note the following,
    \begin{align}
        \begin{split}
           &\mathbf{E}\left[g_{k,i}\bigg|s_{\tau_i}=s, a_{\tau_i}=a\right]
           =g(s, a) + \mathbf{E}\left[\sum_{t=\tau_i+1}^{\tau_i+N}g(s_t, a_t)\bigg| s_{\tau_i}=s, a_{\tau_i}=a\right]\\
           &=g(s, a) + \sum_{s'}P(s'|s, a)\mathbf{E}\left[\sum_{t=\tau_i+1}^{\tau_i+N}g(s_t, a_t)\bigg| s_{\tau_i+1}=s'\right]\\
           &=g(s, a) + \sum_{s'}P(s'|s, a)\left[\sum_{j=0}^{N-1}(P^{\pi})^j(s', \cdot)\right]^Tg^{\pi}\\
           &=g(s, a) + \sum_{s'}P(s'|s, a)\left[\sum_{j=0}^{N-1}(P^{\pi})^j(s', \cdot)-d^{\pi}\right]^Tg^{\pi} + N(d^{\pi})^Tg^{\pi}\\
           &\overset{(a)}{=}g(s, a) + \sum_{s'}P(s'|s, a)\left[\sum_{j=0}^{\infty}(P^{\pi})^j(s', \cdot)-d^{\pi}\right]^Tg^{\pi} + NJ_g^{\pi}-\underbrace{\sum_{s'}P(s'|s, a)\left[\sum_{j=N}^{\infty}(P^{\pi})^j(s', \cdot)-d^{\pi}\right]^Tg^{\pi}}_{\triangleq \mathrm{E}^{\pi}_T(s, a)}\\
           &\overset{(b)}{=} g(s, a) + \sum_{s'}P(s'|s, a)V_g^{\pi}(s') + NJ_g^{\pi}-\mathrm{E}^{\pi}_T(s, a)
           \overset{(c)}{=} Q_g^{\pi}(s, a) + (N+1)J_g^{\pi} - \mathrm{E}^{\pi}_T(s, a)
        \end{split}
    \end{align}
    where $(a)$ follows from the definition of $J_g^{\pi}$ as given in $(\ref{eq_r_pi_theta})$, $(b)$ is an application of the definition of $V_g^{\pi}$ given in $(\ref{def_v_pi_theta_s})$, and $(c)$ follows from the Bellman equation. Define the following quantity.
    \begin{align}
        \label{def_error_1}
        \delta^{\pi}(s, T) \triangleq \sum_{t=N}^{\infty}\norm{(P^{\pi})^t({s,\cdot}) - d^{\pi}}_1 ~~\text{where} ~N=4t_{\mathrm{mix}}(\log_2 T)
    \end{align}
    
    Using Lemma \ref{lemma_aux_3}, we get $\delta^{\pi}(s, T)\leq \frac{1}{T^3}$ which implies, $|\mathrm{E}^{\pi}_T(s, a)|\leq \frac{1}{T^3}$. Observe that,
    \begin{align}
    \label{eq_appndx_47}
        \begin{split}
            &\mathbf{E}\left[\left(\dfrac{1}{\pi(a|s)}g_{k,i}\mathrm{1}(a_{\tau_i}=a) - g_{k,i}\right)\bigg| s_{\tau_i}=s\right] \\
            &= \mathbf{E}\left[g_{k,i}\bigg| s_{\tau_i}=s, a_{\tau_i}=a\right] - \sum_{a'}\pi(a'|s)\mathbf{E}\left[g_{k,i}\bigg| s_{\tau_i}=s, a_{\tau_i}=a'\right]\\
            &=Q_g^{\pi}(s, a) + (N+1)J_g^{\pi} - \mathrm{E}^{\pi}_T(s, a) - \sum_{a'}\pi(a'|s)[Q^{\pi}(s, a) + (N+1)J_g^{\pi} - \mathrm{E}^{\pi}_T(s, a)]\\
            &=Q_g^{\pi}(s, a) - V_g^{\pi}(s)-\left[\mathrm{E}_T(s, a) - \sum_{a'}\pi(a'|s)\mathrm{E}_T^{\pi}(s, a')\right]\\
            &= A_g^{\pi}(s, a) -\Delta^{\pi}_T(s, a)
        \end{split}
    \end{align}
    where $\Delta^{\pi}_T(s, a)\triangleq\mathrm{E}_T(s, a) - \sum_{a'}\pi(a'|s)\mathrm{E}_T^{\pi}(s, a')$. Using the bound on $\mathrm{E}^{\pi}_T(s, a)$, we derive, $|\Delta_T^{\pi}(s, a)|\leq \frac{2}{T^3}$, which implies,
    \begin{align}
        \label{eq_appndx_48}\left|\mathbf{E}\left[\left(\dfrac{1}{\pi(a|s)}g_{k,i}\mathrm{1}(a_{\tau_i}=a) - g_{k,i}\right)\bigg| s_{\tau_i}=s\right] - A_g^{\pi}(s, a)\right|\leq |\Delta_T^{\pi}(s, a)|\leq\dfrac{2}{T^3}
    \end{align}

    Note that \eqref{eq_appndx_48} cannot be directly used to bound the bias of $\hat{A}_g^{\pi}(s, a)$. This is because the random variable $M$ is correlated with the  variables $\{g_{k,i}\}_{i=1}^M$. To decorrelate them, imagine a CMDP where the state distribution resets to the stationary distribution, $d^{\pi}$ after exactly $N$ time steps since the completion of a sub-trajectory. In other words, if a sub-trajectory starts at $\tau_{i}$, and ends at $\tau_i+N$, then the system `rests' for additional $N$ steps before rejuvenating with the state distribution, $d^{\pi}$ at $\tau_i+2N$. Clearly, the wait time between the reset after the $(i-1)$th sub-trajectory and the start of the $i$th sub-trajectory is, $w_{i}=\tau_{i}-(\tau_{i-1}+2N)$, $i>1$. Let $w_1$ be the difference between the start time of the $k$th epoch and the start time of the first sub-trajectory. Note that,

    $(a)$ $w_1$ only depends on the initial state, $s_{(k-1)H}$ and the induced transition function, $P^{\pi}$,

    $(b)$ $w_i$, where $i>1$, depends on the stationary distribution, $d^{\pi}$, and the induced transition function, $P^{\pi}$,

    $(c)$ $M$ only depends on $\{w_1, w_2, \cdots\}$ as other segments of the epoch have fixed length, $2N$.

    Clearly, in this imaginary CMDP, the sequence, $\{w_1, w_2, \cdots\}$, and hence, $M$ is independent of $\{g_{k,1}, g_{k, 2}, \cdots\}$. Let, $\mathbf{E}'$ denote the expectation operation and $\mathrm{Pr}'$ denote the probability of events in this imaginary system. Define the following.
    \begin{align}
    \label{def_delta_i}
        \Delta_i \triangleq \dfrac{g_{k,i}\mathrm{1}(a_{\tau_i}=a)}{\pi(a|s)} - g_{k,i} - A_g^{\pi}(s, a) + \Delta^{\pi}_T(s, a)
    \end{align}
    where $\Delta^{\pi}_T(s, a)$ is given in $(\ref{eq_appndx_47})$. Note that we have suppressed the dependence on $T$, $s, a$, and $\pi$ while defining $\Delta_i$ to remove clutter. Using $(\ref{eq_appndx_47})$, one can write $ \mathbf{E}'\left[\Delta_i(s, a)|\{w_i\}\right]=0$. Moreover, 
    \begin{align}
    \label{eq_appndx_50}
       \begin{split}
           &\mathbf{E}'\left[\left(\hat{A}_g^{\pi}(s, a) - A_g^{\pi}(s, a)\right)^2\right]\\ 
           &= \mathbf{E}'\left[\left(\hat{A}_g^{\pi}(s, a) - A_g^{\pi}(s, a)\right)^2\bigg| M>0\right]\times \mathrm{Pr}'(M>0) + \left(A_g^{\pi}(s, a)\right)^2\times \mathrm{Pr}'(M=0)\\
           &= \mathbf{E}'\left[\left(\dfrac{1}{M}\sum_{i=1}^M\Delta_i - \Delta_T^{\pi}(s, a)\right)^2\bigg| M>0\right]\times \mathrm{Pr}'(M>0) + \left(A_g^{\pi}(s, a)\right)^2\times \mathrm{Pr}'(M=0)\\
           & \overset{}{\leq} 2\mathbf{E}_{\{w_i\}}'\left[\mathbf{E}'\left[\left(\dfrac{1}{M}\sum_{i=1}^M\Delta_i \right)^2\bigg| \{w_i\}\right]\bigg| w_1\leq H-N\right]\times \mathrm{Pr}'(w_1\leq H-N) \\
           &+ 2\left(\Delta_T^{\pi}(s, a)\right)^2+\left(A_g^{\pi}(s, a)\right)^2\times \mathrm{Pr}'(M=0)\\
           & \overset{(a)}{\leq} 2\mathbf{E}_{\{w_i\}}'\left[\dfrac{1}{M^2}\sum_{i=1}^M \mathbf{E}'\left[\Delta_i^2\big|\{w_i\}\right]\bigg| w_1\leq H-N\right]\times \mathrm{Pr}'(w_1\leq H-N) \\
           &+ \dfrac{8}{T^6} +\left(A_g^{\pi}(s, a)\right)^2\times \mathrm{Pr}'(M=0)\\
       \end{split}
    \end{align}
    where $(a)$ uses the bound $|\Delta_T^{\pi}(s, a)|\leq \frac{2}{T^3}$ derived in $(\ref{eq_appndx_48})$, and the fact that $\{\Delta_i\}$ are zero mean independent random variables conditioned on $\{w_i\}$. Note that $|g_{k,i}|\leq N$ almost surely, $|A_g^{\pi}(s, a)|\leq \mathcal{O}(t_{\mathrm{mix}})$ via Lemma \ref{lemma_aux_2}, and $|\Delta^{\pi}_T(s, a)|\leq \frac{2}{T^3}$ as shown in $(\ref{eq_appndx_48})$. Combining, we get, $\mathbf{E}'[|\Delta_i|^2\big|\{w_i\}]\leq \mathcal{O}(N^2/\pi(a|s))$ (see the definition of $\Delta_i$ in (\ref{def_delta_i})). Invoking this bound into $(\ref{eq_appndx_50})$, we get the following result.
    \begin{align}
    \label{eq_appndx_51_}
        \begin{split}
        \mathbf{E}'&\left[\left(\hat{A}_g^{\pi}(s, a) - A_g^{\pi}(s, a)\right)^2\right]\leq 2\mathbf{E}'\left[\dfrac{1}{M}\bigg|w_1\leq H-N\right]\mathcal{O}\left(\dfrac{N^2}{\pi(a|s)}\right)+\dfrac{8}{T^6}\\
        &+\mathcal{O}(t_{\mathrm{mix}}^2)\times \mathrm{Pr}'(w_1>H-N)\\
        \end{split}
    \end{align}

    Note that, one can use Lemma \ref{lemma_aux_4} to bound the following violation probability.
    \begin{align}
    \label{eq_appndx_52_}
        \mathrm{Pr}'(w_1>H-N)\leq \left(1-\dfrac{3d^{\pi}(s)}{4}\right)^{4t_{\mathrm{hit}}T^{\xi}(\log T)-1}\overset{(a)}{\leq} \left(1-\dfrac{3d^{\pi}(s)}{4}\right)^{\dfrac{4}{d^{\pi}(s)}(\log T)}\leq \dfrac{1}{T^3}
    \end{align}
    where $(a)$ is a consequence of the fact that $4t_{\mathrm{hit}}T^{\xi}(\log_2 T) - 1 \geq \frac{4}{d^{\pi}(s)}\log_2 T$ for sufficiently large $T$. Finally, note that, if $M<M_0$, where $M_0$ is defined as,
    \begin{align}
        M_0\triangleq \dfrac{H-N}{2N+ \dfrac{4N\log T}{d^{\pi}(s)}}
    \end{align}
    then there exists at least one $w_i$ that exceeds $4N\log_2 T/d^{\pi}(s)$ which can happen with the following maximum probability according to Lemma \ref{lemma_aux_4}.
    \begin{align}
        \mathrm{Pr}'\left(M<M_0\right) \leq \left(1-\dfrac{3d^{\pi}(s)}{4}\right)^{\frac{4\log T}{d^{\pi(s)}}}\leq \dfrac{1}{T^3}
    \end{align}

    The above probability bound can be used to obtain the following result,
    \begin{align}
    \label{eq_appndx_55_}
    \begin{split}
            \mathbf{E}'\left[\dfrac{1}{M}\bigg| M>0\right]=\dfrac{\sum_{m=1}^{\infty}\dfrac{1}{m}\mathrm{Pr}'(M=m)}{\mathrm{Pr}'(M>0)}&\leq \dfrac{1\times \mathrm{Pr}'(M\leq M_0)+\dfrac{1}{M_0}\mathrm{Pr}'(M>M_0)}{\mathrm{Pr}'(M>0)}\\
        &\leq  \dfrac{\dfrac{1}{T^3}+\dfrac{2N+\dfrac{4N \log T}{d^{\pi}(s)}}{H-N}}{1-\dfrac{1}{T^3}}\leq \mathcal{O}\left(\dfrac{N\log T}{H d^{\pi}(s)}\right)
    \end{split}
    \end{align}

    Injecting $(\ref{eq_appndx_52_})$ and $(\ref{eq_appndx_55_})$ into $(\ref{eq_appndx_51_})$, we finally obtain the following.
    \begin{align}
        \label{eq_appndx_56_}
        \begin{split}
            \mathbf{E}'\left[\left(\hat{A}_g^{\pi}(s, a) - A_g^{\pi}(s, a)\right)^2\right]&\leq \mathcal{O}\left(\dfrac{N^3\log T}{H d^{\pi}(s)\pi(a|s)}\right)\\
            &=\mathcal{O}\left(\dfrac{N^3t_{\mathrm{hit}}\log T}{H \pi(a|s)}\right)=\mathcal{O}\left(\dfrac{t^2_{\mathrm{mix}}(\log T)^2}{T^{\xi}\pi(a|s)}\right)
        \end{split}
    \end{align}

    Eq. $(\ref{eq_appndx_56_})$ demonstrates that our desired inequality is obeyed in the imaginary system. We now need a mechanism to translate this result to our actual CMDP. Note that $(\hat{A}_g^{\pi}(s, a)-A_g^{\pi}(s, a))^2=f(X)$ where $X=(M, \tau_1, \mathcal{T}_1, \cdots, \tau_M, \mathcal{T}_M)$, and $\mathcal{T}_i = (a_{\tau_i}, s_{\tau_i+1}, a_{\tau_i+1}, \cdots, s_{\tau_i+N}, a_{\tau_i+N})$. We have,
    \begin{align}
    \label{eq_appndx_57_}
        \dfrac{\mathbf{E}[f(X)]}{\mathbf{E}'[f(X)]} = \dfrac{\sum_{X} f(X)\mathrm{Pr}(X)}{\sum_{X} f(X)\mathrm{Pr}'(X)}\leq \max_{X}\dfrac{\mathrm{Pr}(X)}{\mathrm{Pr'}(X)}
    \end{align}

    The last inequality uses the non-negativity of $f(\cdot)$. Observe that, for a fixed sequence, $X$, we have,
    \begin{align}
    \begin{split}
        \mathrm{Pr}(X) &= \mathrm{Pr}(\tau_1)\times \mathrm{Pr}(\mathcal{T}_1|\tau_1)\times \mathrm{Pr}(\tau_2|\tau_1, \mathcal{T}_1)\times \mathrm{Pr}(\mathcal{T}_2|\tau_2) \times \cdots \\
        &\times \mathrm{Pr}(\tau_M|\tau_{M-1}, \mathcal{T}_{M-1})\times \mathrm{Pr}(\mathcal{T}_M|\tau_M)\times \mathrm{Pr}(s_t\neq s, \forall t\in[\tau_M+2N, kH-N]|\tau_M, \mathcal{T}_M),
    \end{split}\\
        \begin{split}
            \mathrm{Pr}'(X) &= \mathrm{Pr}(\tau_1)\times \mathrm{Pr}(\mathcal{T}_1|\tau_1)\times \mathrm{Pr}'(\tau_2|\tau_1, \mathcal{T}_1)\times \mathrm{Pr}(\mathcal{T}_2|\tau_2) \times \cdots \\
        &\times \mathrm{Pr}'(\tau_M|\tau_{M-1}, \mathcal{T}_{M-1})\times \mathrm{Pr}(\mathcal{T}_M|\tau_M)\times \mathrm{Pr}(s_t\neq s, \forall t\in[\tau_M+2N, kH-N]|\tau_M, \mathcal{T}_M),
    \end{split}
    \end{align}

    The difference between $\mathrm{Pr}(X)$ and $\mathrm{Pr}'(X)$ arises because $\mathrm{Pr}(\tau_{i+1}|\tau_i, \mathcal{T}_i)\neq \mathrm{Pr}'(\tau_{i+1}|\tau_i, \mathcal{T}_i)$, $\forall i\in\{1, \cdots, M-1\}$. Note that the ratio of these two terms can be bounded as follows,
    \begin{align}
    \begin{split}
        &\dfrac{\mathrm{Pr}(\tau_{i+1}|\tau_i, \mathcal{T}_i)}{\mathrm{Pr}'(\tau_{i+1}|\tau_i, \mathcal{T}_i)}\\
        &=\dfrac{\sum_{s'\neq s} \mathrm{Pr}(s_{\tau_i+2N}=s'|\tau_i, \mathcal{T}_i)\times \mathrm{Pr}(s_t\neq s, \forall t\in [\tau_i+2N, \tau_{i+1}-1], s_{\tau_{i+1}}=s|s_{\tau_i+2N}=s')}{\sum_{s'\neq s} \mathrm{Pr}'(s_{\tau_i+2N}=s'|\tau_i, \mathcal{T}_i)\times \mathrm{Pr}(s_t\neq s, \forall t\in [\tau_i+2N, \tau_{i+1}-1], s_{\tau_{i+1}}=s|s_{\tau_i+2N}=s')}\\
        &\leq \max_{s'}\dfrac{\mathrm{Pr}(s_{\tau_i+2N}=s'|\tau_i, \mathcal{T}_i)}{\mathrm{Pr}'(s_{\tau_i+2N}=s'|\tau_i, \mathcal{T}_i)}\\
        &=\max_{s'}1+\dfrac{\mathrm{Pr}(s_{\tau_i+2N}=s'|\tau_i, \mathcal{T}_i)-d^{\pi}(s')}{d^{\pi}(s')}\overset{(a)}{\leq} \max_{s'}1+\dfrac{1}{T^3d^{\pi}(s')}\leq 1+\dfrac{t_{\mathrm{hit}}}{T^3}\leq 1+\dfrac{1}{T^2}
    \end{split}
    \end{align}
    where $(a)$ is a consequence of Lemma \ref{lemma_aux_3}. We have,
    \begin{align}
    \label{eq_appndx_61_}
        \dfrac{\mathrm{Pr}(X)}{\mathrm{Pr}'(X)}\leq \left(1+\dfrac{1}{T^2}\right)^M\leq e^{\frac{M}{T^2}}\overset{(a)}{\leq} e^{\frac{1}{T}}\leq \mathcal{O}\left(1+\dfrac{1}{T}\right) 
    \end{align}
    where $(a)$ uses the fact that $M\leq T$. Combining $(\ref{eq_appndx_57_})$ and $(\ref{eq_appndx_61_})$, we get,
    \begin{align}
        \begin{split}
        \mathbf{E}\left[\left(\hat{A}_g^{\pi}(s, a) - A_g^{\pi}(s, a)\right)^2\right]&\leq \mathcal{O}\left(1+\dfrac{1}{T}\right)\mathbf{E}'\left[\left(\hat{A}_g^{\pi}(s, a) - A_g^{\pi}(s, a)\right)^2\right]\\
        &\overset{(a)}{\leq} \mathcal{O}\left(\dfrac{t^2_{\mathrm{mix}}(\log T)^2}{T^{\xi}\pi(a|s)}\right)
        \end{split}
    \end{align}
    where $(a)$ follows from $(\ref{eq_appndx_56_})$. Using the definition of $A_{\mathrm{L},\lambda}$, we get,
        \begin{align*}
            &\mathbf{E}\left[\left(\hat{A}_{\mathrm{L},\lambda}^{\pi}(s, a) - A_{\mathrm{L},\lambda}^{\pi}(s, a)\right)^2\right]\\
            &=\mathbf{E}\left[\left((\hat{A}_r^{\pi}(s, a) - A_r^{\pi}(s, a))+\lambda(\hat{A}_c^{\pi}(s, a) - A_c^{\pi}(s, a))\right)^2\right]\\
            &\leq 2 \mathbf{E}\left[\left(\hat{A}_r^{\pi}(s, a) - A_r^{\pi}(s, a)\right)^2\right]+ 2\lambda^2\mathbf{E}\left[\left(\hat{A}_c^{\pi}(s, a) - A_c^{\pi}(s, a)\right)^2\right]\leq\mathcal{O}\left(\dfrac{t^2_{\mathrm{mix}}(\log T)^2}{\delta^2T^{\xi}\pi(a|s)}\right)
        \end{align*}
    This concludes the proof.
\end{proof}
\section{Proofs for the Section of Global Convergence Analysis}
\subsection{Proof of Lemma \ref{lemma_grad_est_bias}}
\begin{proof} 
Recall from Eq. \eqref{eq_grad_estimate} that,
\begin{align}
    &\omega_k= \dfrac{1}{H}\sum_{t=t_k}^{t_{k+1}-1}\hat{A}_{\mathrm{L},\lambda}^{\pi_{\theta_k}}(s_{t}, a_{t})\nabla_{\theta}\log \pi_{\theta_k}(a_{t}|s_{t}),
\end{align}

Define the following quantity,
\begin{align}
    &\bar{\omega}_k= \dfrac{1}{H}\sum_{t=t_k}^{t_{k+1}-1}A_{\mathrm{L},\lambda}^{\pi_{\theta_k}}(s_{t}, a_{t})\nabla_{\theta}\log \pi_{\theta_k}(a_{t}|s_{t})
\end{align}
where $t_k=(k-1)H$ is the starting time of the $k$th epoch. Note that the true gradient is given by, 
\begin{align}
    \nabla_{\theta}J_{\mathrm{L},\lambda}(\theta_k)=\mathbf{E}_{s\sim d^{\pi_{\theta_k}}, a\sim\pi_{\theta_k}(\cdot|s)}\left[A_{\mathrm{L},\lambda}^{\pi_{\theta_k}}(s, a)\nabla_{\theta}\log\pi_{\theta}(a|s)\right]
\end{align}

Using Assumption \ref{ass_score}, Lemma \ref{lemma_aux_2}, and $\lambda\in[0, \frac{2}{\delta}]$, one can exhibit that $|A_{\mathrm{L},\lambda}^{\pi_{\theta_k}}(s, a)\nabla_{\theta}\log\pi_{\theta}(a|s)|\leq \mathcal{O}(\frac{t_{\mathrm{mix}}G}{\delta})$, $\forall (s, a)\in \mathcal{S}\times \mathcal{A}$ which implies    $|\nabla_{\theta}J_{\mathrm{L},\lambda}(\theta_k)|\leq \mathcal{O}(\frac{t_{\mathrm{mix}}G}{\delta})$. Applying Lemma \ref{lemma_aux_7}, one, therefore, arrives at
\begin{align}
\label{eq_appndx_67_}
    \mathbf{E}\left[\norm{\bar{\omega}_k-\nabla_{\theta}J_{L,\lambda}(\theta_k)}^2\right]\leq \mathcal{O}\left(\frac{1}{\delta^2}G^{2}t^2_{\mathrm{mix}}\log T\right)\times \mathcal{O}\left(\dfrac{t_{\mathrm{mix}}\log T}{H}\right)=\mathcal{O}\left(\dfrac{G^2t_{\mathrm{mix}}^2}{\delta^2 t_{\mathrm{hit}}T^{\xi}}\right)
\end{align}

Finally, the difference, $\mathbf{E}\norm{\omega_k-\bar{\omega}_k}^2$ can be bounded as follows.
\begin{align}
    \label{eq_appndx_68_}
    \begin{split}
        &\mathbf{E}\norm{\omega_k-\bar{\omega}_k}^2\\
        &=\mathbf{E}\left[\left\|\dfrac{1}{H}\sum_{t=t_k}^{t_{k+1}-1}\hat{A}_{\mathrm{L},\lambda}^{\pi_{\theta_k}}(s_{t}, a_{t})\nabla_{\theta}\log \pi_{\theta_k}(a_{t}|s_{t})-\dfrac{1}{H}\sum_{t=t_k}^{t_{k+1}-1}\hat{A}_{\mathrm{L},\lambda}^{\pi_{\theta_k}}(s_{t}, a_{t})\nabla_{\theta}\log \pi_{\theta_k}(a_{t}|s_{t})\right\|^2\right]\\
        &\overset{(a)}{\leq} \dfrac{G^2}{H}\sum_{t=t_k}^{t_{k+1}-1}\mathbf{E}\left[\left(\hat{A}_{\mathrm{L},\lambda}^{\pi_{\theta_k}}(s_t, a_t)-A_{\mathrm{L},\lambda}^{\pi_{\theta_k}}(s_t, a_t)\right)^2\right]\\
        &\leq \dfrac{G^2}{H}\sum_{t=t_k}^{t_{k+1}-1}\mathbf{E}\left[\sum_{a}\pi_{\theta_k}(a|s_t)\mathbf{E}\left[\left(\hat{A}_{\mathrm{L},\lambda}^{\pi_{\theta_k}}(s_t, a)-A_{\mathrm{L},\lambda}^{\pi_{\theta_k}}(s_t, a)\right)^2\bigg| s_t\right]\right]\overset{(b)}{\leq}\mathcal{O}\left(\dfrac{AG^2t^2_{\mathrm{mix}}(\log T)^2}{\delta^2 T^{\xi}}\right)
    \end{split}
\end{align}
where $(a)$ follows from Assumption \ref{ass_score} and Jensen's inequality whereas $(b)$ follows from Lemma \ref{lemma_good_estimator}. Combining, $(\ref{eq_appndx_67_})$ and $(\ref{eq_appndx_68_})$, we conclude the result.
\end{proof}

\subsection{Proof of Lemma \ref{lem_performance_diff}}
\begin{proof}
Using the Lemma \ref{lemma_aux_5}, it is obvious to see that
    \begin{equation}
        \begin{aligned}
            J_g^{\pi}-J_g^{\pi'}&=\sum_{s}\sum_{a}d^{\pi}(s)(\pi(a|s)-\pi'(a|s))Q_g^{\pi'}(s,a)\\
            &=\sum_{s}\sum_{a}d^{\pi}(s)\pi(a|s)Q_g^{\pi'}(s,a)-\sum_{s}d^{\pi}(s)V_g^{\pi'}(s)\\
            &=\sum_{s}\sum_{a}d^{\pi}(s)\pi(a|s)Q_g^{\pi'}(s,a)-\sum_{s}\sum_{a}d^{\pi}(s)\pi(a|s)V_g^{\pi'}(s)\\
            &=\sum_{s}\sum_{a}d^{\pi}(s)\pi(a|s)[Q_g^{\pi'}(s,a)-V_g^{\pi'}(s)]=\mathbf{E}_{s\sim d^{\pi}}\mathbf{E}_{a\sim\pi(\cdot\vert s)}\big[A_g^{\pi'}(s,a)\big]
        \end{aligned}
    \end{equation}
    We conclude the lemma using the definition of $J_{\mathrm{L},\lambda}$ and $A_{\mathrm{L},\lambda}$.
\end{proof}

\subsection{Proof of Lemma \ref{lem_framework}}
\begin{proof}
    We start with the definition of KL divergence.
    \begin{equation}
	\begin{aligned}
            &\mathbf{E}_{s\sim d^{\pi^*}}[KL(\pi^*(\cdot\vert s)\Vert\pi_{\theta_k}(\cdot\vert s))-KL(\pi^*(\cdot\vert s)\Vert\pi_{\theta_{k+1}}(\cdot\vert s))]\\
            &=\mathbf{E}_{s\sim d^{\pi^*}}\mathbf{E}_{a\sim\pi^*(\cdot\vert s)}\bigg[\log\frac{\pi_{\theta_{k+1}(a\vert s)}}{\pi_{\theta_k}(a\vert s)}\bigg]\\
            &\overset{(a)}\geq\mathbf{E}_{s\sim d^{\pi^*}}\mathbf{E}_{a\sim\pi^*(\cdot\vert s)}[\nabla_\theta\log\pi_{\theta_k}(a\vert s)\cdot(\theta_{k+1}-\theta_k)]-\frac{B}{2}\Vert\theta_{k+1}-\theta_k\Vert^2\\
            &=\alpha\mathbf{E}_{s\sim d^{\pi^*}}\mathbf{E}_{a\sim\pi^*(\cdot\vert s)}[\nabla_{\theta}\log\pi_{\theta_k}(a\vert s)\cdot\omega_k]-\frac{B\alpha^2}{2}\Vert\omega_k\Vert^2\\
            &=\alpha\mathbf{E}_{s\sim d^{\pi^*}}\mathbf{E}_{a\sim\pi^*(\cdot\vert s)}[\nabla_\theta\log\pi_{\theta_k}(a\vert s)\cdot\omega^*_k]+\alpha\mathbf{E}_{s\sim d^{\pi^*}}\mathbf{E}_{a\sim\pi^*(\cdot\vert s)}[\nabla_\theta\log\pi_{\theta_k}(a\vert s)\cdot(\omega_k-\omega^*_k)]-\frac{B\alpha^2}{2}\Vert\omega_k\Vert^2\\
            &=\alpha[J_{\mathrm{L}}(\pi^*,\lambda_k)-J_{\mathrm{L}}(\theta_k,\lambda_k)]+\alpha\mathbf{E}_{s\sim d^{\pi^*}}\mathbf{E}_{a\sim\pi^*(\cdot\vert s)}[\nabla_\theta\log\pi_{\theta_k}(a\vert s)\cdot\omega^*_k]-\alpha[J_{\mathrm{L}}(\pi^*,\lambda_k)-J_{\mathrm{L}}(\theta_k,\lambda_k)]\\
            &+\alpha\mathbf{E}_{s\sim d^{\pi^*}}\mathbf{E}_{a\sim\pi^*(\cdot\vert s)}[\nabla_\theta\log\pi_{\theta_k}(a\vert s)\cdot(\omega_k-\omega^*_k)]-\frac{B\alpha^2}{2}\Vert\omega_k\Vert^2\\		
            &\overset{(b)}=\alpha[J_{\mathrm{L}}(\pi^*,\lambda_k)-J_{\mathrm{L}}(\theta_k,\lambda_k)]+\alpha\mathbf{E}_{s\sim d^{\pi^*}}\mathbf{E}_{a\sim\pi^*(\cdot\vert s)}\bigg[\nabla_\theta\log\pi_{\theta_k}(a\vert s)\cdot\omega^*_k-A_{\mathrm{L},\lambda_k}^{\pi_{\theta_k}}(s,a)\bigg]\\
            &+\alpha\mathbf{E}_{s\sim d^{\pi^*}}\mathbf{E}_{a\sim\pi^*(\cdot\vert s)}[\nabla_\theta\log\pi_{\theta_k}(a\vert s)\cdot(\omega_k-\omega^*_k)]-\frac{B\alpha^2}{2}\Vert\omega_k\Vert^2\\
            &\overset{(c)}\geq\alpha[J_{\mathrm{L}}(\pi^*,\lambda_k)-J_{\mathrm{L}}(\theta_k,\lambda_k)]-\alpha\sqrt{\mathbf{E}_{s\sim d^{\pi^*}}\mathbf{E}_{a\sim\pi^*(\cdot\vert s)}\bigg[\bigg(\nabla_\theta\log\pi_{\theta_k}(a\vert s)\cdot\omega^*_k-A_{\mathrm{L},\lambda_k}^{\pi_{\theta_k}}(s,a)\bigg)^2\bigg]}\\
            &-\alpha\mathbf{E}_{s\sim d^{\pi^*}}\mathbf{E}_{a\sim\pi^*(\cdot\vert s)}\Vert\nabla_\theta\log\pi_{\theta_k}(a\vert s)\Vert_2\Vert(\omega_k-\omega^*_k)\Vert-\frac{B\alpha^2}{2}\Vert\omega_k\Vert^2\\
            &\overset{(d)}\geq\alpha[J_{\mathrm{L}}(\pi^*,\lambda_k)-J_{\mathrm{L}}(\theta_k,\lambda_k)]-\alpha\sqrt{\epsilon_{\mathrm{bias}}}-\alpha G\Vert(\omega_k-\omega^*_k)\Vert-\frac{B\alpha^2}{2}\Vert\omega_k\Vert^2\\
	\end{aligned}	
    \end{equation}
    where the step (a) holds by Assumption \ref{ass_score} and step (b) holds by Lemma \ref{lem_performance_diff}. Step (c) uses the convexity of the function $f(x)=x^2$. Finally, step (d) comes from the Assumption \ref{ass_transfer_error}. Rearranging items, we have
    \begin{equation}
	\begin{split}
            J_{\mathrm{L}}(\pi^*,\lambda_k)-J_{\mathrm{L}}(\theta_k,\lambda_k)&\leq \sqrt{\epsilon_{\mathrm{bias}}}+ G\Vert(\omega_k-\omega^*_k)\Vert+\frac{B\alpha}{2}\Vert\omega_k\Vert^2\\
            &+\frac{1}{\alpha}\mathbf{E}_{s\sim d^{\pi^*}}[KL(\pi^*(\cdot\vert s)\Vert\pi_{\theta_k}(\cdot\vert s))-KL(\pi^*(\cdot\vert s)\Vert\pi_{\theta_{k+1}}(\cdot\vert s))]
	\end{split}
    \end{equation}
    Summing from $k=1$ to $K$, using the non-negativity of KL divergence and dividing the resulting expression by $K$, we get the desired result.
\end{proof}

\subsection{Proof of Lemma \ref{lemma:41ss}}

\begin{proof}
    By the $L$-smooth property of the objective function and constraint function, we know that $J_{\mathrm{L}}(\cdot,\lambda)$ is a $L(1+\lambda)$-smooth function. Thus,
    \begin{align}
        \begin{split}
            &J_{\mathrm{L}}(\theta_{k+1},\lambda_k)
            \geq J_{\mathrm{L}}(\theta_k,\lambda_k)+\left<\nabla J_{\mathrm{L}}(\theta_k,\lambda_k),\theta_{k+1}-\theta_k\right>-\frac{L(1+\lambda_k)}{2}\Vert\theta_{k+1}-\theta_k\Vert^2\\
            &\overset{(a)} =J_{\mathrm{L}}(\theta_k,\lambda_k) + \alpha \nabla J_{\mathrm{L}}(\theta_k,\lambda_k)^T \omega_k - \frac{L(1+\lambda_k) \alpha^2}{2} \norm{ \omega_k }^2 \\
            &= J_{\mathrm{L}}(\theta_k,\lambda_k) + \alpha \norm{\nabla J_{\mathrm{L}}(\theta_k,\lambda_k)}^2 - \alpha \langle \nabla J_{\mathrm{L}}(\theta_k,\lambda_k) - \omega_k, \nabla J_{\mathrm{L}}(\theta_k, \lambda_k) \rangle \\
            &\quad- \frac{L(1+\lambda_k) \alpha^2}{2}\Vert \nabla J_{\mathrm{L}}(\theta_k,\lambda_k)-\omega_k-\nabla J_{\mathrm{L}}(\theta_k,\lambda_k)\Vert^2\\
            &\overset{(b)}\geq J_{\mathrm{L}}(\theta_k,\lambda_k) + \alpha \norm{\nabla J_{\mathrm{L}}(\theta_k,\lambda_k)}^2 - \frac{\alpha}{2} \Vert\nabla J_{\mathrm{L}}(\theta_k,\lambda_k) - \omega_k\Vert^2 -\frac{\alpha}{2}\Vert\nabla J_{\mathrm{L}}(\theta_k,\lambda_k)\Vert^2\\
            &\quad- L(1+\lambda_k)\alpha^2\Vert \nabla J_{\mathrm{L}}(\theta_k,\lambda_k)-\omega_k\Vert^2-L(1+\lambda_k)\alpha^2\Vert\nabla J_{\mathrm{L}}(\theta_k,\lambda_k)\Vert^2\\
            &= J_{\mathrm{L}}(\theta_k,\lambda_k) + \left(\frac{\alpha}{2}-L(1+\lambda_k)\alpha^2\right) \norm{\nabla J_{\mathrm{L}}(\theta_k,\lambda_k)}^2 - \left(\frac{\alpha}{2}+L(1+\lambda_k)\alpha^2\right) \Vert\nabla J_{\mathrm{L}}(\theta_k,\lambda_k) - \omega_k\Vert^2
        \end{split}
    \end{align}
    where step (a) follows from the fact that $\theta_{k+1} = \theta_k + \alpha \omega_k$ and inequality (b) holds due to the Cauchy-Schwarz inequality. Now, adding $J_{\mathrm{L}}(\theta_{k+1},\lambda_{k+1})$ on both sides, we have
    \begin{equation}
        \begin{aligned}
            J_{\mathrm{L}}(\theta_{k+1},\lambda_{k+1})&\geq J_{\mathrm{L}}(\theta_{k+1},\lambda_{k+1})- J_{\mathrm{L}}(\theta_{k+1},\lambda_{k}) +J_{\mathrm{L}}(\theta_k,\lambda_k) + \left(\frac{\alpha}{2}-L(1+\lambda_k)\alpha^2\right) \norm{\nabla J_{\mathrm{L}}(\theta_k,\lambda_k)}^2\\
            &\quad - \left(\frac{\alpha}{2}+L(1+\lambda_k)\alpha^2\right) \Vert\nabla J_{\mathrm{L}}(\theta_k,\lambda_k) - \omega_k\Vert^2\\
            &\overset{(a)}= (\lambda_{k+1}-\lambda_{k})J_c(\theta_{k+1}) +J_{\mathrm{L}}(\theta_k,\lambda_k) + \left(\frac{\alpha}{2}-L(1+\lambda_k)\alpha^2\right) \norm{\nabla J_{\mathrm{L}}(\theta_k,\lambda_k)}^2\\
            &\quad - \left(\frac{\alpha}{2}+L(1+\lambda_k)\alpha^2\right) \Vert\nabla J_{\mathrm{L}}(\theta_k,\lambda_k) - \omega_k\Vert^2\\
            &\overset{(b)}\geq -\beta +J_{\mathrm{L}}(\theta_k,\lambda_k) + \left(\frac{\alpha}{2}-L(1+\lambda_k)\alpha^2\right) \norm{\nabla J_{\mathrm{L}}(\theta_k,\lambda_k)}^2\\
            &\quad - \left(\frac{\alpha}{2}+L(1+\lambda_k)\alpha^2\right) \Vert\nabla J_{\mathrm{L}}(\theta_k,\lambda_k) - \omega_k\Vert^2\\
        \end{aligned}
    \end{equation}
    where (a) holds by the definition of $J_{\mathrm{L}}(\theta,\lambda)$ and step (b) is true because $|J_c(\theta)|\leq 1,\forall \theta$ and $|\lambda_{k+1}-\lambda_k|\leq \beta|\hat{J}_c(\theta_k)|\leq \beta$ where the last inequality uses the fact that $|\hat{J}_c(\theta_k)|\leq 1$. Summing over $k\in\{ 1, \cdots, K\}$, we have,
    \begin{equation}
        \begin{aligned}
            \sum_{k=1}^{K}\bigg[J_{\mathrm{L}}(\theta_{k+1},\lambda_{k+1})-J_{\mathrm{L}}(\theta_k,\lambda_k)\bigg]&\geq -\beta K  + \sum_{k=1}^{K}\left(\frac{\alpha}{2}-L(1+\lambda_k)\alpha^2\right) \norm{\nabla J_{\mathrm{L}}(\theta_k,\lambda_k)}^2\\
        &\quad - \sum_{k=1}^{K}\left(\frac{\alpha}{2}+L(1+\lambda_k)\alpha^2\right) \Vert\nabla J_{\mathrm{L}}(\theta_k,\lambda_k) - \omega_k\Vert^2\\
        \end{aligned}
    \end{equation}
    which leads to the following.
    \begin{equation}
        \begin{aligned}
        J_{\mathrm{L}}(\theta_{K+1},\lambda_{K+1})-J_{\mathrm{L}}(\theta_1,\lambda_1)&\geq -\beta K  + \sum_{k=1}^{K}\left(\frac{\alpha}{2}-L(1+\lambda_k)\alpha^2\right) \norm{\nabla J_{\mathrm{L}}(\theta_k,\lambda_k)}^2\\
        &\quad - \sum_{k=1}^{K}\left(\frac{\alpha}{2}+L(1+\lambda_k)\alpha^2\right) \Vert\nabla J_{\mathrm{L}}(\theta_k,\lambda_k) - \omega_k\Vert^2\\
        \end{aligned}
    \end{equation}
    Rearranging the terms and using $0\leq \lambda_k\leq \frac{2}{\delta}$ due to the dual update, we arrive at the following. 
    \begin{align}
    \begin{split}
        &\sum_{k=1}^{K}\norm{\nabla J_{\mathrm{L}}(\theta_k,\lambda_k)}^2 
        \leq \frac{J_{\mathrm{L}}(\theta_{K+1},\lambda_{K+1})-J_{\mathrm{L}}(\theta_1,\lambda_1) + \beta K + (\frac{\alpha}{2}+L(1+\frac{2}{\delta})\alpha^2) \sum_{k=1}^{K}  \Vert\nabla J_{\mathrm{L}}(\theta_k,\lambda_k) - \omega_k\Vert^2}{\frac{\alpha}{2}-L(1+\frac{2}{\delta})\alpha^2}
        \end{split}
    \end{align}
    Choosing $\alpha = \frac{1}{4L(1+\frac{2}{\delta})}$ and dividing both sides by $K$, we conclude the result.
    \begin{align}
    \begin{split}
        \frac{1}{K}\sum_{k=1}^{K}\norm{\nabla J_{\mathrm{L}}(\theta_k,\lambda_k)}^2 &\leq \frac{16L(1+\frac{2}{\delta})}{K}\left[J_{\mathrm{L}}(\theta_{K+1},\lambda_{K+1})-J_{\mathrm{L}}(\theta_1,\lambda_1)\right]\\
        &+  \dfrac{3}{K}\sum_{k=1}^{K}\Vert\nabla J_{\mathrm{L}}(\theta_k,\lambda_k) - \omega_k\Vert^2 + \beta
    \end{split}
    \end{align}
    Recall that $|J_{\mathrm{L}}(\theta,\lambda)|\leq 1+\lambda\leq 1+\frac{2}{\delta}\leq \frac{3}{\delta},\forall \theta\in \Theta,\forall\lambda\geq 0$. Thus,
    \begin{align}
        \frac{1}{K}\sum_{k=1}^{K}\norm{\nabla J_{\mathrm{L}}(\theta_k,\lambda_k)}^2 \leq \frac{288L}{\delta^2 K}+  \dfrac{3}{K}\sum_{k=1}^{K}\Vert\nabla J_{\mathrm{L}}(\theta_k,\lambda_k) - \omega_k\Vert^2 + \beta
    \end{align}
    This completes the proof.
\end{proof}

\subsection{Proof of Theorem \ref{thm_global_convergence}}\label{app_thm}

\subsubsection{Rate of Convergence of the Objective}

Recall the definition of $J_{\mathrm{L}}(\theta,\lambda)=J_r(\theta)+\lambda J_c(\theta)$. Using Lemma \ref{Lemma_global_Lagrange}, we arrive at the following.
\begin{equation}
    \label{eq:bound_Jr1}
    \begin{aligned}
        &\frac{1}{K}\sum_{k=1}^{K}\mathbf{E}\bigg(J_r^{\pi^*}-J_r(\theta_k)\bigg)\leq  G\left(1+\dfrac{1}{\mu_F}\right)\tilde{\mathcal{O}}\left(\sqrt{\beta}+\dfrac{\sqrt{A}Gt_{\mathrm{mix}}}{\delta T^{\xi/2}}+\dfrac{\sqrt{Lt_{\mathrm{mix}}t_{\mathrm{hit}}}}{\delta T^{(1-\xi)/2}}\right)\\
        &+\dfrac{B}{L}\tilde{\mathcal{O}}\left(\dfrac{AG^2t_{\mathrm{mix}}^2}{\delta^2 T^{\xi}}+\dfrac{Lt_{\mathrm{mix}}t_{\mathrm{hit}}}{\delta^2 T^{1-\xi}}+\beta\right) + \mathcal{\tilde{O}}\bigg(\frac{Lt_{\mathrm{mix}}t_{\mathrm{hit}}\mathbf{E}_{s\sim d^{\pi^*}}[KL(\pi^*(\cdot\vert s)\Vert\pi_{\theta_1}(\cdot\vert s))]}{T^{1-\xi}\delta }\bigg)\\
        &-\frac{1}{K}\sum_{k=1}^{K}\mathbf{E}\bigg[\lambda_k\bigg(J_{c}^{\pi^*}-J_c(\theta_k)\bigg)\bigg] + \sqrt{\epsilon_{\mathrm{bias}}} 
    \end{aligned}
\end{equation}

Thus, we need to find a bound for the last term in the above equation.
\begin{equation}
    \label{eq:bound_lambdak}
    \begin{aligned}
        0&\leq (\lambda_{K+1})^2\\
        &\overset{(a)}{=}\sum_{k=1}^{K}\bigg((\lambda_{k+1})^2-(\lambda_{k})^2\bigg)\\
        &=\sum_{k=1}^{K}\bigg(\mathcal{P}_{[0,\frac{2}{\delta}]}\big[\lambda_{k}-\beta\hat{J}_{c}(\theta_k)\big]^2-(\lambda_{k})^2\bigg)\\
        &\leq\sum_{k=1}^{K}\bigg(\big[\lambda_{k}-\beta\hat{J}_{c}(\theta_k)\big]^2-(\lambda_{k})^2\bigg)\\
        &=-2\beta\sum_{k=1}^{K}\lambda_{k}\hat{J}_{c}(\theta_k)+\beta^2\sum_{k=1}^{K}\hat{J}_{c}(\theta_k)^2\\
        &\overset{(b)}\leq 2\beta\sum_{k=1}^{K}\lambda_{k}(J_{c}^{\pi^*}- \hat{J}_{c}(\theta_k))+\beta^2\sum_{k=1}^{K}\hat{J}_{c}(\theta_k)^2\\
        &\leq 2\beta\sum_{k=1}^{K}\lambda_{k}(J_{c}^{\pi^*}- \hat{J}_{c}(\theta_k))+2\beta^2\sum_{k=1}^{K}\hat{J}_{c}(\theta_k)^2\\
        &= 2\beta\sum_{k=1}^{K}\lambda_{k}(J_{c}^{\pi^*}- J_{c}(\theta_k))+2\beta\sum_{k=1}^{K}\lambda_{k}(J_{c}(\theta_k)- \hat{J}_{c}(\theta_k))+2\beta^2\sum_{k=1}^{K}\hat{J}_{c}(\theta_k)^2
    \end{aligned}
\end{equation}
where (a) uses $\lambda_1=0$ and inequality (b) holds because $\theta^*$ is a feasible solution to the constrained optimization problem. Rearranging items and taking the expectation, we have,
\begin{align}
    \label{eq_appndx_71_new}
    \begin{split}
        -\frac{1}{K}\sum_{k=1}^{K}\mathbf{E}\bigg[\lambda_{k}(J_{c}^{\pi^*}- J_{c}(\theta_k))\bigg] &\leq \frac{1}{K}\sum_{k=1}^{K}\mathbf{E}\bigg[\lambda_{k}(J_{c}(\theta_k)- \hat{J}_{c}(\theta_k))\bigg]+\frac{\beta}{K}\sum_{k=1}^{K}\mathbf{E}[\hat{J}_{c}(\theta_k)]^2\\
        & \overset{(a)}{\leq} \frac{1}{K}\sum_{k=1}^{K}\mathbf{E}\left[\lambda_{k}\left(J_{c}(\theta_k)- \hat{J}_{c}(\theta_k)\right)\right]+\beta\\
        &\overset{(b)}{=} \frac{1}{K}\sum_{k=1}^{K}\mathbf{E}\bigg[\lambda_{k}\left(J_{c}(\theta_k)- \mathbf{E}\left[\hat{J}_{c}(\theta_k)\big|\theta_k\right]\right)\bigg]+\beta\\
        &\leq \frac{1}{K}\sum_{k=1}^{K}\mathbf{E}\bigg[\lambda_{k}\left|J_{c}(\theta_k)- \mathbf{E}\left[\hat{J}_{c}(\theta_k)\big|\theta_k\right]\right|\bigg]+\beta\overset{(c)}{\leq} \frac{2}{\delta T^2}+\beta
    \end{split}
\end{align}
where (a) results from $|\hat{J}_{c, \rho}(\theta)|^2\leq 1$, $\forall \theta\in\Theta$ and (b) uses the fact that $\hat{J}_{c, \rho}(\theta_k)$ and $\lambda_k$ are conditionally independent given $\theta_k$. Finally, (c) is a consequence of Lemma \ref{lemma_aux_6}. Combining \eqref{eq_appndx_71_new} with  \eqref{eq:bound_Jr1}, we deduce,
\begin{equation}
    \label{eq_bound_Jr_final}
    \begin{aligned}
       &\frac{1}{K}\sum_{k=1}^{K}\mathbf{E}\bigg(J_r^{\pi^*}-J_r(\theta_k)\bigg)\\
       &\leq  \sqrt{\epsilon_{\mathrm{bias}}} + G\left(1+\dfrac{1}{\mu_F}\right)\tilde{\mathcal{O}}\left(\sqrt{\beta}+\dfrac{\sqrt{A}Gt_{\mathrm{mix}}}{\delta T^{\xi/2}}+\dfrac{\sqrt{Lt_{\mathrm{mix}}t_{\mathrm{hit}}}}{\delta T^{(1-\xi)/2}}\right)+ \mathcal{O}\left(\dfrac{1}{\delta T^2}+\beta\right)\\
        &+\dfrac{B}{L}\tilde{\mathcal{O}}\left(\dfrac{AG^2t_{\mathrm{mix}}^2}{\delta^2 T^{\xi}}+\dfrac{Lt_{\mathrm{mix}}t_{\mathrm{hit}}}{\delta^2 T^{1-\xi}}+\beta\right) + \mathcal{\tilde{O}}\bigg(\frac{Lt_{\mathrm{mix}}t_{\mathrm{hit}}\mathbf{E}_{s\sim d^{\pi^*}}[KL(\pi^*(\cdot\vert s)\Vert\pi_{\theta_1}(\cdot\vert s))]}{T^{1-\xi}\delta }\bigg)\\
        &\leq \sqrt{\epsilon_{\mathrm{bias}}} + G\left(1+\dfrac{1}{\mu_F}\right)\tilde{\mathcal{O}}\left(\sqrt{\beta}+\dfrac{\sqrt{A}Gt_{\mathrm{mix}}}{\delta T^{\xi/2}}+\dfrac{\sqrt{Lt_{\mathrm{mix}}t_{\mathrm{hit}}}}{\delta T^{(1-\xi)/2}}\right)
    \end{aligned}
\end{equation}

The last inequality presents only the dominant terms of $\beta$ and $T$.
   
\subsubsection{Rate of Constraint Violation}
 Since $\{\lambda_k\}_{k=1}^{K}$ are derived by applying the dual update in Algorithm \ref{alg:PG_MAG}, we have,
\begin{equation}
    \label{eq_appndx_73_new}
    \begin{aligned}
	    &\mathbf{E}\left\vert\lambda_{k+1} - \dfrac{2}{\delta}\right\vert^2 
		\overset{(a)}{\leq} \mathbf{E}\left|\lambda_{k} - \beta \hat{J}_c(\theta_{k}) - \dfrac{2}{\delta}\right|^2\\
		&=\mathbf{E}\left|\lambda_{k} -\dfrac{2}{\delta}\right|^2 -2\beta \mathbf{E}\left[\hat{J}_c(\theta_k)\left(\lambda_{k}  -\dfrac{2}{\delta}\right)\right] +\beta^2 \mathbf{E}\left[\hat{J}^2_c(\theta_{k})\right]
		\\
		&\overset{(b)}\leq\mathbf{E}\left|\lambda_{k} -\dfrac{2}{\delta}\right|^2 - 2\beta \mathbf{E}\left[J_c(\theta_{k})\left(\lambda_{k} -\dfrac{2}{\delta}\right)\right]-2\beta \mathbf{E}\left[\left(\hat{J}_c(\theta_{k})-J_c(\theta_{k})\right)\left(\lambda_{k}-\dfrac{2}{\delta}\right)\right] + \beta^2\\
        &\overset{(c)}{=}\mathbf{E}\left|\lambda_{k} -\dfrac{2}{\delta}\right|^2 - 2\beta \mathbf{E}\left[J_c(\theta_{k})\left(\lambda_{k} -\dfrac{2}{\delta}\right)\right]-2\beta \mathbf{E}\left[\left(\mathbf{E}\left[\hat{J}_c(\theta_{k})\big|\theta_k\right]-J_c(\theta_{k})\right)\left(\lambda_{k}-\dfrac{2}{\delta}\right)\right] + \beta^2\\
        &\leq \mathbf{E}\left|\lambda_{k} -\dfrac{2}{\delta}\right|^2 - 2\beta \mathbf{E}\left[J_c(\theta_{k})\left(\lambda_{k} -\dfrac{2}{\delta}\right)\right]+2\beta \mathbf{E}\left[\left|\mathbf{E}\left[\hat{J}_c(\theta_{k})\big|\theta_k\right]-J_c(\theta_{k})\right|\left|\lambda_{k}-\dfrac{2}{\delta}\right|\right] + \beta^2\\
        &\overset{(d)}{\leq} \mathbf{E}\left|\lambda_{k} -\dfrac{2}{\delta}\right|^2 - 2\beta \mathbf{E}\left[J_c(\theta_{k})\left(\lambda_{k} -\dfrac{2}{\delta}\right)\right] + \dfrac{4\beta}{\delta T^2} + \beta^2
	\end{aligned}
\end{equation}
where $(a)$ is due to the non-expansiveness of the projection $\mathcal{P}_{[0, \frac{2}{\delta}]}$ and $(b)$ holds because $\hat{J}_c(\theta)\in[0,1]$, $\forall \theta\in\Theta$ according to its definition in Algorithm \ref{alg:PG_MAG}. Finally, $(c)$ is a consequence of the fact that $\hat{J}_c(\theta_k)$ and $\lambda_k$ are conditionally independent given $\theta_k$ whereas $(d)$ applies $|\lambda_k-\frac{2}{\delta}|\leq \frac{2}{\delta}$ and Lemma \ref{lemma_aux_6}. Averaging \eqref{eq_appndx_73_new} over $k\in\{1,\ldots,K\}$, we get,
\begin{equation}
    \begin{aligned}
    \frac{1}{K}\sum_{k=1}^{K} \mathbf{E}\left[J_c(\theta_k)\left(\lambda_{k} -\frac{2}{\delta}\right)\right] &\leq \frac{\left\vert\lambda_{1} - \frac{2}{\delta}\right\vert^2 -\left\vert\lambda_{K+1} - \frac{2}{\delta}\right\vert^2 }{2\beta K} + \dfrac{2}{\delta T^2} + \dfrac{\beta}{2}
    \overset{(a)}{\leq} \dfrac{2}{\delta^2 \beta K} + \dfrac{2}{\delta T^2} + \dfrac{\beta}{2}
    \end{aligned}
\end{equation}
where (a) uses $\lambda_1=0$. Note that $\lambda_k J_c^{\pi^*}\geq 0$, $\forall k$. Adding the above inequality to \eqref{eq:bound_Jr1} at both sides, we, therefore, have,
\begin{equation}
    \label{eq:bound_Jc2}
    \begin{aligned}
        &\mathbf{E}\bigg[J_r^{\pi^*}-\frac{1}{K}\sum_{k=1}^{K}J_r(\theta_k)\bigg]+\frac{2}{\delta}\mathbf{E}\bigg[\frac{1}{K}\sum_{k=1}^{K}-J_c(\theta_k)\bigg]\leq \sqrt{\epsilon_{\mathrm{bias}}} +\frac{2}{\delta^2\beta K} +\frac{2}{T^2\delta}+ \frac{\beta}{2}\\
        &+G\left(1+\dfrac{1}{\mu_F}\right)\tilde{\mathcal{O}}\left(\sqrt{\beta}+\dfrac{\sqrt{A}Gt_{\mathrm{mix}}}{\delta T^{\xi/2}}+\dfrac{\sqrt{Lt_{\mathrm{mix}}t_{\mathrm{hit}}}}{\delta T^{(1-\xi)/2}}\right) + \dfrac{B}{L}\tilde{\mathcal{O}}\left(\dfrac{AG^2t_{\mathrm{mix}}^2}{\delta^2 T^{\xi}}+\dfrac{Lt_{\mathrm{mix}}t_{\mathrm{hit}}}{\delta^2 T^{1-\xi}}+\beta\right)\\
        &+ \mathcal{\tilde{O}}\bigg(\frac{Lt_{\mathrm{mix}}t_{\mathrm{hit}}\mathbf{E}_{s\sim d^{\pi^*}}[KL(\pi^*(\cdot\vert s)\Vert\pi_{\theta_1}(\cdot\vert s))]}{T^{1-\xi}\delta }\bigg)
    \end{aligned}
\end{equation}
Since the functions $\{J_g(\theta_k)\}, k\in\{0, \cdots, K-1\}, g\in\{r, c\}$ are linear in occupancy measure, there exists a policy $\bar{\pi}$ such that the following holds $\forall g\in\{r, c\}$.
\begin{equation}\label{eq_avg_value}
	\frac{1}{K}\sum_{k=1}^{K}J_g(\theta_k)=J_g^{\bar\pi}
\end{equation}
Injecting the above relation to \eqref{eq:bound_Jc2}, we have
\begin{equation}
    \begin{aligned}
        &\mathbf{E}\bigg[J_r^{\pi^*}-J_r^{\bar\pi}\bigg]+\frac{2}{\delta}\mathbf{E}\bigg[-J_c^{\bar\pi}\bigg]\leq \sqrt{\epsilon_{\mathrm{bias}}}+\frac{2}{\delta^2\beta K} +\frac{2}{T^2\delta}+ \frac{\beta}{2} \\
        &+G\left(1+\dfrac{1}{\mu_F}\right)\tilde{\mathcal{O}}\left(\sqrt{\beta}+\dfrac{\sqrt{A}Gt_{\mathrm{mix}}}{\delta T^{\xi/2}}+\dfrac{\sqrt{Lt_{\mathrm{mix}}t_{\mathrm{hit}}}}{\delta T^{(1-\xi)/2}}\right)\\
        &+\dfrac{B}{L}\tilde{\mathcal{O}}\left(\dfrac{AG^2t_{\mathrm{mix}}^2}{\delta^2 T^{\xi}}+\dfrac{Lt_{\mathrm{mix}}t_{\mathrm{hit}}}{\delta^2 T^{1-\xi}}+\beta\right) + \mathcal{\tilde{O}}\bigg(\frac{Lt_{\mathrm{mix}}t_{\mathrm{hit}}\mathbf{E}_{s\sim d^{\pi^*}}[KL(\pi^*(\cdot\vert s)\Vert\pi_{\theta_1}(\cdot\vert s))]}{T^{1-\xi}\delta }\bigg)
    \end{aligned}
\end{equation}
By Lemma \ref{lem.constraint}, we arrive at,
\begin{equation}
\label{eq_83}
    \begin{aligned}
        &\mathbf{E}\bigg[-J_c^{\bar\pi}\bigg]\\
        &\leq  \delta\sqrt{\epsilon_{\mathrm{bias}}}+\frac{2}{\delta\beta K} +\frac{2}{T^2}+\frac{\delta\beta}{2}+G\left(1+\dfrac{1}{\mu_F}\right)\tilde{\mathcal{O}}\left(\delta\sqrt{\beta}+\dfrac{\sqrt{A}Gt_{\mathrm{mix}}}{ T^{\xi/2}}+\dfrac{\sqrt{Lt_{\mathrm{mix}}t_{\mathrm{hit}}}}{ T^{(1-\xi)/2}}\right)\\
        &+\dfrac{B}{L}\tilde{\mathcal{O}}\left(\dfrac{AG^2t_{\mathrm{mix}}^2}{\delta T^{\xi}}+\dfrac{Lt_{\mathrm{mix}}t_{\mathrm{hit}}}{\delta T^{1-\xi}}+\delta\beta\right) + \mathcal{\tilde{O}}\bigg(\frac{Lt_{\mathrm{mix}}t_{\mathrm{hit}}\mathbf{E}_{s\sim d^{\pi^*}}[KL(\pi^*(\cdot\vert s)\Vert\pi_{\theta_1}(\cdot\vert s))]}{T^{1-\xi}}\bigg)\\
        &\leq \delta\sqrt{\epsilon_{\mathrm{bias}}} + \tilde{\mathcal{O}}\left(\dfrac{2t_{\mathrm{mix}}t_{\mathrm{hit}}}{\delta \beta T^{1-\xi}}\right) + G\left(1+\dfrac{1}{\mu_F}\right)\tilde{\mathcal{O}}\left(\delta\sqrt{\beta}+\dfrac{\sqrt{A}Gt_{\mathrm{mix}}}{ T^{\xi/2}}+\dfrac{\sqrt{Lt_{\mathrm{mix}}t_{\mathrm{hit}}}}{ T^{(1-\xi)/2}}\right)
    \end{aligned}
\end{equation}

The last inequality presents only the dominant terms of $\beta$ and $T$.

\subsubsection{Optimal Choice of $\beta$ and $\xi$}

If we choose $\beta=T^{-\eta}$ for some $\eta\in(0,1)$, then following \eqref{eq_bound_Jr_final} and \eqref{eq_83}, we can write,
\begin{align}
     \frac{1}{K}\sum_{k=1}^{K}\mathbf{E}\bigg(J_r^{\pi^*}-J_r(\theta_k)\bigg)&\leq  \sqrt{\epsilon_{\mathrm{bias}}} + \tilde{\mathcal{O}}\left(T^{-\eta/2}+T^{-\xi/2}+T^{-(1-\xi)/2}\right),\\
     \mathbf{E}\left[\frac{1}{K}\sum_{k=1}^{K}-J_c(\theta_k)\right]&\leq \delta\sqrt{\epsilon_{\mathrm{bias}}} + \tilde{\mathcal{O}}\left(T^{-(1-\xi-\eta)}+T^{-\eta/2}+T^{-\xi/2}+T^{-(1-\xi)/2}\right)
\end{align}

Clearly, the optimal values of $\eta$ and $\xi$ can be obtained by solving the following optimization.
\begin{align}
    {\max}_{(\eta, \xi)\in (0,1)^2} \min \left\lbrace 1-\xi-\eta, \dfrac{\eta}{2}, \dfrac{\xi}{2}, \dfrac{1-\xi}{2} \right\rbrace 
\end{align}

One can easily verify that $(\xi, \eta) = \left(2/5, 2/5\right)$ is the solution of the above optimization. Therefore, the convergence rate of the objective function can be written as follows.

\begin{equation}
    \label{eq_bound_Jr_final_par_substituted}
    \begin{aligned}
       &\frac{1}{K}\sum_{k=1}^{K}\mathbf{E}\bigg(J_r^{\pi^*}-J_r(\theta_k)\bigg)\\
       &\leq  \sqrt{\epsilon_{\mathrm{bias}}} + G\left(1+\dfrac{1}{\mu_F}\right)\tilde{\mathcal{O}}\left(\dfrac{1}{T^{1/5}}+\dfrac{\sqrt{A}Gt_{\mathrm{mix}}}{\delta T^{1/5}}+\dfrac{\sqrt{Lt_{\mathrm{mix}}t_{\mathrm{hit}}}}{\delta T^{3/10}}\right)+ \mathcal{O}\left(\dfrac{1}{\delta T^2}+\dfrac{1}{T^{2/5}}\right)\\
        &+\dfrac{B}{L}\tilde{\mathcal{O}}\left(\dfrac{\delta^2+AG^2t_{\mathrm{mix}}^2}{\delta^2 T^{2/5}}+\dfrac{Lt_{\mathrm{mix}}t_{\mathrm{hit}}}{\delta^2 T^{3/5}}\right) + \mathcal{\tilde{O}}\bigg(\frac{Lt_{\mathrm{mix}}t_{\mathrm{hit}}\mathbf{E}_{s\sim d^{\pi^*}}[KL(\pi^*(\cdot\vert s)\Vert\pi_{\theta_1}(\cdot\vert s))]}{T^{3/5}\delta }\bigg)\\
        &\leq \sqrt{\epsilon_{\mathrm{bias}}} + \dfrac{\sqrt{A}G^2t_{\mathrm{mix}}}{\delta}\left(1+\dfrac{1}{\mu_F}\right)\tilde{\mathcal{O}}\left(T^{-1/5}\right)
    \end{aligned}
\end{equation}

The last expression only considers the dominant terms of $T$. Similarly, the constraint violation rate can be computed as,
\begin{align}
\label{eq_87_new}
    \begin{split}
        &\mathbf{E}\bigg[\frac{1}{K}\sum_{k=1}^{K}-J_c(\theta_k)\bigg] \\
        &\leq \delta\sqrt{\epsilon_{\mathrm{bias}}}+ \tilde{\mathcal{O}}\left(\dfrac{t_{\mathrm{mix}}t_{\mathrm{hit}}}{\delta T^{1/5}}+\dfrac{1}{ T^2}+\dfrac{\delta}{T^{2/5}}\right) +G\left(1+\dfrac{1}{\mu_F}\right)\tilde{\mathcal{O}}\left(\dfrac{\delta+\sqrt{A}Gt_{\mathrm{mix}}}{ T^{1/5}}+\dfrac{\sqrt{Lt_{\mathrm{mix}}t_{\mathrm{hit}}}}{ T^{3/10}}\right)\\
        &+\dfrac{B}{L}\tilde{\mathcal{O}}\left(\dfrac{\delta^2+ AG^2t_{\mathrm{mix}}^2}{\delta T^{2/5}}+\dfrac{Lt_{\mathrm{mix}}t_{\mathrm{hit}}}{\delta T^{3/5}}\right) + \mathcal{\tilde{O}}\bigg(\frac{Lt_{\mathrm{mix}}t_{\mathrm{hit}}\mathbf{E}_{s\sim d^{\pi^*}}[KL(\pi^*(\cdot\vert s)\Vert\pi_{\theta_1}(\cdot\vert s))]}{T^{3/5}}\bigg)\\
        &\leq \delta\sqrt{\epsilon_{\mathrm{bias}}} + \tilde{\mathcal{O}}\left(\dfrac{t_{\mathrm{mix}}t_{\mathrm{hit}}}{\delta T^{1/5}}\right) + \sqrt{A}G^2t_{\mathrm{mix}}\left(1+\dfrac{1}{\mu_F}\right)\tilde{\mathcal{O}}\left(T^{-1/5}\right)
    \end{split}
\end{align}
where the last expression contains only the dominant terms of $T$. This concludes the theorem.
\section{Proofs for the Regret and Violation Analysis}
\subsection{Proof of Lemma \ref{lemma_last}}
\begin{proof}
    Using Taylor's expansion, we can write the following $\forall (s, a)\in \mathcal{S}\times \mathcal{A}$, $\forall k$.
\begin{align}
\label{eq_pi_lipschitz}
\begin{split}
        |\pi_{\theta_{k+1}}(a|s)-\pi_{\theta_{k}}(a|s)|&=\left|(\theta_{k+1}-\theta_k)^T\nabla_{\theta}\pi_{\bar\theta}(a|s) \right| \\&=\pi_{\bar{\theta}_k}(a|s)\left|(\theta_{k+1}-\theta_k)^T\nabla_{\theta}\log \pi_{\bar{\theta}_k}(a|s) \right|\\
        &\leq \pi_{\bar{\theta}_k}(a|s) \norm{\theta_{k+1}-\theta_k}\norm{\nabla_{\theta}\log \pi_{\bar{\theta}_k}(a|s)}\overset{(a)}{\leq} G\norm{\theta_{k+1}-\theta_k}
\end{split} 
\end{align}
where $\bar{\theta}_k$ is some convex combination\footnote{Note that, in general, $\bar{\theta}_k$ is dependent on $(s, a)$.} of $\theta_{k}$ and $\theta_{k+1}$ and $(a)$ results from Assumption \ref{ass_score}. This concludes the first statement. Applying \eqref{eq_pi_lipschitz} and Lemma \ref{lemma_aux_5}, we obtain the following for $g\in\{r, c\}$.
\begin{align}
\label{eq_long_49}
\begin{split}
		&\sum_{k=1}^{K}\mathbf{E}\big|J_g(\theta_{k+1}) - J_g(\theta_{k})\big| = \sum_{k=1}^{K}\mathbf{E}\left|\sum_{s,a}d^{\pi_{\theta_{k+1}}}(s)(\pi_{\theta_{k+1}}(a|s)-\pi_{\theta_{k}}(a|s))Q_g^{\pi_{\theta_{k}}}(s, a)\right|\\
        &\leq \sum_{k=1}^{K}\mathbf{E}\left[\sum_{s,a}d^{\pi_{\theta_{k+1}}}(s)\left|\pi_{\theta_{k+1}}(a|s)-\pi_{\theta_{k}}(a|s)\right|\left|Q_g^{\pi_{\theta_{k}}}(s, a)\right|\right]\\
		&\leq G\sum_{k=1}^{K}\mathbf{E}\left[\sum_{s,a}d^{\pi_{\theta_{k+1}}}(s)\Vert\theta_{k+1}-\theta_{k}\Vert|Q_g^{\pi_{\theta_{k}}}(s, a)|\right]\\
        &\overset{(a)}{\leq} G\alpha\sum_{k=1}^{K}\mathbf{E}\left[\sum_{a}\underbrace{\sum_{s}d^{\pi_{\theta_{k+1}}}(s)}_{=1}\Vert\omega_k\Vert\cdot 6t_{\mathrm{mix}}\right]
        \overset{}{=} 6AG\alpha t_{\mathrm{mix}} \sum_{k=1}^{K}\mathbf{E}\norm{\omega_k}\\
        &\overset{(b)}\leq 6AG\alpha t_{\mathrm{mix}}\sqrt{K}\left(\sum_{k=1}^{K}\mathbf{E}\norm{\omega_k}^2\right)^{\frac{1}{2}}\\
        &\overset{(c)}{\leq} \mathcal{\tilde{O}}\left(\dfrac{\alpha AG}{\delta t_{\mathrm{hit}}}\left[\left(\sqrt{A}G t_{\mathrm{mix}}+\delta\right)T^{\frac{2}{5}}+\sqrt{Lt_{\mathrm{mix}}t_{\mathrm{hit}}}T^{\frac{3}{10}}\right]\right)
        \end{split}
	\end{align}
 
Inequality $(a)$ uses Lemma \ref{lemma_aux_2} and the update rule $\theta_{k+1}=\theta_k+\alpha \omega _k$. Step $(b)$ holds by the Cauchy inequality and Jensen inequality whereas $(c)$ can be derived using $\eqref{eq_34}$ and substituting $K=T/H$. This establishes the second statement. Next, recall from $(\ref{eq_r_pi_theta})$ that for any policy $\pi_{\theta}$,  $g^{\pi_{\theta}}(s) \triangleq \sum_a\pi_{\theta}(a|s)g(s, a)$.
	Note that, for any policy parameter $\theta$, and any state $s\in\mathcal{S}$, the following holds.
	\begin{align}
        \label{eq_49}
		V_g^{\pi_{\theta}}(s)=\sum_{t=0}^{\infty}\left<(P^{\pi_{\theta}})^t(s,\cdot) - d^{\pi_{\theta}}, g^{\pi_{\theta}}\right> = \sum_{t=0}^{N-1}\left<(P^{\pi_{\theta}})^t({s,\cdot}),g^{\pi_{\theta}}\right> - NJ(\theta) + \sum_{t=N}^{\infty}\left<(P^{\pi_{\theta}})^t({s,\cdot}) - d^{\pi_{\theta}},g^{\pi_{\theta}}\right>.
	\end{align}

    Define the following quantity.
    \begin{align}
        \label{def_error}
        \delta^{\pi_{\theta}}(s, T) \triangleq \sum_{t=N}^{\infty}\norm{(P^{\pi_{\theta}})^t({s,\cdot}) - d^{\pi_{\theta}}}_1 ~~\text{where} ~N=4t_{\mathrm{mix}}(\log_2 T)
    \end{align}
 
    Lemma \ref{lemma_aux_3} states that for sufficiently large $T$, we have $\delta^{\pi_{\theta}}(s, T)\leq \frac{1}{T^3}$ for any policy $\pi_{\theta}$ and state $s$. Combining this result with the fact that the $g^{\pi_\theta}$ function is absolutely bounded in $[0, 1]$, we obtain,
    \begin{align}
        \label{eq_exp_diff_v}
        \begin{split}    
            &\sum_{k=1}^K\mathbf{E}|V_g^{\pi_{\theta_{k+1}}}(s_k) - V_g^{\pi_{\theta_{k}}}(s_k)|\\
            &\leq \sum_{k=1}^K\mathbf{E}\left|\sum_{t=0}^{N-1}\left<(P^{\pi_{\theta_{k+1}}})^t({s_k,\cdot}) - (P^{\pi_{\theta_k}})^t({s_k,\cdot}), g^{\pi_{\theta_{k+1}}}\right>\right| + \sum_{k=1}^K\mathbf{E}\left|\sum_{t=0}^{N-1}\left<(P^{\pi_{\theta_k}})^t({s_k,\cdot}), g^{\pi_{\theta_{k+1}}}-g^{\pi_{\theta_k}}\right>\right| \\
            &+ N\sum_{k=1}^K\mathbf{E}|J_g(\theta_{k+1}) - J_g(\theta_{k})| + \frac{2K}{T^3}\\
            &\overset{(a)}{\leq} \sum_{k=1}^K\sum_{t=0}^{N-1}\mathbf{E}\norm{ (P^{\pi_{\theta_{k+1}}})^t - (P^{\pi_{\theta_k}})^t)g^{\pi_{\theta_{k+1}}} }_{\infty} + \sum_{k=1}^K\sum_{t=0}^{N-1}\mathbf{E}\norm{g^{\pi_{\theta_{k+1}}}-g^{\pi_{\theta_k}}}_{\infty} \\
            &+ \mathcal{\tilde{O}}\left(\dfrac{\alpha AG t_{\mathrm{mix}}}{\delta t_{\mathrm{hit}}}\left[\left(\sqrt{A}G t_{\mathrm{mix}}+\delta\right)T^{\frac{2}{5}}+\sqrt{Lt_{\mathrm{mix}}t_{\mathrm{hit}}}T^{\frac{3}{10}}\right]\right)
        \end{split}
    \end{align}
    where $(a)$ follows from \eqref{eq_long_49} and substituting $N=4t_{\mathrm{mix}}(\log_2 T)$. For the first term, note that,
\begin{align}
    \label{eq_long_recursion}
    \begin{split}
        &\norm{ ((P^{\pi_{\theta_{k+1}}})^t - (P^{\pi_{\theta_k}})^t)g^{\pi_{\theta_{k+1}}} }_{\infty}\\ &\leq \norm{ P^{\pi_{\theta_{k+1}}}((P^{\pi_{\theta_{k+1}}})^{t-1} - (P^{\pi_{\theta_k}})^{t-1})g^{\pi_{\theta_{k+1}}} }_{\infty} + \norm{ (P^{\pi_{\theta_{k+1}}} - P^{\pi_{\theta_k}})(P^{\pi_{\theta_k}})^{t-1}g^{\pi_{\theta_{k+1}}} }_{\infty}\\
        &\overset{(a)}{\leq} \norm{ ((P^{\pi_{\theta_{k+1}}})^{t-1} - (P^{\pi_{\theta_k}})^{t-1})g^{\pi_{\theta_{k+1}}} }_{\infty} + \max_s\norm{P^{\pi_{\theta_{k+1}}}({s,\cdot})-P^{\pi_{\theta_k}}({s,\cdot})}_1
    \end{split}
\end{align}

Inequality $(a)$ holds since every row of $P^{\pi_{\theta_k}}$ sums to $1$ and $\norm{(P^{\pi_{\theta_k}})^{t-1}g^{\pi_{\theta_{k+1}}}}_{\infty}\leq 1$. Moreover, invoking \eqref{eq_pi_lipschitz}, and the parameter update rule $\theta_{k+1}=\theta_k + \alpha \omega_k$, we get,
\begin{align*}
    \max_s\Vert P^{\pi_{\theta_{k+1}}}({s,\cdot})-P^{\pi_{\theta_k}}({s,\cdot})\Vert_1 &= \max_s\left| \sum_{s'}\sum_a(\pi_{\theta_{k+1}}(a|s)-\pi_{\theta_k}(a|s))P(s'|s, a)\right| \\
    &\leq G \norm{\theta_{k+1}-\theta_k}\max_s\left| \sum_{s'}\sum_a P(s'|s, a)\right| \\
    &\leq \alpha A G\norm{\omega_k}  
\end{align*}
 
Plugging the above result into $(\ref{eq_long_recursion})$ and using a recursive argument, we get,
\begin{align*}
    \norm{ ((P^{\pi_{\theta_{k+1}}})^t - (P^{\pi_{\theta_k}})^t)g^{\pi_{\theta_{k+1}}} }_{\infty} &\leq \sum_{t'=1}^{t} \max_s\norm{P^{\pi_{\theta_{k+1}}}({s,\cdot})-P^{\pi_{\theta_k}}({s,\cdot})}_1\\
    &\leq \sum_{t'=1}^{t}\alpha AG\norm{\omega_k}  \leq \alpha t A G\norm{\omega_k}
\end{align*}
    
Finally, we have
\begin{align}
    \label{eq_app_54}
    \begin{split}
        \sum_{k=1}^K\sum_{t=0}^{N-1} &\mathbf{E}\norm{ ((P^{\pi_{\theta_{k+1}}})^t - (P^{\pi_{\theta_k}})^t)g^{\pi_{\theta_{k+1}}} }_{\infty}\\
        &\leq \sum_{k=1}^K\sum_{t=0}^{N-1}\alpha t A G\norm{\omega_k}\\
        &\leq \mathcal{O}(\alpha A G N^2) \sum_{k=1}^K\mathbf{E}\norm{\omega_k}\\
        &\leq \mathcal{O}(\alpha A G N^2\sqrt{K}) \left(\sum_{k=1}^K\mathbf{E}\norm{\omega_k}^2\right)^{\frac{1}{2}}\\
        &\overset{(a)}{=} \mathcal{\tilde{O}}\left(\dfrac{\alpha AG t_{\mathrm{mix}}}{\delta t_{\mathrm{hit}}}\left[\left(\sqrt{A}G t_{\mathrm{mix}}+\delta\right)T^{\frac{2}{5}}+\sqrt{Lt_{\mathrm{mix}}t_{\mathrm{hit}}}T^{\frac{3}{10}}\right]\right)
    \end{split}
\end{align}
where $(a)$ follows from \eqref{eq_34}. Moreover, notice that,
\begin{equation}
    \label{eq_app_55}
    \begin{aligned}
        \sum_{k=1}^{K}\sum_{t=0}^{N-1}\mathbf{E}\norm{g^{\pi_{\theta_{k+1}}}-g^{\pi_{\theta_{k}}}}_{\infty}&\overset{}{\leq} \sum_{k=1}^{K}\sum_{t=0}^{N-1}\mathbf{E}\left[\max_s\left|\sum_a(\pi_{\theta_{k+1}}(a|s)-\pi_{\theta_{k}}(a|s))g(s,a)\right|\right]\\
        &\overset{(a)}{\leq}\alpha AGN \sum_{k=1}^{K} \mathbf{E}\norm{\omega_k}\\
        &\leq \alpha AGN\sqrt{K} \left(\sum_{k=1}^{K} \mathbf{E}\norm{\omega_k}^2\right)^{\frac{1}{2}}\\
        &\overset{(b)}{\leq} \mathcal{\tilde{O}}\left(\dfrac{\alpha AG}{\delta t_{\mathrm{hit}}}\left[\left(\sqrt{A}G t_{\mathrm{mix}}+\delta\right)T^{\frac{2}{5}}+\sqrt{Lt_{\mathrm{mix}}t_{\mathrm{hit}}}T^{\frac{3}{10}}\right]\right)
    \end{aligned}
\end{equation}
where $(a)$ follows from \eqref{eq_pi_lipschitz} and the update rule $\theta_{k+1}=\theta_k + \alpha \omega_k$ whereas $(b)$ is a consequence of \eqref{eq_34}. Combining \eqref{eq_exp_diff_v}, \eqref{eq_app_54}, and  \eqref{eq_app_55}, we establish the third statement.
\end{proof}

\subsection{Proof of Theorem \ref{thm_regret}}
\begin{proof}
Recall the decomposition of the regret in section \ref{sec_regret} and take the expectation.
    \begin{align}
        \begin{split}
        &\mathbf{E}[\mathrm{Reg}_T] = \sum_{t=0}^{T-1} \left(J_r^{\pi^*} - r(s_t, a_t)\right)=H\sum_{k=1}^{K}\left(J_r^{\pi^*}-J_r({\theta_k})\right)+\sum_{k=1}^{K}\sum_{t\in\mathcal{I}_k} \left(J_r(\theta_k)-r(s_t, a_t)\right)\\
        &=H\sum_{k=1}^{K}\left(J_r^{\pi^*}-J_r({\theta_k})\right)+\mathbf{E}\left[\sum_{k=1}^{K-1} V_r^{\pi_{\theta_{k+1}}}(s_{kH})-V_r^{\pi_{\theta_k}}(s_{kH})\right]+\mathbf{E}\left[ V_r^{\pi_{\theta_K}}(s_{T})-V_r^{\pi_{\theta_0}}(s_{0})\right]
        \end{split}
    \end{align}
    Using the result in \eqref{eq_bound_Jr_final_par_substituted}, Lemma \ref{lemma_last} and Lemma \ref{lemma_aux_2}, we get,
    \begin{align}
        \begin{split}
        &\mathbf{E}[\mathrm{Reg}_T]\leq T\sqrt{\epsilon_{\mathrm{bias}}}+G\left(1+\dfrac{1}{\mu_F}\right)\tilde{\mathcal{O}}\left(T^{\frac{4}{5}}+\dfrac{\sqrt{A}Gt_{\mathrm{mix}}}{\delta }T^{\frac{4}{5}}+\dfrac{\sqrt{Lt_{\mathrm{mix}}t_{\mathrm{hit}}}}{\delta}T^{\frac{7}{10}}\right)+ \mathcal{O}\left(\dfrac{1}{T}+T^{\frac{3}{5}}\right)\\
        &+\dfrac{B}{L}\tilde{\mathcal{O}}\left(\dfrac{\delta^2+AG^2t_{\mathrm{mix}}^2}{\delta^2}T^{\frac{3}{5}}+\dfrac{Lt_{\mathrm{mix}}t_{\mathrm{hit}}}{\delta^2}T^{\frac{2}{5}}\right)
        +\mathcal{\tilde{O}}\bigg(\frac{Lt_{\mathrm{mix}}t_{\mathrm{hit}}\mathbf{E}_{s\sim d^{\pi^*}}[KL(\pi^*(\cdot\vert s)\Vert\pi_{\theta_1}(\cdot\vert s))]}{\delta }T^{\frac{2}{5}}\bigg) \\
        &+ \mathcal{\tilde{O}}\left(\dfrac{\alpha AG t_{\mathrm{mix}}}{\delta t_{\mathrm{hit}}}\left[\left(\sqrt{A}G t_{\mathrm{mix}}+\delta\right)T^{\frac{2}{5}}+\sqrt{Lt_{\mathrm{mix}}t_{\mathrm{hit}}}T^{\frac{3}{10}}\right]\right)+\mathcal{O}(t_{\mathrm{mix}})
        \end{split}
    \end{align}
Similarly, for the constraint violation, we have
    \begin{align}
        \begin{split}
        \mathbf{E}[\mathrm{Vio}_T] &= \sum_{t=0}^{T-1} \left(- c(s_t, a_t)\right)=H\sum_{k=1}^{K}-J_c({\theta_k})+\sum_{k=1}^{K}\sum_{t\in\mathcal{I}_k} \left(J_c(\theta_k)-c(s_t, a_t)\right)\\
        &=-H\sum_{k=1}^{K}J_c({\theta_k})+\mathbf{E}\left[\sum_{k=1}^{K-1} V_c^{\pi_{\theta_{k+1}}}(s_{kH})-V_c^{\pi_{\theta_k}}(s_{kH})\right]+\mathbf{E}\left[ V_c^{\pi_{\theta_K}}(s_{T})-V_c^{\pi_{\theta_0}}(s_{0})\right]
        \end{split}
    \end{align}
    Using the result in \eqref{eq_87_new}, Lemma \ref{lemma_last} and Lemma \ref{lemma_aux_2}, we get,
    \begin{align}
        \begin{split}
        \mathbf{E}[\mathrm{Vio}_T]&\leq T\delta\sqrt{\epsilon_{\mathrm{bias}}}+G\left(1+\dfrac{1}{\mu_F}\right)\tilde{\mathcal{O}}\left(\left[\delta+\sqrt{A}Gt_{\mathrm{mix}}\right]T^{\frac{4}{5}} + \sqrt{Lt_{\mathrm{mix}}t_{\mathrm{hit}}}T^{\frac{7}{10}}\right)\\
        &+ \mathcal{O}\left(\dfrac{t_{\mathrm{mix}}t_{\mathrm{hit}}}{\delta}T^{\frac{4}{5}}+\frac{1}{\delta T}+\delta T^{\frac{3}{5}}\right)+\dfrac{B}{L}\tilde{\mathcal{O}}\left(\dfrac{\delta^2+AG^2t_{\mathrm{mix}}^2}{\delta}T^{\frac{3}{5}}+\dfrac{Lt_{\mathrm{mix}}t_{\mathrm{hit}}}{\delta}T^{\frac{2}{5}}\right)\\
        &+\mathcal{\tilde{O}}\bigg(Lt_{\mathrm{mix}}t_{\mathrm{hit}}\mathbf{E}_{s\sim d^{\pi^*}}[KL(\pi^*(\cdot\vert s)\Vert\pi_{\theta_1}(\cdot\vert s))] T^{\frac{2}{5}}\bigg) \\
        &+ \mathcal{\tilde{O}}\left(\dfrac{\alpha AG t_{\mathrm{mix}}}{\delta t_{\mathrm{hit}}}\left[\left(\sqrt{A}G t_{\mathrm{mix}}+\delta\right)T^{\frac{2}{5}}+\sqrt{Lt_{\mathrm{mix}}t_{\mathrm{hit}}}T^{\frac{3}{10}}\right]\right)+\mathcal{O}(t_{\mathrm{mix}})
        \end{split}
    \end{align}

    This concludes the theorem.
\end{proof}

\section{Some Auxiliary Lemmas for the Proofs}
\begin{lemma}
    \label{lemma_aux_2}
    \cite[Lemma 14]{wei2020model} For any ergodic MDP with mixing time $t_{\mathrm{mix}}$, the following holds $\forall (s, a)\in\mathcal{S}\times \mathcal{A}$, any policy $\pi$ and $\forall g\in\{r, c\}$.
    \begin{align*}
        (a) |V_g^{\pi}(s)|\leq 5 t_{\mathrm{mix}},~~
        (b) |Q_g^{\pi}(s, a)|\leq 6 t_{\mathrm{mix}}
    \end{align*}
\end{lemma}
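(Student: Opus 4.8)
The plan is to prove (a) from the series representation \eqref{def_v_pi_theta_s} of the value function and (b) from the Bellman equation \eqref{eq_bellman}, reducing both bounds to a single quantitative mixing estimate for the Markov chain induced by $P^{\pi}$.

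For (a): for any ergodic policy $\pi$, \eqref{def_v_pi_theta_s} gives $V_g^{\pi}(s)=\sum_{t=0}^{\infty}\big\langle (P^{\pi})^t(s,\cdot)-d^{\pi},\,g^{\pi}\big\rangle$ with $g^{\pi}(s)=\sum_a g(s,a)\pi(a|s)$, and since $g(s,a)\in[-1,1]$ for both $g\in\{r,c\}$ we have $\|g^{\pi}\|_{\infty}\leq 1$, so $|V_g^{\pi}(s)|\leq\sum_{t=0}^{\infty}\big\|(P^{\pi})^t(s,\cdot)-d^{\pi}\big\|_1$. It remains to show this sum is $\mathcal{O}(t_{\mathrm{mix}})$ with a small enough constant. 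I would argue this by the standard block/coupling argument: $t\mapsto\sup_s\|(P^{\pi})^t(s,\cdot)-d^{\pi}\|_1$ is non-increasing (applying a stochastic matrix is an $\ell_1$-contraction toward $d^{\pi}$), and $t\mapsto\sup_{s,s'}\|(P^{\pi})^t(s,\cdot)-(P^{\pi})^t(s',\cdot)\|_{\mathrm{TV}}$ is submultiplicative; the mixing-time definition bounds the latter at $t=t_{\mathrm{mix}}$ by a constant strictly below $1$, which gives geometric decay $\sup_s\|(P^{\pi})^{kt_{\mathrm{mix}}}(s,\cdot)-d^{\pi}\|_1\leq 2^{-k+1}$. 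Splitting $\sum_{t\geq0}$ into consecutive blocks of length $t_{\mathrm{mix}}$ and bounding each block by $t_{\mathrm{mix}}$ times its leading term then gives $\sum_{t\geq 0}\|(P^{\pi})^t(s,\cdot)-d^{\pi}\|_1\leq 4t_{\mathrm{mix}}\leq 5t_{\mathrm{mix}}$.

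For (b): from \eqref{eq_bellman}, $Q_g^{\pi}(s,a)=g(s,a)-J_g(\theta)+\mathbf{E}_{s'\sim P(\cdot|s,a)}[V_g^{\pi}(s')]$. Here $g(s,a)\in[-1,1]$ and $J_g(\theta)=(d^{\pi})^{T}g^{\pi}\in[-1,1]$, so $|g(s,a)-J_g(\theta)|\leq 2$; combined with $|\mathbf{E}_{s'}[V_g^{\pi}(s')]|\leq\max_{s'}|V_g^{\pi}(s')|\leq 5t_{\mathrm{mix}}$ from part (a) and $t_{\mathrm{mix}}\geq 1$, this yields $|Q_g^{\pi}(s,a)|\leq 2+4t_{\mathrm{mix}}\leq 6t_{\mathrm{mix}}$. (Alternatively one can rerun the series bound directly on $Q_g^{\pi}(s,a)=\sum_{t\geq0}\big(\mathbf{E}[g(s_t,a_t)\mid s_0=s,a_0=a]-J_g(\theta)\big)$, whose $t\geq 1$ tail is controlled exactly as in part (a).) I expect the mixing estimate to be the only real obstacle: converting the single-scale condition at $t=t_{\mathrm{mix}}$ into a summable geometric tail with a leading constant small enough to land at $5t_{\mathrm{mix}}$ and $6t_{\mathrm{mix}}$ — the rest is just the Bellman identity and triangle inequalities. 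Since this statement is quoted verbatim as \citep[Lemma 14]{wei2020model}, one may also simply cite it; the sketch above reconstructs the argument.
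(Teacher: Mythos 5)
Your proof is correct. Note that the paper itself offers no proof of this lemma: it is imported verbatim as Lemma~14 of \citep{wei2020model}, so the only ``paper approach'' is the citation you mention at the end. Your reconstruction is the standard argument (and essentially the one in the cited source): bound $|V_g^{\pi}(s)|$ by $\sum_{t\ge 0}\|(P^{\pi})^t(s,\cdot)-d^{\pi}\|_1$ using $\|g^{\pi}\|_\infty\le 1$, convert the single mixing-time condition into geometric decay across blocks of length $t_{\mathrm{mix}}$ via monotonicity and submultiplicativity of the (maximal) total-variation distance, and sum the blocks to get $t_{\mathrm{mix}}(2+1+\tfrac12+\cdots)=4t_{\mathrm{mix}}\le 5t_{\mathrm{mix}}$; then (b) follows from the Bellman equation with $|g(s,a)-J_g|\le 2$ and $t_{\mathrm{mix}}\ge 1$. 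One small presentational wrinkle: in (b) you quote the bound $5t_{\mathrm{mix}}$ from the lemma statement but then write $2+4t_{\mathrm{mix}}\le 6t_{\mathrm{mix}}$ --- this only closes because you are implicitly using the sharper $4t_{\mathrm{mix}}$ you actually derived (with $2+5t_{\mathrm{mix}}$ you would only get $7t_{\mathrm{mix}}$), so you should state explicitly that (b) uses the intermediate bound $4t_{\mathrm{mix}}$ rather than the advertised $5t_{\mathrm{mix}}$.
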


\begin{lemma}
\label{lemma_aux_3}
    \cite[Corollary 13.2]{wei2020model} Let $\delta^{\pi}(\cdot, T)$ be defined as written below for an arbitrary policy $\pi$. 
    \begin{align}
        \label{def_error_aux_1}
        \delta^{\pi}(s, T) \triangleq \sum_{t=N}^{\infty}\norm{(P^{\pi})^t({s,\cdot}) - d^{\pi}}_1, ~\forall s\in\mathcal{S} ~~\text{where} ~N=4t_{\mathrm{mix}}(\log_2 T)
    \end{align}
    
    If $t_{\mathrm{mix}}<T/4$, we have the following inequality $\forall s\in\mathcal{S}$: $\delta^{\pi}(s, T)\leq \frac{1}{T^3}$.
\end{lemma}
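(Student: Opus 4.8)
The plan is to reduce the claim to the geometric contraction toward stationarity that is built into the definition of $t_{\mathrm{mix}}$, and then sum a geometric tail. First I would record the basic one‑block bound: by definition of the mixing time, $\sup_{s}\norm{(P^{\pi})^{t_{\mathrm{mix}}}(s,\cdot)-d^{\pi}}\le \tfrac14$, and $d^{\pi}$ is the unique fixed point of $\mu\mapsto \mu P^{\pi}$. Passing through $d^{\pi}$ by the triangle inequality converts this per‑state estimate into a contraction between arbitrary distributions, namely $\sup_{s,s'}\norm{(P^{\pi})^{t_{\mathrm{mix}}}(s,\cdot)-(P^{\pi})^{t_{\mathrm{mix}}}(s',\cdot)}_{\mathrm{TV}}\le \tfrac12$, and the standard submultiplicativity of this quantity (via the coupling characterization of total variation, as in Levin--Peres) then yields, for every integer $m\ge 1$, $\sup_{s}\norm{(P^{\pi})^{m t_{\mathrm{mix}}}(s,\cdot)-d^{\pi}}_1\le 2^{1-m}$.

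Next I would remove the restriction to multiples of $t_{\mathrm{mix}}$. Since $t\mapsto \sup_s\norm{(P^{\pi})^{t}(s,\cdot)-d^{\pi}}_1$ is non‑increasing (one more application of $P^{\pi}$ cannot increase the distance to its fixed point), for any $t\ge m\,t_{\mathrm{mix}}$ we still have $\sup_s\norm{(P^{\pi})^{t}(s,\cdot)-d^{\pi}}_1\le 2^{1-m}$; taking $m=\lfloor t/t_{\mathrm{mix}}\rfloor$ and recalling $N=4t_{\mathrm{mix}}\log_2 T$, every $t\ge N$ obeys $\norm{(P^{\pi})^{t}(s,\cdot)-d^{\pi}}_1\le 2^{1-\lfloor t/t_{\mathrm{mix}}\rfloor}$ with $\lfloor t/t_{\mathrm{mix}}\rfloor\ge 4\log_2 T$.

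Finally I would sum the tail, grouping the indices $t\ge N$ into consecutive blocks of length $t_{\mathrm{mix}}$ on which $\lfloor t/t_{\mathrm{mix}}\rfloor$ is constant:
\[
  \delta^{\pi}(s,T)=\sum_{t=N}^{\infty}\norm{(P^{\pi})^{t}(s,\cdot)-d^{\pi}}_1\ \le\ t_{\mathrm{mix}}\sum_{m\ge 4\log_2 T}2^{1-m}\ =\ 4\,t_{\mathrm{mix}}\cdot 2^{-4\log_2 T}\ =\ \frac{4\,t_{\mathrm{mix}}}{T^{4}},
\]
and the hypothesis $t_{\mathrm{mix}}<T/4$ gives $\delta^{\pi}(s,T)< T^{-3}$, which is the claim. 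The only step needing genuine care is the first one --- upgrading the single per‑state bound at time $t_{\mathrm{mix}}$ to the geometric decay $2^{1-m}$ at times $m\,t_{\mathrm{mix}}$ (getting the triangle‑inequality constant and the $\ell_1$‑versus‑total‑variation factors of $2$ right); everything afterward is a routine monotonicity‑plus‑geometric‑series estimate. Since this is precisely Corollary~13.2 of \citep{wei2020model}, in the paper I would simply invoke that reference, but the argument above is what underlies it.
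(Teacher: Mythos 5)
The paper gives no proof of this lemma---it is imported verbatim as Corollary 13.2 of \citep{wei2020model}---and your self-contained argument (upgrade the single bound at time $t_{\mathrm{mix}}$ to $\sup_s\Vert(P^{\pi})^{mt_{\mathrm{mix}}}(s,\cdot)-d^{\pi}\Vert_1\le 2^{1-m}$ via submultiplicativity of the total-variation contraction coefficient, use monotonicity in $t$ to cover non-multiples, then sum the geometric tail to get $4t_{\mathrm{mix}}/T^4<1/T^3$ under $t_{\mathrm{mix}}<T/4$) is exactly the standard derivation underlying that corollary, so the proposal is correct and consistent with the paper. The only pedantic point is that when $4\log_2 T$ is not an integer one only has $\lfloor t/t_{\mathrm{mix}}\rfloor\ge 4\log_2 T-1$, which costs a harmless factor of $2$ in the final bound and is immaterial for every use of the lemma in the paper.
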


\begin{lemma}
\label{lemma_aux_4}
    \cite[Lemma 16]{wei2020model} Let $\mathcal{I}=\{t_1+1,t_1+2,\cdots,t_2\}$ be a certain period of an epoch $k$ of Algorithm \ref{alg:estQ} with length $N$. Then for any $s$, the probability that the algorithm never visits $s$ in $\mathcal{I}$ is upper bounded by
    \begin{equation}
        \left(1-\frac{3d^{\pi_{\theta_k}}(s)}{4}\right)^{\left\lfloor\frac{\lfloor \mathcal{I}\rfloor}{N}\right\rfloor}
    \end{equation}
\end{lemma}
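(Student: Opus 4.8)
The plan is to partition the period $\mathcal{I}=\{t_1+1,\ldots,t_2\}$ into $m:=\lfloor |\mathcal{I}|/N\rfloor$ consecutive, disjoint blocks of length exactly $N$ (discarding any leftover steps at the end), and to show that within each block the chain hits $s$ with probability at least $\tfrac{3}{4}d^{\pi_{\theta_k}}(s)$, \emph{regardless} of where the block begins. Since failing to visit $s$ throughout $\mathcal{I}$ forces a failure to visit $s$ in each of these $m$ blocks, a blockwise conditioning argument will then yield the claimed bound $\bigl(1-\tfrac{3}{4}d^{\pi_{\theta_k}}(s)\bigr)^m$.

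First I would establish the per-block estimate. Fix block $j$, whose steps are $\{t_1+(j-1)N+1,\ldots,t_1+jN\}$, and let $s_{\mathrm{beg}}$ denote the (random) state at its start, time $t_1+(j-1)N$. The block's final state is reached after exactly $N$ transitions, so its conditional law given $s_{\mathrm{beg}}$ is $(P^{\pi_{\theta_k}})^{N}(s_{\mathrm{beg}},\cdot)$. By Lemma \ref{lemma_aux_3}, $\Vert(P^{\pi_{\theta_k}})^{N}(s_{\mathrm{beg}},\cdot)-d^{\pi_{\theta_k}}\Vert_1\leq \delta^{\pi_{\theta_k}}(s_{\mathrm{beg}},T)\leq T^{-3}$, and since a single coordinate is dominated by the $\ell_1$ norm, $(P^{\pi_{\theta_k}})^{N}(s_{\mathrm{beg}},s)\geq d^{\pi_{\theta_k}}(s)-T^{-3}$. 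Because $d^{\pi_{\theta_k}}(s)\geq 1/t_{\mathrm{hit}}$ by the definition of the hitting time, once $T$ is large enough that $T^{-3}\leq \tfrac{1}{4t_{\mathrm{hit}}}\leq \tfrac14 d^{\pi_{\theta_k}}(s)$ we get $(P^{\pi_{\theta_k}})^{N}(s_{\mathrm{beg}},s)\geq \tfrac{3}{4}d^{\pi_{\theta_k}}(s)$. As visiting $s$ at the block's last step already forces a visit somewhere in the block, the probability of visiting $s$ during the block, conditioned on the whole past up to the block's start, is at least $\tfrac{3}{4}d^{\pi_{\theta_k}}(s)$; crucially, this lower bound is uniform over $s_{\mathrm{beg}}$.

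Finally I would assemble the blocks. Let $E_j$ be the event that $s$ is not visited during block $j$, and let $\mathcal{F}_j$ be the $\sigma$-algebra generated by the trajectory up to the start of block $j$. The per-block estimate gives $\mathbf{E}[\mathbf{1}(E_m)\mid \mathcal{F}_m]\leq 1-\tfrac34 d^{\pi_{\theta_k}}(s)$ pointwise, so peeling off the last block and iterating the tower rule over $j=m,m-1,\ldots,1$ yields $\mathrm{Pr}\bigl(\cap_{j=1}^m E_j\bigr)\leq \bigl(1-\tfrac34 d^{\pi_{\theta_k}}(s)\bigr)^m$. Since $\{\text{never visit } s \text{ in } \mathcal{I}\}\subseteq \cap_{j=1}^m E_j$, the lemma follows with the stated exponent $\lfloor |\mathcal{I}|/N\rfloor$.

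The step I expect to be the main obstacle is this assembly: the blocks are not independent, because the chain carries its state across block boundaries, so the naive product of per-block probabilities is not immediately justified. The resolution—and the reason the uniform, worst-case-over-$s_{\mathrm{beg}}$ bound supplied by Lemma \ref{lemma_aux_3} is essential—is to replace independence with the Markov property, bounding each conditional factor $\mathbf{E}[\mathbf{1}(E_j)\mid \mathcal{F}_j]$ pointwise by $1-\tfrac34 d^{\pi_{\theta_k}}(s)$ and collapsing the nested conditional expectations one block at a time.
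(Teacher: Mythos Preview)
Your proof is correct and follows the standard block-decomposition argument: partition $\mathcal{I}$ into $\lfloor |\mathcal{I}|/N\rfloor$ blocks of length $N$, use mixing (via Lemma \ref{lemma_aux_3}) to lower-bound the per-block hitting probability uniformly in the starting state, and then chain the blocks with the Markov property rather than independence. The paper does not supply its own proof of this lemma; it simply cites it as Lemma~16 of \citet{wei2020model}, whose argument is exactly the one you outline.

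One minor remark: your invocation of Lemma \ref{lemma_aux_3} is slightly roundabout. That lemma bounds the tail sum $\sum_{t\geq N}\Vert (P^{\pi_{\theta_k}})^t(s_{\mathrm{beg}},\cdot)-d^{\pi_{\theta_k}}\Vert_1$ by $T^{-3}$, from which you (correctly) extract the single $t=N$ term. A more direct route is to use the definition of $t_{\mathrm{mix}}$ together with the submultiplicativity of the mixing distance, which gives $\Vert (P^{\pi_{\theta_k}})^{N}(s_{\mathrm{beg}},\cdot)-d^{\pi_{\theta_k}}\Vert \leq 2^{-\lfloor N/t_{\mathrm{mix}}\rfloor}$ without appealing to the tail-sum lemma; this is closer in spirit to the original proof in \citet{wei2020model}. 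Either way the conclusion is the same, and your handling of the dependence across blocks via the tower rule is exactly the right fix for the non-independence issue you flag.
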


\begin{lemma}
\label{lemma_aux_5}
    \cite[Lemma 15]{wei2020model} The difference of the values of the function $J_g$, $g\in\{r, c\}$ at policies $\pi$ and $\pi'$, is
    \begin{equation}
        J_g^{\pi}-J_g^{\pi'}=\sum_{s}\sum_{a}d^{\pi}(s)(\pi(a|s)-\pi'(a|s))Q_g^{\pi'}(s,a)
    \end{equation}
\end{lemma}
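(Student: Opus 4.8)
The plan is to derive the identity algebraically, rewriting the gain $J_g^\pi$ via the Bellman equation of the comparator policy $\pi'$ and then collapsing the resulting value-function term using the stationarity of $d^\pi$; this is the average-reward counterpart of the classical performance-difference lemma.

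First I would start from the occupancy-measure representation of the gain (cf.\ \eqref{eq_r_pi_theta}), writing $J_g^\pi = \sum_{s}\sum_{a} d^\pi(s)\pi(a\vert s)\,g(s,a)$, which is valid because ergodicity (Assumption~\ref{ass_1}) guarantees that the stationary distribution $d^\pi$ exists and that the time-average defining $J_g^\pi$ equals this expectation. Next I would substitute the Bellman equation \eqref{eq_bellman} written for $\pi'$, namely $g(s,a) = Q_g^{\pi'}(s,a) + J_g^{\pi'} - \sum_{s'} P(s'\vert s,a)V_g^{\pi'}(s')$, into this sum. This splits $J_g^\pi$ into three parts: the term $\sum_{s,a} d^\pi(s)\pi(a\vert s)Q_g^{\pi'}(s,a)$; the constant term $\sum_{s,a} d^\pi(s)\pi(a\vert s)J_g^{\pi'}$, which equals $J_g^{\pi'}$ since $d^\pi$ and each $\pi(\cdot\vert s)$ are probability distributions; and the value term.

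For the value term I would interchange the order of summation and invoke $\sum_{s} d^\pi(s) P^{\pi}(s,s') = d^\pi(s')$, i.e.\ $P^{\pi}d^\pi = d^\pi$, to get $\sum_{s,a} d^\pi(s)\pi(a\vert s)\sum_{s'}P(s'\vert s,a)V_g^{\pi'}(s') = \sum_{s'} d^\pi(s')V_g^{\pi'}(s')$. Collecting the three parts yields $J_g^\pi - J_g^{\pi'} = \sum_{s,a} d^\pi(s)\pi(a\vert s)Q_g^{\pi'}(s,a) - \sum_{s} d^\pi(s)V_g^{\pi'}(s)$. The final step is to expand $V_g^{\pi'}(s) = \sum_a \pi'(a\vert s)Q_g^{\pi'}(s,a)$ using \eqref{eq_V_Q}, turning the subtracted sum into $\sum_{s,a} d^\pi(s)\pi'(a\vert s)Q_g^{\pi'}(s,a)$ and giving exactly the claimed formula $\sum_{s,a} d^\pi(s)(\pi(a\vert s)-\pi'(a\vert s))Q_g^{\pi'}(s,a)$.

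I do not anticipate a genuine obstacle here; the computation is short and purely algebraic. The only steps deserving a word of justification are that ergodicity is what licenses both the occupancy-measure form of $J_g^\pi$ and the identity $P^{\pi}d^\pi = d^\pi$, and that although $Q_g^{\pi'}, V_g^{\pi'}$ are pinned down only up to an additive constant, that constant cancels in the difference, so the normalization $\sum_s d^{\pi'}(s)V_g^{\pi'}(s)=0$ plays no role. The argument is the same for $g=r$ and $g=c$; an equivalent proof can also be obtained by telescoping $\sum_{t}(g(s_t,a_t)-J_g^{\pi'})$ along a $\pi$-induced trajectory and applying the series form \eqref{def_v_pi_theta_s} of $V_g^{\pi'}$.
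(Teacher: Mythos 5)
Your derivation is correct. Note that the paper does not prove this lemma at all: it is imported verbatim as Lemma 15 of \citep{wei2020model}, so there is no in-paper argument to compare against. Your proof is the standard one for the average-reward performance-difference identity --- occupancy-measure form of $J_g^\pi$, substitution of the Bellman equation for $\pi'$, collapse of the transition term via $P^{\pi}d^{\pi}=d^{\pi}$, and expansion of $V_g^{\pi'}$ --- and every step, including the observation that the additive normalization of $Q_g^{\pi'},V_g^{\pi'}$ cancels, is sound; it would serve as a valid self-contained replacement for the citation.
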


\begin{lemma}
\label{lemma_aux_6}
    \cite[Lemma 7]{chen2022learning} The term $\hat{J}_c(\theta)$ for any $\theta\in\Theta$ is a good estimator of $J_c(\theta)$, which means
    \begin{equation}
        \big|\mathbf{E}[\hat{J}_c(\theta)]-J_c(\theta)\big|\leq \frac{1}{T^2}
    \end{equation}
\end{lemma}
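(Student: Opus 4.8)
The plan is to read the stated inequality as a bound on the \emph{bias} of the empirical average-cost estimator, i.e.\ to establish $\big|\mathbf{E}[\hat{J}_c(\theta_k)\,|\,\mathcal{F}_k] - J_c(\theta_k)\big| \le 1/T^2$, where $\mathcal{F}_k$ is the history generated up to the start of the $k$th epoch, so that $\theta_k$ and the epoch's initial state $s_{t_k}$ (with $t_k=(k-1)H$) are fixed. Recalling $\hat{J}_c(\theta_k)=\frac{1}{H-N}\sum_{t=t_k+N}^{t_k+H-1} c(s_t,a_t)$, the first step is to compute the conditional expectation of a single summand. Writing the index as $t=t_k+N+j$ and using that actions are drawn from $\pi_{\theta_k}$ while the chain evolves under $P^{\pi_{\theta_k}}$, we get $\mathbf{E}[c(s_t,a_t)\,|\,s_{t_k}]=\langle (P^{\pi_{\theta_k}})^{N+j}(s_{t_k},\cdot),\,c^{\pi_{\theta_k}}\rangle$, where $c^{\pi_{\theta_k}}(s)=\sum_a c(s,a)\pi_{\theta_k}(a|s)$.

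The second step is to subtract the stationary value. Using the representation $J_c(\theta_k)=(d^{\pi_{\theta_k}})^T c^{\pi_{\theta_k}}=\langle d^{\pi_{\theta_k}}, c^{\pi_{\theta_k}}\rangle$ from \eqref{eq_r_pi_theta}, the per-term bias becomes $\langle (P^{\pi_{\theta_k}})^{N+j}(s_{t_k},\cdot)-d^{\pi_{\theta_k}},\,c^{\pi_{\theta_k}}\rangle$. Because $|c(s,a)|\le 1$ forces $\norm{c^{\pi_{\theta_k}}}_\infty\le 1$, the $\ell_1$--$\ell_\infty$ duality bounds each such term in absolute value by $\norm{(P^{\pi_{\theta_k}})^{N+j}(s_{t_k},\cdot)-d^{\pi_{\theta_k}}}_1$.

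The third step is to average over $j=0,\dots,H-N-1$ and recognize the resulting partial sum as a truncation of the tail series $\delta^{\pi_{\theta_k}}(s_{t_k},T)=\sum_{t=N}^\infty \norm{(P^{\pi_{\theta_k}})^{t}(s_{t_k},\cdot)-d^{\pi_{\theta_k}}}_1$ defined in \eqref{def_error_aux_1}. Dropping the first $N=4t_{\mathrm{mix}}\log_2 T$ samples is exactly what makes the argument go through: the index set starts at $t-t_k=N$, so $\sum_{j=0}^{H-N-1}\norm{(P^{\pi_{\theta_k}})^{N+j}(s_{t_k},\cdot)-d^{\pi_{\theta_k}}}_1\le \delta^{\pi_{\theta_k}}(s_{t_k},T)$. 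Invoking Lemma \ref{lemma_aux_3} (valid since $t_{\mathrm{mix}}<T/4$ for large $T$) gives $\delta^{\pi_{\theta_k}}(s_{t_k},T)\le 1/T^3$, and dividing by $H-N\ge 1$ yields $\big|\mathbf{E}[\hat{J}_c(\theta_k)\,|\,\mathcal{F}_k]-J_c(\theta_k)\big|\le \tfrac{1}{(H-N)T^3}\le \tfrac{1}{T^3}\le \tfrac{1}{T^2}$, which is the claim (in fact a slightly stronger bound).

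This argument presents no serious obstacle; it is a clean geometric-mixing estimate built entirely on Lemma \ref{lemma_aux_3}. The only point requiring care is that $\hat{J}_c(\theta_k)$ is a random quantity, so the inequality must be interpreted as a statement about its conditional bias given the start of the epoch — what is controlled is the stationarity gap $\langle (P^{\pi_{\theta_k}})^{N+j}-d^{\pi_{\theta_k}},\,c^{\pi_{\theta_k}}\rangle$, not a pathwise deviation. This conditional-bias reading is precisely the form in which the lemma is consumed in the proof of Theorem \ref{thm_global_convergence}, where the gap $J_c(\theta_k)-\hat{J}_c(\theta_k)$ is multiplied by the history-measurable multiplier $\lambda_k\le 2/\delta$ and averaged over epochs.
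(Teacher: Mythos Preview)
The paper does not supply its own proof of this lemma; it simply cites it as \citep[Lemma 7]{wei2020model} and uses the statement as a black box. There is therefore no in-paper argument to compare against.

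Your argument is correct and is the natural mixing-time proof one would expect: condition on the epoch's initial state, write the per-term bias as $\langle (P^{\pi_{\theta_k}})^{N+j}(s_{t_k},\cdot)-d^{\pi_{\theta_k}},\,c^{\pi_{\theta_k}}\rangle$, bound it by the $\ell_1$ distance via $\norm{c^{\pi_{\theta_k}}}_\infty\le 1$, sum, and invoke Lemma~\ref{lemma_aux_3} on the tail $\delta^{\pi_{\theta_k}}(s_{t_k},T)\le 1/T^3$. The burn-in of the first $N=4t_{\mathrm{mix}}\log_2 T$ samples is precisely what aligns the index set with the tail in Lemma~\ref{lemma_aux_3}, as you note. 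You also correctly flag that the literal statement involves a random quantity and so must be read as a bound on the conditional bias $\big|\mathbf{E}[\hat{J}_c(\theta_k)\,|\,\mathcal{F}_k]-J_c(\theta_k)\big|$ rather than a pathwise inequality; this is exactly the form in which the lemma is consumed in the proof of Theorem~\ref{thm_global_convergence}, where it is multiplied by the $\mathcal{F}_k$-measurable $\lambda_k$ and then averaged.
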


\begin{lemma}
\label{lemma_aux_7}
    \cite[Lemma A.6]{dorfman2022adapting} Let  $\theta\in\Theta$ be a policy parameter. Fix a trajectory $z=\{(s_t, a_t, r_t, s_{t+1})\}_{t\in\mathbb{N}}$ generated by following the policy $\pi_{\theta}$ starting from some initial state $s_0\sim\rho$. Let, $\nabla L(\theta)$ be the gradient that we wish to estimate over $z$, and $l(\theta, \cdot)$ is a function such that $\mathbf{E}_{z\sim d^{\pi_{\theta}}, \pi_{\theta}}l(\theta, z)=\nabla L(\theta)$. Assume that $\norm{l(\theta, z)}, \norm{\nabla L(\theta)}\leq G_L$, $\forall \theta\in\Theta$, $\forall z\in \mathcal{S}\times \mathcal{A}\times \mathbb{R}\times \mathcal{S}$. Define $l^{Q}=\frac{1}{Q}\sum_{i=1}^Q l(\theta, z_i)$. If $P=2t_{\mathrm{mix}}\log T$, then the following holds as long as $Q\leq T$,
    \begin{align}
        \mathbf{E}\left[\norm{l^{Q}-\nabla L(\theta)}^2\right]\leq \mathcal{O}\left(G_L^2\log\left(PQ\right)\dfrac{P}{Q}\right)
    \end{align}
\end{lemma}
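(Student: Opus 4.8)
The plan is to reconstruct the argument of \citep{dorfman2022adapting} by viewing $l^{Q}$ as the time-average of a bounded functional of a geometrically mixing Markov chain and bounding its mean-squared deviation from $\nabla L(\theta)$ through a sum of pairwise correlations whose decay is controlled by the mixing estimate of Lemma \ref{lemma_aux_3}. Set $X_i \triangleq l(\theta, z_i) - \nabla L(\theta)$; since $\norm{l(\theta, z)} \le G_L$ and $\norm{\nabla L(\theta)} \le G_L$, we have $\norm{X_i} \le 2G_L$ almost surely. Expanding the square gives
\begin{align*}
    \mathbf{E}\left[\norm{l^{Q} - \nabla L(\theta)}^2\right]
    = \frac{1}{Q^2}\sum_{i=1}^{Q}\mathbf{E}\norm{X_i}^2 + \frac{2}{Q^2}\sum_{1\le i<j\le Q}\mathbf{E}\langle X_i, X_j\rangle,
\end{align*}
and the diagonal is immediately bounded by $4G_L^2/Q$, so the entire difficulty lies in the off-diagonal correlations.

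For a pair $i<j$ I would condition on the $\sigma$-field $\mathcal{F}_i$ generated by the trajectory up to the index of the $i$th sample and invoke the Markov property: given the state at sample $i$, the law of the state at sample $j$ is $(P^{\pi_\theta})^{j-i}$ applied to that state. Because $\mathbf{E}_{z\sim d^{\pi_\theta},\pi_\theta}\,l(\theta,z) = \nabla L(\theta)$, the functional $X$ has zero mean under $d^{\pi_\theta}$, so for the realized state $s$ at sample $i$ we get $\norm{\mathbf{E}[X_j\mid\mathcal{F}_i]} \le 2G_L\,\norm{(P^{\pi_\theta})^{j-i}(s,\cdot) - d^{\pi_\theta}}_1$, and by Cauchy--Schwarz $|\mathbf{E}\langle X_i,X_j\rangle| \le \mathbf{E}\big[\norm{X_i}\,\norm{\mathbf{E}[X_j\mid\mathcal{F}_i]}\big] \le 4G_L^2\,\mathbf{E}\norm{(P^{\pi_\theta})^{j-i}(s,\cdot)-d^{\pi_\theta}}_1$. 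Crucially this conditioning bound never assumes stationarity, so it simultaneously absorbs the transient bias created by the initial distribution $\rho\ne d^{\pi_\theta}$.

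The sum is then split at the block length $P = 2t_{\mathrm{mix}}\log T$. For the near pairs $j-i\le P$ I use the crude bound $|\mathbf{E}\langle X_i,X_j\rangle|\le 4G_L^2$; there are at most $QP$ of them, contributing $\mathcal{O}(G_L^2 P/Q)$ after the $Q^{-2}$ normalization. For the far pairs $j-i>P$, the geometric decay behind Lemma \ref{lemma_aux_3} (valid once $t_{\mathrm{mix}}<T/4$) forces $\norm{(P^{\pi_\theta})^{j-i}(s,\cdot)-d^{\pi_\theta}}_1 \le T^{-c}$ for a constant $c>0$ fixed by the logarithmic calibration of $P$; summing over the at-most-$Q^2$ far pairs and dividing by $Q^2$ leaves a negligible $\mathcal{O}(G_L^2 T^{-c})$ term, dominated by $G_L^2P/Q$ because $Q\le T$. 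Collecting the three pieces yields $\mathbf{E}[\norm{l^{Q}-\nabla L(\theta)}^2]\le \mathcal{O}(G_L^2 P/Q)$, and since $\log(PQ)\ge 1$ the stated bound follows a fortiori.

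The main obstacle is the correlation control: the samples $z_i$ are neither independent nor stationary, so the conditioning-plus-mixing estimate must handle both the geometric decay of correlations and the initial transient at once. The clean collapse of the double sum hinges on choosing the block length $P$ to be of order $t_{\mathrm{mix}}\log T$ --- large enough that the far-pair total-variation mass is driven polynomially small through Lemma \ref{lemma_aux_3}, yet small enough that the near-pair count $QP$ keeps the leading term at order $G_L^2 P/Q$. The extra logarithmic factor $\log(PQ)$ in the cited statement is mild slack from the boundary bookkeeping and does not affect the order of the bound.
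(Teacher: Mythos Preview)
The paper does not supply its own proof of this lemma; it is simply quoted from \citep[Lemma~A.6]{dorfman2022adapting} and used as a black box in the proof of Lemma~\ref{lemma_grad_est_bias}. There is therefore nothing in the paper to compare your argument against at the level of proof strategy.

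That said, your reconstruction is sound and is the standard way such bounds are obtained: expand the square, control each cross term $\mathbf{E}\langle X_i,X_j\rangle$ by conditioning on the state at time $i$ and invoking the Markov property so that only the total-variation distance $\norm{(P^{\pi_\theta})^{j-i}(s,\cdot)-d^{\pi_\theta}}_1$ survives, then split the double sum at the block length $P$. The near-pair count gives the leading $G_L^2 P/Q$ term and the far-pair tail is killed by the geometric mixing estimate behind Lemma~\ref{lemma_aux_3}. Two minor remarks. First, since $z_i=(s_i,a_i,r_i,s_{i+1})$, the $\sigma$-field $\mathcal{F}_i$ you condition on already contains $s_{i+1}$, so the transition exponent in your total-variation bound is $j-i-1$ rather than $j-i$; this shifts indices by one but changes nothing at the level of order. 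Second, your final bound $\mathcal{O}(G_L^2 P/Q)$ is actually tighter than the statement's $\mathcal{O}(G_L^2\log(PQ)\,P/Q)$; the extra $\log(PQ)$ is slack in the cited source and your observation that the stated inequality follows a fortiori is correct.
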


\begin{lemma}[Strong duality]\cite[Lemma 3]{ding2023convergence}
    \label{lem.duality}
    For convenience, we rewrite the unparameterized problem \eqref{eq:def_unparametrized_optimization}.
\begin{equation}\label{eq:rewrite_unparameterized}
    \begin{aligned}
        \max_{\pi\in\Pi} ~& J_r^{\pi} \\
        \text{s.t.} ~& J_c^{\pi}\geq 0
    \end{aligned}
\end{equation} 

Define $\pi^*$ as the optimal solution to the above problem. Define the associated dual function as
\begin{equation}
    J_D^{\lambda}\triangleq\max_{\pi\in\Pi} J_r^{\pi}+\lambda J_c^{\pi}
\end{equation}
and denote $\lambda^*=\arg\min_{\lambda\geq 0} J_D^{\lambda}$. We have the following strong duality property for the unparameterized problem whenever Assumption \ref{ass_slater} holds.
    \begin{equation}\label{eq:duality}
        J_r^{\pi^*} = J_D^{\lambda^*} 
    \end{equation}	
\end{lemma}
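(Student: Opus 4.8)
The plan is to recast the unparameterized problem \eqref{eq:rewrite_unparameterized} as a linear program over the polytope of stationary state--action occupancy measures, and then invoke linear-programming strong duality (this is essentially the route of \citep{altman1999constrained} and \citep[Lemma 3]{ding2023convergence}, which the paper cites). Concretely, for the ergodic CMDP $\mathcal{M}$, define the occupancy polytope
\begin{equation*}
    \Omega \triangleq \left\{\mu\in\Delta^{|\mathcal{S}||\mathcal{A}|} ~:~ \sum_{a\in\mathcal{A}}\mu(s',a)=\sum_{s,a}P(s'|s,a)\mu(s,a),~\forall s'\in\mathcal{S}\right\}.
\end{equation*}
First I would establish the standard correspondence between policies and occupancy measures: every policy $\pi$ induces $\mu^{\pi}(s,a)\triangleq d^{\pi}(s)\pi(a|s)$, which lies in $\Omega$ because $d^{\pi}$ is stationary (Assumption \ref{ass_1}); conversely, every $\mu\in\Omega$ induces a policy $\pi_{\mu}(a|s)\triangleq \mu(s,a)/\sum_{a'}\mu(s,a')$, and under ergodicity its stationary distribution recovers the $\mathcal{S}$-marginal of $\mu$, so $\mu^{\pi_{\mu}}=\mu$. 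Using \eqref{eq_r_pi_theta}, this correspondence preserves the gains: $J_g^{\pi_{\mu}}=\langle\mu,g\rangle\triangleq\sum_{s,a}\mu(s,a)g(s,a)$ for $g\in\{r,c\}$. Consequently \eqref{eq:rewrite_unparameterized} is equivalent to the linear program $\max_{\mu\in\Omega}\langle\mu,r\rangle$ subject to $\langle\mu,c\rangle\geq 0$, and $J_D^{\lambda}=\max_{\mu\in\Omega}\big(\langle\mu,r\rangle+\lambda\langle\mu,c\rangle\big)$.

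Next I would verify strict feasibility and apply LP duality. Assumption \ref{ass_slater} yields $\bar\theta$ with $\langle\mu^{\pi_{\bar\theta}},c\rangle=J_c(\bar\theta)\geq\delta>0$, so $\mu^{\pi_{\bar\theta}}\in\Omega$ is a Slater point; moreover $\Omega$ is a nonempty compact polytope and the objective is linear, so the primal optimum $J_r^{\pi^*}$ is finite and attained. Writing the Lagrangian $L(\mu,\lambda)=\langle\mu,r\rangle+\lambda\langle\mu,c\rangle$, note $\min_{\lambda\geq 0}L(\mu,\lambda)$ equals $\langle\mu,r\rangle$ when $\langle\mu,c\rangle\geq 0$ and $-\infty$ otherwise, hence $\max_{\mu\in\Omega}\min_{\lambda\geq 0}L(\mu,\lambda)=J_r^{\pi^*}$. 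LP strong duality (or Sion's minimax theorem together with attainment of the dual optimum, guaranteed by the Slater point) then gives
\begin{equation*}
    J_r^{\pi^*}=\max_{\mu\in\Omega}\min_{\lambda\geq 0}L(\mu,\lambda)=\min_{\lambda\geq 0}\max_{\mu\in\Omega}L(\mu,\lambda)=\min_{\lambda\geq 0}J_D^{\lambda}.
\end{equation*}
Finally, the minimum over $\lambda\geq 0$ is attained: $\lambda\mapsto J_D^{\lambda}$ is a pointwise maximum of affine functions, hence finite and convex on $[0,\infty)$, and it is coercive since $J_D^{\lambda}\geq\langle\mu^{\pi_{\bar\theta}},r\rangle+\lambda\delta\geq\lambda\delta\to\infty$ (using $r\geq 0$). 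Thus there is $\lambda^*\geq 0$ with $J_D^{\lambda^*}=\min_{\lambda\geq 0}J_D^{\lambda}=J_r^{\pi^*}$, which is the claim.

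The main obstacle is the occupancy-measure reformulation itself, specifically the reverse direction of the correspondence: showing that the achievable gains are exactly $\{\langle\mu,g\rangle:\mu\in\Omega\}$ requires that for any $\mu\in\Omega$ the induced policy $\pi_{\mu}$ has stationary distribution matching the marginal of $\mu$, which is where Assumption \ref{ass_1} (irreducibility/aperiodicity of $P^{\pi}$ for all $\pi$) is essential; one must also handle states with zero marginal mass carefully (they are irrelevant under ergodicity since $d^{\pi}(s)>0$ for all $s$). The remaining steps --- linearity of $\langle\mu,g\rangle$ in $\mu$, compactness of $\Omega$, and textbook LP duality --- are routine. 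Alternatively, one may simply invoke \citep[Lemma 3]{ding2023convergence} directly, as the paper does.
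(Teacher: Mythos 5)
Your proposal is correct, but note that the paper does not actually prove this lemma: it imports it wholesale by citing \citep[Lemma 3]{ding2023convergence}, so there is no in-paper argument to compare against. What you have written is a self-contained proof along the standard route for such results --- reformulating \eqref{eq:rewrite_unparameterized} as a linear program over the occupancy polytope $\Omega$ and invoking LP strong duality (or Sion's minimax plus coercivity from the Slater point of Assumption \ref{ass_slater}) --- and it is essentially the argument underlying the cited reference. The one genuinely delicate step, the reverse direction of the policy/occupancy-measure correspondence, you identify correctly and resolve correctly: for $\mu\in\Omega$ with marginal $\nu(s)=\sum_a\mu(s,a)$, the flow constraints give $\nu=\nu P^{\pi_\mu}$ regardless of how $\pi_\mu$ is defined on zero-mass states, and irreducibility (Assumption \ref{ass_1}) then forces $\nu=d^{\pi_\mu}$, which in fact has full support, so $\mu^{\pi_\mu}=\mu$ and the gains match via \eqref{eq_r_pi_theta}. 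Two minor remarks: since the dualized constraint is a single linear inequality over a compact polytope, LP duality already yields value equality and dual attainment without Slater, so the Slater point is only strictly needed if you take the Sion-plus-coercivity route; and your coercivity bound $J_D^{\lambda}\geq\lambda\delta$ uses $r\geq 0$, which the paper's formulation guarantees. Had the paper wanted a self-contained appendix, your argument is exactly what it would need.
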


Although the strong duality holds for the unparameterized problem, the same is not true for parameterized class $\{\pi_\theta|\theta\in \Theta\}$. To formalize this statement, define the dual function associated with the parameterized problem as follows.
\begin{equation}
    J_{D,\Theta}^{\lambda}\triangleq\max_{\theta\in \Theta}  J_r(\theta)+\lambda J_c(\theta)
\end{equation}
and denote $\lambda_\Theta^*=\arg\min_{\lambda\geq 0} J_{D,\Theta}^{\lambda}$. The lack of strong duality states that, in general, $J_{D, \Theta}^{\lambda_{\Theta}^*}\neq J_r(\theta^*)$ where $\theta^*$ is a solution of the parameterized constrained optimization \eqref{eq:def_constrained_optimization}. However, the parameter $\lambda_\Theta^*$, as we demonstrate below, must obey some restrictions.
\begin{lemma}
	\label{lem.boundness}
	Under  Assumption \ref{ass_slater}, the optimal dual variable for the parameterized problem is bounded as
    \begin{equation}
        0 \leq \lambda_\Theta^* \leq \frac{J_r^{\pi^*}-J_r(\bar{\theta})}{\delta}\leq \dfrac{1}{\delta}
    \end{equation}
\end{lemma}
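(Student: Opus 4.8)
The plan is to bound $\lambda_\theta^*$ by sandwiching the parameterized dual value $J_{D,\Theta}^{\lambda_\theta^*}$ between an affine-in-$\lambda_\theta^*$ lower bound coming from the Slater point and an \emph{absolute} upper bound coming from the unparameterized strong duality of Lemma \ref{lem.duality}. Non-negativity, $\lambda_\theta^*\ge 0$, is immediate since $\lambda_\theta^*$ is by definition a minimizer over $\lambda\ge 0$.

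For the lower bound I would evaluate the inner maximization at the Slater point $\bar\theta\in\Theta$ from Assumption \ref{ass_slater}:
\[
J_{D,\Theta}^{\lambda_\theta^*}=\max_{\theta\in\Theta}\big(J_r(\theta)+\lambda_\theta^* J_c(\theta)\big)\ \ge\ J_r(\bar\theta)+\lambda_\theta^* J_c(\bar\theta)\ \ge\ J_r(\bar\theta)+\lambda_\theta^*\,\delta,
\]
using $J_c(\bar\theta)\ge\delta$ together with $\lambda_\theta^*\ge 0$. For the upper bound I would use that $\lambda_\theta^*$ minimizes $J_{D,\Theta}^{\lambda}$ over $\lambda\ge 0$, so in particular $J_{D,\Theta}^{\lambda_\theta^*}\le J_{D,\Theta}^{\lambda^*}$, where $\lambda^*$ is the optimal dual variable of the unparameterized problem. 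Since $\{\pi_\theta:\theta\in\Theta\}\subseteq\Pi$, we get $J_{D,\Theta}^{\lambda^*}=\max_{\theta\in\Theta}\big(J_r(\theta)+\lambda^*J_c(\theta)\big)\le \max_{\pi\in\Pi}\big(J_r^{\pi}+\lambda^*J_c^{\pi}\big)=J_D^{\lambda^*}$, and Lemma \ref{lem.duality} gives $J_D^{\lambda^*}=J_r^{\pi^*}$. Chaining these inequalities yields $J_{D,\Theta}^{\lambda_\theta^*}\le J_r^{\pi^*}$.

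Combining the two bounds on $J_{D,\Theta}^{\lambda_\theta^*}$ gives $J_r(\bar\theta)+\lambda_\theta^*\delta\le J_r^{\pi^*}$, hence $\lambda_\theta^*\le \big(J_r^{\pi^*}-J_r(\bar\theta)\big)/\delta$. The final estimate $\le 1/\delta$ then follows because $r\in[0,1]$ forces $J_r^{\pi^*}\le 1$ while $J_r(\bar\theta)\ge 0$; one may also observe that $\bar\theta$ is feasible for $\eqref{eq:def_constrained_optimization}$, so the numerator $J_r^{\pi^*}-J_r(\bar\theta)$ is nonnegative and the whole chain is consistent with $\lambda_\theta^*\ge 0$.

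The only genuinely non-routine point — and hence the step I expect to be the main obstacle — is the absolute upper bound on $J_{D,\Theta}^{\lambda_\theta^*}$: strong duality is \emph{not} available for the parameterized problem, so this bound must be obtained indirectly, by substituting the unparameterized optimal dual $\lambda^*$ into the parameterized dual, enlarging the feasible set from $\Theta$ to all of $\Pi$, and only then invoking unparameterized strong duality (Lemma \ref{lem.duality}). Everything else is a one-line estimate.
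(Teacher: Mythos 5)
Your proof is correct and follows essentially the same route as the paper's: both evaluate the parameterized dual at the Slater point to get the affine lower bound $J_r(\bar\theta)+\lambda_\theta^*\delta$, and both obtain the absolute upper bound via the chain $J_{D,\Theta}^{\lambda_\theta^*}\leq J_{D,\Theta}^{\lambda^*}\leq J_D^{\lambda^*}=J_r^{\pi^*}$ using minimality of $\lambda_\theta^*$, the inclusion of the parameterized class in $\Pi$, and unparameterized strong duality. The paper merely phrases the same inequalities in the language of sublevel sets $\Lambda_a$ of the dual function, so there is no substantive difference.
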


\begin{proof}
    The proof follows the approach in \cite[Lemma 3]{ding2023convergence}, but is revised to the general parameterization setup.	Let $\Lambda_a\triangleq\{ \lambda\geq 0\,\vert\, J_{D,\Theta}^\lambda \leq a \}$ be a sublevel set of the dual function for $a\in\mathbb{R}$. If $\Lambda_a$ is non-empty, then for any $\lambda \in\Lambda_a$, 
	\begin{equation}
	    a\geq J_{D,\Theta}^\lambda\geq J_r(\bar{\theta})+\lambda J_c(\bar{\theta})\geq J_r(\bar{\theta})+\lambda \delta
	\end{equation}
    where $\bar{\theta}$ is a Slater point in Assumption \ref{ass_slater}. Thus, $\lambda \leq (a -J_r(\bar{\theta}))/\delta$.	If we take $a= J_{D,\Theta}^{\lambda_\Theta^*}\leq J_{D,\Theta}^{\lambda^*} \leq J_D^{\lambda^*}=J_r^{\pi^*}$, then we have $\lambda_\Theta^*\in \Lambda_a$, which proves the Lemma. The last inequality holds since $J_r^{\pi}\in [0,1]$ for any policy, $\pi$.
\end{proof}

    Since the above inequality holds for arbitrary $\Theta$, we also have, $0\leq \lambda^*\leq \frac{1}{\delta}$. Define $v(\tau)\triangleq\max_{\pi\in\Pi}\{J_r^\pi|J_c^\pi\geq \tau\}$. Using the strong duality property of the unparameterized problem \eqref{eq:rewrite_unparameterized}, we establish the following property of the function, $v(\cdot)$.

\begin{lemma}
    \label{lem:bridge}
    Assume that the Assumption \ref{ass_slater} holds, we have for any $\tau\in\mathbb{R}$,
    \begin{equation}
        v(0)-\tau\lambda^* \geq	v(\tau)
    \end{equation}
\end{lemma}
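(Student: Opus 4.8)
The plan is to leverage the strong duality property for the unparameterized problem (Lemma~\ref{lem.duality}) together with weak duality applied to the perturbed problem defining $v(\tau)$. First I would observe that $v(0) = \max_{\pi\in\Pi}\{J_r^\pi \mid J_c^\pi \geq 0\} = J_r^{\pi^*}$, which is exactly the quantity appearing in the strong duality statement of Lemma~\ref{lem.duality}. Thus it suffices to show $J_r^{\pi^*} - \tau\lambda^* \geq v(\tau)$ for every $\tau \in \mathbb{R}$.

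The core step is a standard perturbation/weak-duality argument. For any policy $\pi$ that is feasible for the $\tau$-perturbed problem, i.e.\ $J_c^\pi \geq \tau$, and for the specific dual optimum $\lambda^* = \arg\min_{\lambda\geq 0} J_D^\lambda \geq 0$ from Lemma~\ref{lem.duality}, I would write
\begin{equation*}
    J_r^\pi \leq J_r^\pi + \lambda^*(J_c^\pi - \tau) = \big(J_r^\pi + \lambda^* J_c^\pi\big) - \lambda^*\tau \leq \max_{\pi'\in\Pi}\big(J_r^{\pi'} + \lambda^* J_c^{\pi'}\big) - \lambda^*\tau = J_D^{\lambda^*} - \lambda^*\tau,
\end{equation*}
where the first inequality uses $\lambda^* \geq 0$ and $J_c^\pi - \tau \geq 0$. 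Taking the supremum over all such feasible $\pi$ gives $v(\tau) \leq J_D^{\lambda^*} - \lambda^*\tau$. Now I invoke strong duality for the unparameterized problem, $J_D^{\lambda^*} = J_r^{\pi^*} = v(0)$, to conclude $v(\tau) \leq v(0) - \lambda^*\tau$, which is the claimed inequality. I should also handle the edge case where the $\tau$-perturbed problem is infeasible (no $\pi$ with $J_c^\pi \geq \tau$), in which case $v(\tau) = -\infty$ by the usual convention and the inequality holds trivially; under Assumption~\ref{ass_slater} this only happens for $\tau > \delta$ (actually $\tau$ exceeding the maximal attainable $J_c$), so the interesting range is covered.

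The main obstacle — really more of a bookkeeping subtlety than a genuine difficulty — is making sure the roles of $\lambda^*$ are consistent: it is the minimizer of the \emph{unparameterized} dual function $J_D^\lambda$, not the parameterized one $\lambda_\theta^*$, and strong duality is available only in the unparameterized regime (Lemma~\ref{lem.duality}), which is precisely why the lemma is phrased over $\pi\in\Pi$ rather than $\theta\in\Theta$. The whole point of introducing $v(\tau)$ and this lemma is to import the strong-duality leverage from the unparameterized world as an external inequality that can later be combined with the parameterized convergence bounds; so I would emphasize in the write-up that the argument never requires strong duality for the parameterized class. No smoothness, ergodicity, or policy-parametrization assumptions beyond Assumption~\ref{ass_slater} (for the feasibility edge case and for $\lambda^* \geq 0$ being well-defined via Lemma~\ref{lem.duality}) are needed here.
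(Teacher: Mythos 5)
Your proof is correct and follows essentially the same route as the paper's: both use the nonnegativity of $\lambda^*$ and feasibility $J_c^\pi\geq\tau$ to bound $J_r^\pi$ by the Lagrangian minus $\lambda^*\tau$, then invoke strong duality of the unparameterized problem (Lemma~\ref{lem.duality}) to identify $J_D^{\lambda^*}$ with $v(0)$ before maximizing over feasible $\pi$. Your explicit handling of the infeasible-$\tau$ edge case is a small additional nicety not spelled out in the paper.
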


\begin{proof}
    By the definition of $v(\tau)$, we have $v(0) = J_r^{\pi^*}$. With a slight abuse of notation, denote $J_{\mathrm{L}}(\pi,\lambda)=J_r^{\pi}+\lambda J_c^{\pi}$. By the strong duality stated in Lemma \ref{lem.duality}, we have the following for any $\pi\in\Pi$.
    \begin{equation}
        J_{\mathrm{L}}(\pi,\lambda^*)\leq \max_{\pi\in\Pi} J_{\mathrm{L}}(\pi,\lambda^*)\overset{Def}=J_D^{\lambda^*}\overset{\eqref{eq:duality}}=J_r^{\pi^*}=v(0)
    \end{equation}
    Thus, for any $\pi\in\{ \pi\in\Pi \,\vert\,J_c^{\pi} \geq \tau \}$,
    \begin{equation}
	\begin{aligned}
            v(0)-\tau\lambda^*&\geq J_{\mathrm{L}}(\pi,\lambda^*)-\tau\lambda^*\\
            &=J_r^{\pi}+\lambda^*(J_c^\pi-\tau) \geq J_r^{\pi}
	\end{aligned}
    \end{equation}
    Maximizing the right-hand side of this inequality over $\{ \pi\in\Pi \vert J_{c}^{\pi}\geq \tau \}$ yields
    \begin{equation}
        \label{eq.opt1}
	v(0)- \tau\lambda^* \geq v(\tau)
    \end{equation}
    This completes the proof of the lemma.
\end{proof}

We note that a similar result was shown in \cite[Lemma 15]{bai2023provably}. However, the setup of the stated paper is different from that of ours. Specifically, \cite{bai2023provably} considers a tabular setup with peak constraints. Note that Lemma \ref{lem:bridge} has no direct connection with the parameterized setup since its proof uses strong duality and the function, $v(\cdot)$, is defined via a constrained optimization over the entire policy set, $\Pi$, rather than the parameterized policy set. Interestingly, however, the relationship between $v(\tau)$ and $v(0)$ leads to the lemma stated below which turns out to be pivotal in establishing regret and constraint violation bounds in the parameterized setup.

\begin{lemma}\label{lem.constraint}
    Let Assumption \ref{ass_slater} hold. For any constant $C\geq 2\lambda^*$, if there exists a $\pi\in\Pi$ and $\zeta>0$ such that $J_r^{\pi^*}-J_r^{\pi}+C[-J_c^{\pi}]\leq \zeta$, then 
    \begin{equation}
        -J_c^{\pi}\leq 2\zeta/C
    \end{equation}
\end{lemma}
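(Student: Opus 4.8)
The plan is to derive a contradiction by assuming the constraint violation is large, and to leverage Lemma \ref{lem:bridge} to translate between the unparameterized value function $v(\cdot)$ and the bound on $\lambda^*$. First I would set $\tau \triangleq -[-J_c^{\pi}]_+ = \min\{J_c^{\pi}, 0\}$, so that by construction $J_c^{\pi} \geq \tau$; hence by definition of $v(\cdot)$ we have $J_r^{\pi} \leq v(\tau)$. The hypothesis $J_r^{\pi^*} - J_r^{\pi} + C[-J_c^{\pi}]_+ \leq \zeta$, combined with $v(0) = J_r^{\pi^*}$ (noted in the proof of Lemma \ref{lem:bridge}) and $J_r^\pi \leq v(\tau)$, then gives
\begin{equation*}
    v(0) - v(\tau) + C[-J_c^{\pi}]_+ \leq \zeta.
\end{equation*}

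Next I would invoke Lemma \ref{lem:bridge} with this particular $\tau$, which states $v(0) - \tau\lambda^* \geq v(\tau)$, i.e. $v(0) - v(\tau) \geq \tau \lambda^* = -[-J_c^{\pi}]_+ \lambda^*$. Substituting this lower bound on $v(0) - v(\tau)$ into the previous display yields
\begin{equation*}
    -[-J_c^{\pi}]_+\lambda^* + C[-J_c^{\pi}]_+ \leq \zeta, \quad\text{that is,}\quad (C - \lambda^*)[-J_c^{\pi}]_+ \leq \zeta.
\end{equation*}
Since $C \geq 2\lambda^*$ and $\lambda^* \geq 0$, we have $C - \lambda^* \geq C - C/2 = C/2$, so $[-J_c^{\pi}]_+ \leq 2\zeta/C$, which is the claimed bound.

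The only subtlety — and the step I would be most careful about — is the sign bookkeeping around $\tau$: Lemma \ref{lem:bridge} is stated for arbitrary real $\tau$, but I need the chosen $\tau \leq 0$ so that $\tau \lambda^* \leq 0$ makes the inequality $v(0) - v(\tau) \geq \tau\lambda^*$ informative in the right direction, and I need $J_c^{\pi} \geq \tau$ to hold for the feasibility-based bound $J_r^\pi \le v(\tau)$ to apply. Both hold automatically for $\tau = \min\{J_c^\pi, 0\}$, and when $J_c^\pi \geq 0$ the claim is trivial since $[-J_c^\pi]_+ = 0$. I do not expect any genuine obstacle here; the lemma is essentially a clean algebraic consequence of Lemma \ref{lem:bridge} and the boundedness of $\lambda^*$, and its real work (avoiding strong duality in the parameterized setting) has already been front-loaded into Lemmas \ref{lem.duality}--\ref{lem:bridge}.
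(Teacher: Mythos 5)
Your proposal is correct and follows essentially the same route as the paper: set $\tau=-[-J_c^{\pi}]_+$, use $J_r^{\pi}\leq v(\tau)$ and $v(0)=J_r^{\pi^*}$ together with Lemma \ref{lem:bridge} to get $(C-\lambda^*)[-J_c^{\pi}]_+\leq \zeta$, and then apply $C\geq 2\lambda^*$. Your sign bookkeeping is careful and, if anything, slightly cleaner than the paper's intermediate chain $J_r^{\pi}\leq J_r^{\pi^*}\leq v(\tau)$, but the argument is the same.
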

\begin{proof}
 Let $\tau = J^{\pi}_c$. Using the definition of $v(\tau)$, one can write,
	\begin{equation}\label{eq.opt2}
		J_r^{\pi}\leq v(\tau)
	\end{equation}
    Combining Eq. \eqref{eq.opt1} and \eqref{eq.opt2}, we obtain the following.
    \begin{equation}
        J_r^{\pi}-J_r^{\pi^*}\leq v(\tau)-v(0)\leq -\tau\lambda^*
    \end{equation}
	The condition in the Lemma leads to,
	\begin{equation}
	    (C - \lambda^*) (-\tau) = 
		{\tau} \lambda^*+C (-\tau)
        \leq 
		J_r^{\pi^*}-J_r^{\pi}+C [-J_c^{\pi}]\leq \zeta
	\end{equation}
    Finally, we have,
    \begin{equation}
        -\tau\leq \frac{\zeta}{C-\lambda^*}\leq\frac{2\zeta}{C}
    \end{equation}
	which completes the proof.
\end{proof}

\end{document}